\relax
\documentclass[letterpaper]{article} 
\usepackage{aaai22}  
\usepackage{times}  
\usepackage{helvet}  
\usepackage{courier}  
\usepackage[hyphens]{url}  
\usepackage{graphicx} 
\urlstyle{rm} 
\usepackage{natbib}  
\usepackage{caption} 
\DeclareCaptionStyle{ruled}{labelfont=normalfont,labelsep=colon,strut=off} 
\frenchspacing  
\setlength{\pdfpagewidth}{8.5in}  
\setlength{\pdfpageheight}{11in}  
%
\usepackage{algorithm}
\usepackage{algorithmic}

%
\usepackage{newfloat}
\usepackage{listings}
\lstset{%
	basicstyle={\footnotesize\ttfamily},
	numbers=left,numberstyle=\footnotesize,xleftmargin=2em,
	aboveskip=0pt,belowskip=0pt,%
	showstringspaces=false,tabsize=2,breaklines=true}
\floatstyle{ruled}
\newfloat{listing}{tb}{lst}{}
\floatname{listing}{Listing}

\usepackage{epstopdf,fancyhdr,amsmath,amsthm,xspace,enumerate,paralist, mathtools,tabularx,hhline,tabu,forest,array,stmaryrd}
\usepackage[capitalise]{cleveref}
\usepackage[export]{adjustbox}
\usepackage{pgf,tikz}
\usepackage{bm}
\usepackage{subcaption}
\usepackage{mathrsfs}
\usepackage{type1cm}
\usepackage{csquotes}
\usepackage{amssymb}
\usepackage{comment}
\usepackage{multirow}
\usepackage{verbatim}
\usepackage{enumitem}
\usepackage{upgreek}
\usepackage{natbib}
\usepackage{upgreek}
\usepackage{pdflscape}
\usepackage{thm-restate}
\usepackage{soul}
\usepackage{dsfont}
\usepackage{stmaryrd}
\usepackage{environ}
\usepackage{wrapfig}
\usepackage{float}

\tikzstyle{mybox} = [draw=gray, fill=gray!20, very thick,
rectangle, rounded corners, inner xsep=3pt, inner ysep=7pt]

\usepackage{pgfplots}
\DeclareUnicodeCharacter{2212}{−}
\usepgfplotslibrary{groupplots,dateplot}
\usetikzlibrary{patterns,shapes.arrows}
\pgfplotsset{compat=newest}

\usetikzlibrary{positioning,arrows,shapes,shapes.arrows,arrows.meta,trees,shapes.misc,shapes.geometric,decorations.pathreplacing,automata,patterns,fadings}
\tikzset{%
  >={Latex[width=1.5mm,length=1.8mm]},
  vertex/.style={draw,circle,inner sep=0mm,semithick,minimum width=4.7mm},
  bvertex/.style={draw,circle,inner sep=0mm,semithick,minimum width=5.2mm},
  point/.style = {circle, draw, inner sep=0.04cm,fill,node contents={}},
  uvertex/.style={outer sep=0},
  box/.style={rectangle,minimum size=1.01cm},
  bidir/.style={<->,dashed,semithick,line width=0.25mm},
  dir/.style={->, line width=0.25mm},
  regime/.style={shape=rectangle,fill=black,inner sep=0pt,outer sep=1mm,minimum size=3pt,draw},
  node distance=1cm,
  font=\scriptsize\sffamily,
}

\pgfdeclarefading{fading}
 {\tikz
    \fill[bottom color=transparent!0,top color=transparent!70] (0,0) rectangle (100bp,100bp);}

\definecolor{lightred}{RGB}{255, 150, 141}
\definecolor{lightblue}{RGB}{86, 193, 255}
\definecolor{lightgreen}{RGB}{115, 253, 134}

\definecolor{betterred}{RGB}{255, 10, 78}
\definecolor{betterblue}{RGB}{0, 162, 255}
\definecolor{bettergreen}{RGB}{22, 231, 207}

\definecolor{darkred}{RGB}{228,26,28}
\definecolor{darkblue}{RGB}{55,126,184}
\definecolor{darkgreen}{RGB}{77,175,74}
\definecolor{darkyellow}{RGB}{255,127,0}

\def\ci{\perp\!\!\!\perp}
\newcommand{\mli}[1]{\mathit{#1}}

\DeclarePairedDelimiter\ceil{\lceil}{\rceil}
\DeclarePairedDelimiter\floor{\lfloor}{\rfloor}

\Crefname{equation}{Eq.}{Eqs.}
\Crefname{figure}{Fig.}{Figs.}
\Crefname{tabular}{Tab.}{Tabs.}
\Crefname{theorem}{Thm.}{Thms.}
\Crefname{lemma}{Lem.}{Lems.}
\Crefname{proposition}{Prop.}{Props.}
\Crefname{definition}{Def.}{Defs.}
\Crefname{algorithm}{Alg.}{Algs.}
\Crefname{corollary}{Corol.}{Corol.}
\Crefname{section}{Sec.}{Sec.}

\newtheorem{lemma}{Lemma}

\theoremstyle{definition}
\newtheorem{definition}{Definition}


\newcommand{\angles}[2][]{#1\langle#2 #1\rangle}


\newcommand{\tuple}[2][]{\angles[#1]{#2}}

\makeatletter
\newcommand{\xdashleftrightarrow}[2][]{\ext@arrow 3359\leftrightarrowfill@@{#1}{#2}}
\def\rightarrowfill@@{\arrowfill@@\relax\relbar\rightarrow}
\def\leftarrowfill@@{\arrowfill@@\leftarrow\relbar\relax}
\def\leftrightarrowfill@@{\arrowfill@@\leftarrow\relbar\rightarrow}
\def\arrowfill@@#1#2#3#4{%
  $\m@th\thickmuskip0mu\medmuskip\thickmuskip\thinmuskip\thickmuskip
   \relax#4#1
   \xleaders\hbox{$#4#2$}\hfill
   #3$%
}
\makeatother

\newcommand{\I}{\mathds{1}}

\newcommand{\VV}{\bm{V}}

\newcommand{\XX}{\bm{X}}
\newcommand{\xx}{\bm{x}}

\newcommand{\D}{\Omega}

\newcommand{\G}{\mathcal{G}}

\newcommand{\Pa}{\mli{Pa}}
\newcommand{\pa}{\mli{pa}}
\newcommand{\PA}{\mli{PA}}

\newcommand{\An}{\mli{An}}

\newcommand{\ch}{\mli{ch}}

\newcommand{\doo}{\text{do}}

\newcommand{\interior}[1]{#1_{\mathrm{o}}}

\def\*#1{\boldsymbol{#1}}
\def\1#1{\mathcal{#1}}
\def\2#1{\mathscr{#1}}
\def\3#1{\mathbb{#1}}
\def\4#1{\mathds{#1}}
\def\5#1{\bar{\*#1}}

%
%
\pdfinfo{
/Title (Partial Counterfactual Identification from Observational and Experimental Data)
/Author (Junzhe Zhang, Tian Jin, Elias Bareinboim)
/TemplateVersion (2022.1)
}

\setcounter{secnumdepth}{0} 

%


\title{Partial Counterfactual Identification from Observational and Experimental Data}
\author {
    Junzhe Zhang,\textsuperscript{\rm 1}
    Jin Tian, \textsuperscript{\rm 2}
    Elias Bareinboim \textsuperscript{\rm 1}
}
\affiliations {
    \textsuperscript{\rm 1} Columbia University\\
    \textsuperscript{\rm 2} Iowa State University\\
    junzhez@cs.columbia.edu, jtian@iastate.edu, eb@cs.columbia.edu
}

\begin{document}

\maketitle

\begin{abstract}
    This paper investigates the problem of bounding counterfactual queries from an arbitrary collection of observational and experimental distributions and qualitative knowledge about the underlying data-generating model represented in the form of a causal diagram. 
    We show that all counterfactual distributions in an arbitrary structural causal model (SCM) could be generated by a canonical family of SCMs with the same causal diagram where unobserved (exogenous) variables are discrete with  a finite domain. 
    Utilizing the canonical SCMs, we translate the problem of bounding counterfactuals into that of polynomial programming  whose solution provides optimal bounds for the counterfactual query. Solving such polynomial programs is in general computationally expensive. We therefore 
    develop effective Monte Carlo algorithms to approximate the optimal bounds from an arbitrary combination of observational and experimental data. Our algorithms are validated extensively on synthetic and real-world datasets.
\end{abstract}
\section{Introduction}
This paper studies the problem of inferring counterfactual queries from a combination of observations, experiments, and qualitative assumptions about the phenomenon under investigation. The assumptions are represented in the form of a \emph{causal diagram} \citep{pearl:95}, which is a directed acyclic graph where arrows indicate the potential existence of functional relationships among corresponding variables; some variables are unobserved. This problem arises in diverse fields such as artificial intelligence, statistics, cognitive science, economics, and the health and social sciences. For example, when investigating the gender discrimination in college admission, one may ask ``what would the admission outcome be for a female applicant had she been a male?'' Such a counterfactual query contains conflicting information: in the real world the applicant is female, in the hypothetical world she was not. Therefore, it is not immediately clear how to design effective experimental procedures for evaluating counterfactuals, or how to compute them from observational data.

The problem of identifying counterfactual distributions from the combination of data and a causal diagram has been studied in the causal inference literature. First, there exists a complete proof system for reasoning about counterfactual queries  \citep{halpern:98}. While such a system, in principle, is sufficient in evaluating any identifiable counterfactual expression, it lacks a proof guideline that  determines the feasibility of such evaluation efficiently. There are algorithms to determine whether a counterfactual distribution is inferrable from all possible controlled experiments \citep{shpitser:pea07}, or a special type of counterfactual distributions, called path-specific effects, from observational \citep{shpitser2018identification} and experimental data \citep{avin:etal05}. Finally, there exist an algorithm that decides whether any nested counterfactual is identifiable an arbitrary combination of observational and experimental distributions \citep{correa2021nested}.

In practice, however, the combination of quantitative knowledge and observed data does not always permit one to uniquely determine the target counterfactual query. In such cases, the counterfactual query is said to be \emph{non-identifiable}. \emph{Partial identification} methods concern with deriving informative bounds over the target counterfactual probability in non-identifiable settings. Several algorithms have been developed to bound counterfactual probabilities from the combination of observational and experimental data \citep{manski:90,robins:89b,balke:pea94b,balke:pea97,evans2012graphical,richardson2014nonparametric,kallus2018confounding,kallus2020confounding,finkelstein2020deriving,kilbertus2020class,zhang2021bounding}.  

In this work, we build on the approach introduced by Balke \& Pearl in \citep{balke:pea94b}, which involves direct discretization of unobserved domains, also referred to as the canonical partitioning or the principal stratification \citep{frangakis:rub02,pearl:11-r382}. Consider the causal diagram in \Cref{fig1a}, where $X, Y, Z$ are binary variables in $\{0, 1\}$; $U$ is an unobserved variable taking values in an arbitrary continuous domain. \citep{balke:pea94b} showed that domains of $U$ could be discretized into $16$ equivalent classes without changing the original counterfactual distributions and the graphical structure in \Cref{fig1a}. For instance, suppose that values of $U$ are drawn from an arbitrary distribution $P^*(u)$ over a continuous domain. It has been shown that the observational distribution $P(x, y, z)$ could be reproduced by a generative model of the form $P(x, y, z) = \sum_{u} P(x|u,z)P(y|x, u)P(u)P(z)$,  where $P(u)$ is a discrete distribution over a finite domain $\{1, \dots, 16\}$.

Using the finite-state representation of unobserved variables, \citep{balke:pea97} derived tight bounds on treatment effects under a set of constraints called \emph{instrumental variables} (e.g., \Cref{fig1a}). \citep{chickering:pea97,imbens1997bayesian} applied the parsimony of finite-state representation in a Bayesian framework, to obtain credible intervals for the posterior distribution of causal effects in noncompliance settings. 
Despite the optimality guarantees in their treatments, these bounds were only derived for specific settings. 
A systematic strategy for partial identification in an arbitrary causal diagram is still missing. 
There are significant challenges in bounding any counterfactual query in an arbitrary causal diagram given an arbitrary collection of observational and experimental data.

Our goal in this paper is to overcome these challenges. 
We show that when inferring about counterfactual distributions (over finite observed variables) in an arbitrary causal diagram, one could restrict domains of unobserved variables to a finite space without loss of generality. This result allows us to develop novel partial identification algorithms to bound unknown counterfactual probabilities from an arbitrary combination of observational and experimental data. In some way, this paper can be seen as closing a long-standing open problem introduced by \citep{balke:pea94b}, where they solve a special bounding instance in the case of instrumental variables. More specifically, our contributions are as follows. (1) We introduce a special family of \emph{discrete structural causal models}, 
and show that it could represent all categorical counterfactual distributions (with finite support) in an arbitrary causal diagram. (2) Using this result, we translate the partial identification task into an equivalent polynomial program. Solving such a program leads to bounds over target counterfactual probabilities that are provably optimal. (3) We develop an effective Monte Carlo algorithm to approximate optimal bounds from a finite number of observational and experimental data. Finally, our algorithms are validated extensively on synthetic and real-world datasets. 

\subsection{Preliminaries}\label{sec:prelim}
We introduce in this section some basic notations and definitions that will be used throughout the paper. We use capital letters to denote variables ($X$), small letters for their values ($x$) and $\D_X$ for their domains. For an arbitrary set $\*X$, let $|\*X|$ be its cardinality. The probability distribution over variables $\*X$ is denoted by $P(\*X)$. For convenience, we consistently use $P(\*x)$ as a shorthand for the probability $P(\*X = \*x)$. Finally, the indicator function $\I_{\*X = \*x}$ returns $1$ if an event $\*X = \*x$ holds; otherwise, $\I_{\*X = \*x}$ is equal to $0$.


The basic semantical framework of our analysis rests on \textit{structural causal models} (SCMs) \citep[Ch.~7]{pearl:2k}. An SCM $M$ is a tuple $\tuple{\*V, \*U, \2F, P}$ where $\*V$ is a set of endogenous variables and $\*U$ is a set of exogenous variables. $\2F$ is a set of functions where each $f_V \in \2F$ decides values of an endogenous variable $V \in \*V$ taking as argument a combination of other variables in the system. That is, $v \leftarrow f_{V}(\pa_V, u_V), \PA_V \subseteq \*V, U_V \subseteq \*U$. Exogenous variables $U \in \*U$ are mutually independent, values of which are drawn from the exogenous distribution $P(\*U)$. Naturally, $M$ induces a joint distribution $P(\*V)$ over endogenous variables $\*V$, called the \emph{observational distribution}. 

Each SCM $M$ is also associated with a causal diagram $\G$ (e.g., \Cref{fig1}), which is a directed acyclic graph (DAG) where solid nodes represent endogenous variables $\*V$, empty nodes represent exogenous variables $\*U$, and arrows represent the arguments $\PA_V, U_V$ of each structural function $f_{V}$. We will use graph-theoretic family abbreviations for graphical relationships such as parents and children. For example, the set of parents of $\*X$ in $\G$ is denoted by $\pa(\*X)_{\G} = \cup_{X \in \*X} \pa(X)_{\G}$; $\ch$ are similarly defined. 
The subscript $\G$ will be omitted when it is obvious from the context.

An intervention on an arbitrary subset $\XX \subseteq \VV$, denoted by $\doo(\xx)$, is an operation where values of $\XX$ are set to constants $\xx$, regardless of how they are ordinarily determined. For an SCM $M$, let $M_{\*x}$ denote a submodel of $M$ induced by intervention $\doo(\xx)$. For any subset $\*Y \subseteq \*V$, the \emph{potential response} $\*Y_{\*x}(\*u)$ is defined as the solution of $\*Y$ in the submodel $M_{\*x}$ given $\*U = \*u$. Drawing values of exogenous variables $\*U$ following the probability distribution $P$ induces a \emph{counterfactual variable} $\*Y_{\*x}$. Specifically, the event $\*Y_{\*x} = \*y$ (for short, $\*y_{\*x}$) can be read as ``$\*Y$ would be $\*y$ had $\*X$ been $\*x$''. 
For any subsets $\*Y, \dots, \*Z$, $\*X, \dots, \*W \subseteq \*V$, the distribution over counterfactuals $\*Y_{\*x}, \dots, \*Z_{\*w}$ is defined as:
\begin{align} \label{eq:cfd}
  P \left (\*y_{\*x}, \dots, \*z_{\*w} \right) = \int_{\D_{\*U}} \I_{\*Y_{\*x}(\*u) = \*y, \dots, \*Z_{\*w}(\*u) = \*z} dP(\*u).
\end{align}
Distributions of the form $P(\*Y_{\*x})$ are called  \emph{interventional distributions}; when $\*X = \emptyset$, $P(\*Y)$ coincides with the \emph{observational distribution}. Throughout this paper, we assume that endogenous variables $\*V$ are discrete and finite; while exogenous variables $\*U$ could take any (continuous) value. The counterfactual distribution $P \left (\*Y_{\*x}, \dots, \*Z_{\*w} \right)$ defined above is thus a categorical distribution. For a more detailed survey on SCMs, we refer readers to \citep[Ch.~7]{pearl:2k}.
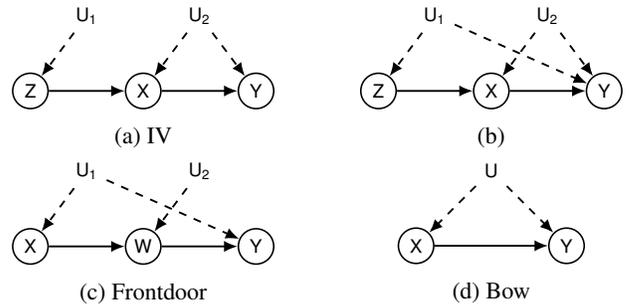
\begin{figure}[t]
  \centering
  \null
  \begin{subfigure}{0.45\linewidth}\centering
    \begin{tikzpicture}
      \node[vertex] (Z) at (0, 0) {Z};
      \node[vertex] (X) at (1.5, 0) {X};
      \node[vertex] (Y) at (3, 0) {Y};
      \node[uvertex] (U1) at (0.75, 1) {U\textsubscript{1}};
      \node[uvertex] (U2) at (2.25, 1) {U\textsubscript{2}};
      
      \draw[dir] (X) -- (Y);
      \draw[dir] (Z) -- (X);
      \draw[dir, dashed] (U1) -- (Z);
      \draw[dir, dashed] (U2) -- (X);
      \draw[dir, dashed] (U2) -- (Y);
    \end{tikzpicture}
  \caption{IV}
  \label{fig1a}
  \end{subfigure}\hfill
  \begin{subfigure}{0.45\linewidth}\centering
      \begin{tikzpicture}
          \node[vertex] (Z) at (0, 0) {Z};
          \node[vertex] (X) at (1.5, 0) {X};
          \node[vertex] (Y) at (3, 0) {Y};
          \node[uvertex] (U1) at (0.75, 1) {U\textsubscript{1}};
          \node[uvertex] (U2) at (2.25, 1) {U\textsubscript{2}};
          
          \draw[dir] (X) -- (Y);
          \draw[dir] (Z) -- (X);
          \draw[dir, dashed] (U1) -- (Z);
          \draw[dir, dashed] (U1) -- (Y);
          \draw[dir, dashed] (U2) -- (X);
          \draw[dir, dashed] (U2) -- (Y);
      \end{tikzpicture}
    \caption{}
    \label{fig1b}
  \end{subfigure}\hfill
  \begin{subfigure}{0.45\linewidth}\centering
    \begin{tikzpicture}
      \node[vertex] (X) at (0, 0) {X};
      \node[vertex] (W) at (1.5, 0) {W};
      \node[vertex] (Y) at (3, 0) {Y};
      \node[uvertex] (U1) at (0.75, 1) {U\textsubscript{1}};
      \node[uvertex] (U2) at (2.25, 1) {U\textsubscript{2}};
      
      \draw[dir] (X) -- (W);
      \draw[dir] (W) -- (Y);
      \draw[dir, dashed] (U1) -- (X);
      \draw[dir, dashed] (U1) -- (Y);
      \draw[dir, dashed] (U2) -- (W);
  \end{tikzpicture}
  \caption{Frontdoor}
  \label{fig1c}
\end{subfigure}\hfill
\begin{subfigure}{0.45\linewidth}\centering
\begin{tikzpicture}
  \node[vertex] (X) at (0, 0) {X};
  \node[vertex] (Y) at (2, 0) {Y};
  \node[uvertex] (U) at (1, 1) {U};
  
  \draw[dir] (X) -- (Y);
  \draw[dir, dashed] (U) to (X);
  \draw[dir, dashed] (U) to (Y);
\end{tikzpicture}
\caption{Bow}
\label{fig1d}
\end{subfigure}\null
  \caption{Causal diagrams containing a treatment $X$, an outcome $Y$, an ancestor $Z$, a mediator $W$, and unobserved $U$s.}
  \label{fig1}
\end{figure}
\section{Partial Counterfactual Identification}
We introduce the task of partial identification of a counterfactual probability from a combination of observational and interventional distributions, which  generalizes the previous partial identifiability settings that assume observational data are given \citep{balke:pea97,imbens1997bayesian}. Let $\3Z = \{\*z_i\}_{i = 1}^m$ be a finite collection of realizations $\*z_i$ for sets of variables $\*Z_i \subseteq \*V$. We assume data are available from all of the interventional distributions in $\left \{P(\*V_{\*z})\mid \*z \in \3Z \right\}$. Note that $\*Z = \emptyset$ corresponds to the observational distribution $P(\*V)$. Our goal is to find a bound $[l, r]$ for any counterfactual probability $P \left (\*y_{\*x}, \dots, \*z_{\*w} \right)$ from the collection $\left \{P(\*V_{\*z})\mid \*z \in \3Z \right\}$ and the causal diagram $\G$. 

Formally, let $\2M(\G)$ be the set of all SCMs associated with $\G$, i.e., $\2M(\G) =\left \{\forall M \mid \G_M =\G \right \}$\footnote{We will use the subscript $M$ to represent the restriction to an SCM $M$. Therefore, $\G_M$ represents the causal diagram associated with $M$; so does counterfactual distributions $P_M \left (\*y_{\*x}, \dots, \*z_{\*w} \right)$.}. The bound $[l, r]$ is obtainable by solving the following optimization problem:
\begin{equation}
  \begin{aligned}
    \underset{M \in \2M(\G)}{\min / \max} \quad &P_M \left (\*y_{\*x}, \dots, \*z_{\*w} \right)\\
    \textrm{s.t.} \quad & P_M(\*v_{\*z}) = P(\*v_{\*z}) \;\; \forall \*v, \forall \*z \in \3Z
  \end{aligned} \label{eq:opt1}
\end{equation}
where $P_M \left (\*y_{\*x}, \dots, \*z_{\*w} \right)$ and $P_M(\*v_{\*z})$ are given in the form of \Cref{eq:cfd}. The lower $l$ and upper bound $r$ are minimum and maximum of the above equation respectively. By the formulation of \Cref{eq:opt1}, $[l, r]$ must be the tight bound containing all possible values of the target counterfactual $P \left (\*y_{\*x}, \dots, \*z_{\*w} \right)$.

Since we do not have access to the parametric forms of the underlying structural functions $f_V$ nor the exogenous distribution $P(\*u)$, solving the optimization problem in \Cref{eq:opt1} appears theoretically challenging. 
It is not clear how the existing optimization procedures can be used. 
Next we show the optimization problem in \Cref{eq:opt1} can be reduced into a polynomial program by constructing an ``canonical'' SCM that is equivalent to the original SCM in representing the objective $P \left (\*y_{\*x}, \dots, \*z_{\*w} \right)$ and all constraints $ P(\*V_{\*z}), \forall \*z \in \3Z$.

\subsection{Canonical Structural Causal Models}
Our construction rests on the parametric family of discrete SCMs where values of each exogenous variable are drawn from a discrete distribution over a finite set of states. 
\begin{definition}\label{def:d-scm}
  An SCM $M = \langle\*V, \*U, \2F, P \rangle$ is said to be a discrete SCM if
  \begin{enumerate}[itemsep=0pt,topsep=0pt,parsep=0pt]
    \item For every exogenous $U \in \*U$, its values $u$ are contained in a discrete domain $\D_U$;
    \item For every endogenous $V \in \*V$, its values $v$ are given by a function $v \gets f_{V}(\pa_V, u_V)$ where for any $\pa_V, u_V$, $f_{V}(\pa_V, u_V)$ is contained in a finite domain $\D_V$.
  \end{enumerate}
\end{definition}
For endogenous variables $\*V$, let $\*P^*$ denote the collection of all possible counterfactual distributions over $\*V$, i.e.,
\begin{align}
  \*P^* = \left \{P \left (\*Y_{\*x}, \dots, \*Z_{\*w} \right) \mid \forall \*Y, \dots, \*Z, \*X, \dots, \*W \right\}. \label{eq:ctf2}
\end{align}
Recall that $\2M(\G)$ is the set of all SCMs compatible with a causal diagram $\G$. Counterfactual distributions in $\G$ are defined as $\left \{\*P^*_M: \forall M \in \2M(\G) \right \}$. We will next show that discrete SCMs are indeed ``canonical'', i.e., they could generates \emph{all counterfactual distributions} in any causal diagram.

Our analysis utilizes a special type of clustering of endogenous variables in the causal diagram developed by \citep{tian:pea02}, which we call \emph{confounded components}. 
\begin{definition}\label{def:c-component}
  For a causal diagram $\G$, let $U \in \*U$ be an arbitrary exogenous variable. A set of endogenous variables $\*C(U) \subseteq \*V$ (w.r.t. $U$) is a c-component if for every $V \in \*C(U)$, there exists a sequence $\{U_1, \dots, U_n \} \subseteq \*U$ such that:
  \begin{enumerate}[itemsep=0pt,topsep=0pt,parsep=0pt]
    \item $U_1 = U$ and $U_n \in U_V$;
    \item for every $i = 1, \dots, n-1$, $U_i$ and $U_{i+1}$ have a common child node, i.e., $\ch(U_i)\cap \ch(U_{i+1}) \neq \emptyset$.
  \end{enumerate}
\end{definition}
A c-component $\*C(U)$ in $\G$ is maximal if there exists no other c-component that strictly contains $\*C(U)$. For convenience, we will consistently use $\*C(U)$ to denote the maximal c-component w.r.t. every exogenous $U \in \*U$. 
For instance, \Cref{fig1a} contains two c-components $\*C(U_1) = \{Z\}$ and $\*C(U_2) = \{X, Y\}$; while exogenous variables $U_1, U_2$ in \Cref{fig1b} share the same c-component $\*C(U_1) = \*C(U_2) = \{X, Y, Z\}$ since they have a common child node $Y$.
\begin{restatable}{theorem}{thmcfinite}\label{thm:cfinite}
  For a DAG $\G$, consider following conditions\footnote{For every $V \in \*V$, we denote by $\D_{\PA_V} \mapsto \D_V$ the set of all possible functions mapping from domains $\D_{\PA_V}$ to $\D_V$.}: 
  \begin{enumerate}[itemsep=0pt,topsep=0pt,parsep=0pt]
    \item $\2M(\G)$ is the set of all SCMs compatible with $\G$.
    \item $\2N(\G)$ is the set of all discrete SCMs compatible with $\G$ such that for every exogenous $U \in \*U$, 
    \begin{align}
      \left|\D_U \right| = \prod_{V \in \*C(U)} \left |\D_{\PA_V} \mapsto \D_V \right|, \label{eq:cardinality}
    \end{align} 
    i.e., the number of functions mapping from domains of $\PA_V$ to $V$ for every endogenous $V \in \*C(U)$. 
  \end{enumerate} 
  Then, $\2M(\G), \2N(\G)$ are counterfactually equivalent, i.e., 
  \begin{align}
      \left \{\*P^*_M: \forall M \in \2M(\G) \right \} = \left \{\*P^*_N: \forall N \in \2N(\G) \right \}. \label{eq:ctf_eq}
  \end{align}
\end{restatable}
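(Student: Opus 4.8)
The plan is to prove the two inclusions in \Cref{eq:ctf_eq} separately. The inclusion $\supseteq$ is immediate, since every $N \in \2N(\G)$ is in particular an SCM compatible with $\G$, so $\2N(\G) \subseteq \2M(\G)$ and $\{\*P^*_N : N \in \2N(\G)\} \subseteq \{\*P^*_M : M \in \2M(\G)\}$. The content is in the reverse inclusion: given an arbitrary $M = \tuple{\*V, \*U, \2F, P} \in \2M(\G)$, I will build a discrete $N\in\2N(\G)$ with $\*P^*_N = \*P^*_M$.

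The key reduction is to pass from structural functions to \emph{response-function variables}. For $V\in\*V$ and a value $\*u$ of $\*U$, let $R_V(\*u)$ denote the function $\pa_V\mapsto f_V(\pa_V,u_V)$, an element of $\D_{\PA_V}\mapsto\D_V$ depending only on $u_V$; drawing $\*U\sim P$ turns $(R_V)_{V\in\*V}$ into a random tuple. Because $\G$ is acyclic, for every intervention $\doo(\*x)$ and every $V$ the potential response $V_{\*x}(\*u)$ is obtained by solving the equations of $M_{\*x}$ and is therefore a fixed deterministic function of the tuple $(R_{V'}(\*u))_{V'\in\*V}$ — a function that depends only on $\G$ and $\*x$, not on $M$. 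Hence the joint law of $(R_V)_{V\in\*V}$ determines the joint law of all potential responses, hence every counterfactual probability of the form \Cref{eq:cfd}, and so all of $\*P^*$; conversely these laws can be read back off from members of $\*P^*$ (intervene on all of $\*V\setminus\{V\}$ so as to pin the parents of $V$ to each configuration). Thus $\*P^*_M=\*P^*_N$ holds \emph{iff} $M$ and $N$ induce the same distribution over $(R_V)_{V\in\*V}$, and it is enough to reproduce the law of $(R_V)_{V\in\*V}$ with some $N\in\2N(\G)$.

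I would then decompose this law along confounded components. A common child of two exogenous variables puts them in the same maximal c-component, so the sets $\*U_{\*C}:=\{U\in\*U:\*C(U)=\*C\}$ partition $\*U$, they induce a partition of $\*V$ into blocks $\*C$ with $U_V\subseteq\*U_{\*C}$ whenever $V\in\*C$, and — since the exogenous variables are mutually independent — the sub-tuples $(R_V)_{V\in\*C}$ for distinct blocks $\*C$ are independent. It therefore suffices to reproduce, for each $\*C$ separately, the law $Q_{\*C}$ of $(R_V)_{V\in\*C}$ using only the variables $\*U_{\*C}$, each now carrying the finite domain $\D_U=\prod_{V\in\*C}(\D_{\PA_V}\mapsto\D_V)$ — exactly the set of response-function tuples over $\*C$, which is \Cref{eq:cardinality}. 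When $\*C$ has a single exogenous variable $U$ (necessarily a parent of every $V\in\*C$), one lets $U$ carry such a tuple, sets $f^N_V(\pa_V,u):=(V\text{-th component of }u)(\pa_V)$, and takes $P_N(U):=Q_{\*C}$, so the induced response-function law is $Q_{\*C}$ by construction. For a general $\*C$ one keeps the idea — each $U\in\*U_{\*C}$ carries a candidate tuple, each $f^N_V$ aggregates the candidates of $U_V$ into a response function for $V$ — but must now choose the aggregation rule together with the product distribution $\prod_{U\in\*U_{\*C}}P_N(U)$ so that the induced response-function law equals $Q_{\*C}$. Finally one checks $\G_N=\G$: domains were only enlarged/relabelled and each $f^N_V$ still takes exactly the arguments $\pa_V,u_V$, so $N\in\2N(\G)$.

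The main obstacle is this last construction. When a block $\*C$ contains several exogenous variables with only partially overlapping neighbourhoods, the naive rule ``$f^N_V$ reads one fixed coordinate'' over-imposes independencies among the $R_V$'s, and the conditional randomness needed to realize $Q_{\*C}$ must be routed through the correct variables; I expect this to require an induction on the connected bipartite structure between $\*C$ and $\*U_{\*C}$. What makes it feasible is that $\prod_{V\in\*C}|\D_{\PA_V}\mapsto\D_V|$ is precisely the number of distinct response-function tuples over $\*C$, so no exogenous variable need distinguish more states than that, together with the fact that the probabilities $P_N(\*U)$ may take arbitrary real values (absorbing any continuity in $M$'s exogenous distribution) — together these leave enough room. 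A subtler point to get right is the ``iff'' between equality of $\*P^*$ and equality of the response-function law: it must be verified for the full $\*P^*$, i.e., nested and cross-world counterfactuals, not merely interventional distributions.
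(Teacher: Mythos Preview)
Your reduction to response-function variables $(R_V)_{V\in\*V}$ and the c-component factorization are correct and essentially coincide with the paper's canonical partitioning (\Cref{lem:cp,lem:ctf_cp1,lem:ctf_cp2}): your $R_V$ is exactly the label of the equivalence class $\1U_V^{(i)}$ containing $u_V$. The easy inclusion and the single-exogenous case are fine. Also, the ``iff'' you worry about at the end is not needed for the theorem; only the forward direction (same response-function law $\Rightarrow$ same $\*P^*$) is used, and that one is immediate.

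The genuine gap is the step you yourself flag as ``the main obstacle'': a c-component $\*C$ served by several exogenous variables $U_1,\dots,U_m$. Your plan is to \emph{redesign} the functions $f^N_V$ (``aggregation rules'') and then somehow choose product weights, handled by ``an induction on the connected bipartite structure''. That is the wrong lever. The difficulty is not combinatorial but convex-geometric, and the paper avoids redesigning anything: it keeps the structural functions of $M$ untouched and only replaces each $P(U_j)$ by a finitely supported distribution, one $U_j$ at a time. Concretely (\Cref{lem:ctf_cp3}), write the vector $\vec P=\bigl(P(\bigcap_{V\in\*C}\1U_V^{(i)})\bigr)_{\*i}$, which lives in a $(d-1)$-simplex with $d=\prod_{V\in\*C}|\D_{\PA_V}\mapsto\D_V|$. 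By Fubini, $\vec P=\int \vec P_{u_1}\,dP_1(u_1)$, so $\vec P$ lies in the convex hull of $\{\vec P_{u_1}:u_1\in\D_1\}$; by Carath\'eodory it is a convex combination of at most $d$ of these vectors, which yields a discrete $P_1^*$ on $\le d$ atoms reproducing $\vec P$ exactly. Repeat for $U_2,\dots,U_m$. This gives the bound \Cref{eq:cardinality} directly and never touches the bipartite structure beyond the independence already used in the c-component factorization.

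Your alternative route is not obviously doomed---the target law $Q_{\*C}$ does inherit conditional-independence structure from the bipartite graph (e.g.\ $R_V\perp R_{V'}$ whenever $U_V\cap U_{V'}=\emptyset$), so an inductive construction might be possible---but you have not supplied it, and the Carath\'eodory argument is both shorter and structure-agnostic. If you want to complete your proposal, replace the bipartite induction by the one-variable-at-a-time Carath\'eodory step above; everything else you wrote then goes through.
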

\Cref{thm:cfinite} establishes the expressive power of discrete SCMs in representing counterfactual distributions in a causal diagram $\G$. Henceforth, we will refer to $\2N(\G)$ in \Cref{thm:cfinite} as the family of \emph{canonical SCMs} for $\G$. As an example, consider a causal diagram $\G$ in \Cref{fig1b} where $X, Y, Z$ are binary variables in $\{0,1\}$. Since $U_1, U_2$ share the same c-component $\{X, Y, Z\}$, \Cref{eq:cardinality} implies that they also share the same cardinality $d = |\D_Z| \times |\D_Z \mapsto \D_X| \times |\D_X \mapsto \D_Y| = 32$ in the canonical family $\2N(\G)$. It follows from \Cref{thm:cfinite} that the counterfactual distribution $P(X_{z'}, Y_{x'})$ in $\G$ could be generated by a SCM in $\2N(\G)$ and be written as follows:
\begin{align*}
  &P(x_{z'}, y_{x'}) = \sum_{u_1, u_2 = 1}^d \I_{f_X(z', u_2)= x, f_Y(x', u_1, u_2) = y} P(u_1)P(u_2).
\end{align*}
More generally, \Cref{thm:cfinite} implies that counterfactual probabilities $P \left (\*Y_{\*x}, \dots, \*Z_{\*w} \right)$ in any SCM $M$ could be generically generated as follows: for $d_U = \prod_{V \in \*C(U)} \left |\D_{\Pa_V} \mapsto \D_V \right|$,
\begin{equation}
\begin{aligned}
  &P \left (\*y_{\*x}, \dots, \*z_{\*w} \right) \\
  &= \sum_{\*u}\I_{\*Y_{\*x}(\*u) = \*y, \dots, \*Z_{\*w}(\*u) = \*z} \prod_{U \in \*U} P(u). 
\end{aligned}\label{eq:sec3.4}
\end{equation}
Among above quantities, $P(U)$ is a discrete distribution over a finite domain $\{1, \dots, d_U\}$. Counterfactual variables $\*Y_{\*x}(\*u) = \left \{Y_{\*x}(\*u) \mid \forall Y \in \*Y \right \}$ are recursively defined as:
\begin{align}
  Y_{\*x}(\*u) = 
  \begin{dcases}
    \*x_Y &\mbox{if }Y \in \*X \\
    f_Y\left ( \left(\PA_{Y}\right)_{\*x}(\*u), u_Y \right) &\mbox{otherwise}
  \end{dcases}
  \label{eq:sec3.3}
\end{align}
where $\*x_Y$ is the value assigned to variable $Y$ in constants $\*x$. 


\paragraph{Related work}The discretization procedure in \citep{balke:pea94b} was originally designed for the ``IV'' diagram in \Cref{fig1a}, but it can not be immediately extended to other causal diagrams without loss of generality (see Appendix E for a detailed example). More recently, \citep{rosset2018universal} applied a classic result of Carath\'eodory theorem in convex geometry \citep{caratheodory1911variabilitatsbereich} and showed that the observational distribution in any causal diagram could be represented using finitely many latent states. \citep{evans2018margins} proved a special case of \Cref{thm:cfinite} for interventional distributions in a restricted class of causal diagrams satisfying a running intersection property. 

\Cref{thm:cfinite} generalizes existing results in several important ways. First, we prove that \emph{all} counterfactual distributions could be generated using discrete exogenous variables with finite domains, which subsume both observational and interventional distributions. Second, \Cref{thm:cfinite} is applicable to \emph{any} causal diagram, thus not relying on additional graphical conditions, e.g., IV constraints \citep{balke:pea94b}. More specifically, we introduce a general, canonical partitioning over exogenous domains in an arbitrary SCM. Any counterfactual distribution in this SCM could be written as a function of joint probabilities assigned to intersections of canonical partitions.
This allows us to discretize exogenous domains while maintaining all counterfactual distributions and structures of the causal diagram. We refer readers to Appendix A for more details about the proof for \Cref{thm:cfinite}.

\subsection{Bounding Counterfactual Distributions}
The expressive power of canonical SCMs in \Cref{thm:cfinite} suggests a natural algorithm for the partial identification of counterfactual distributions. Recall that the canonical family $\2N(\G)$ for a causal diagram $\G$ consists of discrete SCMs with finite exogenous states. We derive a bound $[l, r]$ over a counterfactual probability $P \left (\*y_{\*x}, \dots, \*z_{\*w} \right)$ from an arbitrary collection of interventional distributions $\left \{P(\*V_{\*z}) \mid \*z \in \3Z\right\}$ by solving the following optimization problem:
\begin{equation}
  \begin{aligned}
    \underset{N \in \2N(\G)}{\min / \max} \quad &P_N \left (\*y_{\*x}, \dots, \*z_{\*w} \right)\\
    \textrm{s.t.} \quad & P_N(\*v_{\*z}) = P(\*v_{\*z}) \;\; \forall \*v, \forall \*z \in \3Z
  \end{aligned} \label{eq:opt2}
\end{equation}
where $P_N \left (\*y_{\*x}, \dots, \*z_{\*w} \right)$ and $P_N(\*v_{\*z})$ are given in the form of \Cref{eq:sec3.4}. The optimization problem in \Cref{eq:opt2} is generally reducible to a polynomial program. To witness, for every $U \in \*U$, let parameters $\theta_u$ represent discrete probabilities $P(U = u)$. For every $V \in \*V$, we represent the output of function $f_V(\pa_V, u_V)$ given input $\pa_V, u_V$ using an indicator vector $\mu_V^{(\pa_V, u_V)} = \left ( \mu_v^{(\pa_V, u_V)} \mid \forall v \in \D_V \right)$ such that 
\begin{align*}
  &\mu_v^{(\pa_V, u_V)}\in \{0, 1\}, &\sum_{v \in \D_V} \mu_v^{(\pa_V, u_V)} = 1.
\end{align*} 
Doing so allows us to write any counterfactual probability $P \left (\*y_{\*x}, \dots, \*z_{\*w} \right)$ in \Cref{eq:sec3.4} as a polynomial function of parameters $\mu_v^{(\pa_V, u_V)}$ and $\theta_u$. More specifically, the indicator function $\I_{\*Y_{\*x}(\*u) = \*y}$ is equal to a product $\prod_{Y \in \*Y} \I_{Y_{\*x}(\*u) = y}$. For every $Y \in \*Y$, $\I_{Y_{\*x}(\*u) = y}$ is recursively given by: 
\begin{align*}
  \I_{Y_{\*x}(\*u) = y} = 
    \begin{dcases}
      \I_{y = \*x_Y} &\mbox{if }Y \in \*X \\
      \sum_{\pa_Y} \mu_y^{\left (\pa_Y, u_Y \right)} \I_{ \left( \PA_{Y} \right )_{\*x}(\*u) = \pa_Y }  &\mbox{otherwise}
    \end{dcases}
\end{align*} 
For instance, consider again the causal diagram $\G$ in \Cref{fig1b}. The counterfactual distribution $P(X_{z'}, Y_{x'})$ and the observational distribution $P(X, Y, Z)$ of any discrete SCM in $\2N(\G)$ and be written as the following polynomial functions:
\begin{align}
  &P(x_{z'}, y_{x'}) = \sum_{u_1, u_2 = 1}^d \mu_x^{(z', u_2)} \mu_Y^{(x', u_1, u_2)} \theta_{u_1}\theta_{u_2}, \label{eq:sec3.1}\\
  &P(x, y, z) = \sum_{u_1, u_2 = 1}^d  \mu_z^{(u_1) } \mu_x^{(z, u_2)} \mu_y^{(x, u_1, u_2)} \theta_{u_1}\theta_{u_2}, \label{eq:sec3.2}
\end{align}
where $\mu_z^{(u_1)}, \mu_x^{(z', u_2)}, \mu_y^{(x', u_1, u_2)}$ are parameters taking values in $\{0, 1\}$; $\theta_{u_i}$, $i = 1, 2$, are probabilities of the discrete distribution $P(u_i)$ over the finite domain $\{1, \dots, d\}$. One could derive a bound over $P(x_{z'}, y_{x'})$ from $P(X, Y, Z)$ by solving polynomial programs which optimize the objective \Cref{eq:sec3.1} over parameters $\theta_{u_1}, \theta_{u_2}, \mu_z^{(u_1)}, \mu_x^{(z, u_2)}, \mu_y^{(x, u_1, u_2)}$, subject to the constraints in \Cref{eq:sec3.2} for all entries $x, y, z$. We refer readers to Appendix D for additional examples demonstrating how to reduce the original partial identification problem to an equivalent polynomial program.

It follows immediately from \Cref{thm:cfinite} that the solution $[l, r]$ of the optimization program in \Cref{eq:opt2} is guaranteed to be a valid, tight bound over the target counterfactual probability. 
\begin{restatable}{theorem}{thmvalid}\label{thm:valid}
Given a DAG $\G$ and $\left \{P(\*V_{\*z})\mid \*z \in \3Z \right\}$, the solution $[l, r]$ of the polynomial program \Cref{eq:opt2} is a tight bound over the counterfactual probability $P\left (\*y_{\*x}, \dots, \*z_{\*w} \right)$.
\end{restatable}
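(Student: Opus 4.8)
The plan is to reduce \Cref{thm:valid} to \Cref{thm:cfinite} together with the observation that the $(\mu,\theta)$-parametrized polynomial program is an exact reformulation of the optimization in \Cref{eq:opt2}. Let $[l_{\2M}, r_{\2M}]$ denote the interval of extremal values of \Cref{eq:opt1}, and recall that $[l,r]$ is the interval of extremal values of \Cref{eq:opt2}. By the formulation of \Cref{eq:opt1}, $[l_{\2M}, r_{\2M}]$ is by construction the tightest interval containing every value of $P(\*y_{\*x},\dots,\*z_{\*w})$ compatible with $\G$ and $\{P(\*V_{\*z})\mid \*z\in\3Z\}$. Hence it suffices to prove $[l,r] = [l_{\2M}, r_{\2M}]$ and that both endpoints $l,r$ are attained by some SCM consistent with the data.

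First I would record that the objective and every constraint of \Cref{eq:opt1} depend on an SCM $M$ only through its counterfactual collection $\*P^*_M$: the scalar $P_M(\*y_{\*x},\dots,\*z_{\*w})$ is a single entry of the distribution $P_M(\*Y_{\*x},\dots,\*Z_{\*w})\in\*P^*_M$, and for each $\*z\in\3Z$ the constraint family $\{P_M(\*v_{\*z})\}_{\*v}$ is exactly the interventional distribution $P_M(\*V_{\*z})\in\*P^*_M$. Consequently, if two SCMs satisfy $\*P^*_M = \*P^*_N$, then $M$ is feasible for \Cref{eq:opt1} precisely when $N$ is, and in that case their objective values coincide. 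Applying the counterfactual equivalence of \Cref{thm:cfinite} in both directions — every feasible $M\in\2M(\G)$ has a feasible twin $N\in\2N(\G)$ with the same objective value, and conversely every feasible $N\in\2N(\G)$ already lies in $\2M(\G)$ — yields that the two programs have the same set of attainable objective values, so their minima and maxima agree: $[l,r] = [l_{\2M}, r_{\2M}]$.

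Next I would verify that the polynomial program is a faithful reparametrization of \Cref{eq:opt2}: a canonical SCM $N\in\2N(\G)$ is completely specified by the exogenous distributions $\{P(U)\}_{U\in\*U}$ over the finite domains fixed by \Cref{eq:cardinality} and by the structural functions $\{f_V\}_{V\in\*V}$. Encoding $P(U=u)$ by $\theta_u$ (ranging over the product of probability simplices) and each $f_V(\pa_V,u_V)$ by the $0/1$ indicator vector $\mu_V^{(\pa_V,u_V)}$ (ranging over all such vectors) identifies the feasible set of \Cref{eq:opt2} with that of the polynomial program, under which \Cref{eq:sec3.4} turns the objective and constraints into the stated polynomials. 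Since this feasible set is compact — a finite union, over the discrete choices of the $\mu$'s, of products of simplices in the $\theta$'s — and the objective polynomial is continuous, the minimum $l$ and the maximum $r$ are attained at witnesses $N_l,N_r\in\2N(\G)\subseteq\2M(\G)$; each witness reproduces the given data and realizes the corresponding endpoint, which establishes tightness.

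The step I expect to be most delicate is the feasibility transfer: one must ensure that passing from an arbitrary $M$ to its canonical counterpart $N$ preserves \emph{simultaneously} the target event probability and every constraint distribution $P(\*V_{\*z})$, $\*z\in\3Z$, rather than one counterfactual at a time. This is exactly why \Cref{thm:cfinite} is stated for the entire family $\*P^*$ of joint counterfactual distributions; making the step rigorous reduces to checking that $P(\*y_{\*x},\dots,\*z_{\*w})$ and each interventional distribution $P(\*V_{\*z})$ genuinely belong to $\*P^*$ as defined in \Cref{eq:ctf2}, after which the equivalence in \Cref{thm:cfinite} applies verbatim. The remaining ingredients — compactness and attainment of the optimum, and the $(\mu,\theta)$ bijection — are routine.
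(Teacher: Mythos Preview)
Your proposal is correct and follows the same route as the paper: reduce \Cref{thm:valid} to the counterfactual equivalence of \Cref{thm:cfinite}, noting that objective and constraints depend on an SCM only through $\*P^*$. The paper states this as an immediate consequence without spelling out the feasibility-transfer, the $(\mu,\theta)$ bijection, or the compactness/attainment argument; your write-up makes these steps explicit, which is a welcome addition of rigor but not a different approach.
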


Despite the soundness and tightness of its derived bounds, solving a polynomial program in \Cref{eq:opt2} may take exponentially long in the most general case \citep{lewis1983computers}. Our focus here is upon the causal inference aspect of the problem and like earlier discussions we do not specify which solvers are used \citep{balke:pea94b,balke:pea97}. In some cases of interest, effective approximate planning methods for polynomial programs do exist. Investigating these methods is an ongoing subject of research \citep{lasserre2001global,parrilo2003semidefinite}.
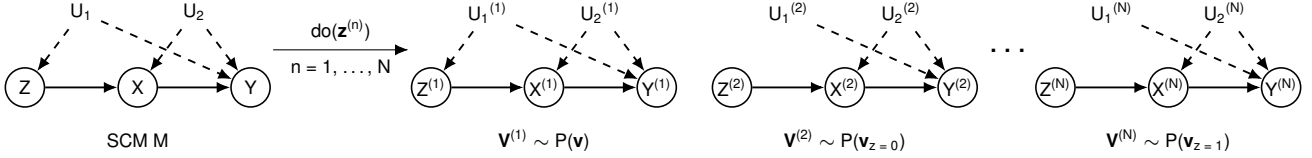
\begin{figure*}[t]
  \centering
  \begin{tikzpicture}
          \node[bvertex] (Z) at (0, 0) {Z};
          \node[bvertex] (X) at (1.5, 0) {X};
          \node[bvertex] (Y) at (3, 0) {Y};
          \node[uvertex] (U1) at (0.75, 1) {U\textsubscript{1}};
          \node[uvertex] (U2) at (2.25, 1) {U\textsubscript{2}};

          \node[uvertex] (t1) at (1.5, -0.7) {SCM M};

          \draw[->] (3.3, 0.5) to node[above] {do(\textbf{z}\textsuperscript{(n)})} node[below] {n = 1, \dots, N} (5.1, 0.5);


          \node[bvertex] (Z1) at (5.4, 0) {Z\textsuperscript{(1)}};
          \node[bvertex] (X1) at (6.9, 0) {X\textsuperscript{(1)}};
          \node[bvertex] (Y1) at (8.4, 0) {Y\textsuperscript{(1)}};
          \node[uvertex] (U11) at (6.15, 1) {U\textsubscript{1}\textsuperscript{(1)}};
          \node[uvertex] (U21) at (7.65, 1) {U\textsubscript{2}\textsuperscript{(1)}};

          \node[uvertex] (t1) at (6.9, -0.7) {\textbf{V}\textsuperscript{(1)} $\sim$ P(\textbf{v})};

          \node[bvertex] (Z2) at (9.4, 0) {Z\textsuperscript{(2)}};
          \node[bvertex] (X2) at (10.9, 0) {X\textsuperscript{(2)}};
          \node[bvertex] (Y2) at (12.4, 0) {Y\textsuperscript{(2)}};
          \node[uvertex] (U12) at (10.15, 1) {U\textsubscript{1}\textsuperscript{(2)}};
          \node[uvertex] (U22) at (11.65, 1) {U\textsubscript{2}\textsuperscript{(2)}};

          \node[uvertex] (t1) at (10.9, -0.7) {\textbf{V}\textsuperscript{(2)} $\sim$ P(\textbf{v}\textsubscript{z = 0})};

          \node[bvertex] (Z3) at (13.7, 0) {Z\textsuperscript{(N)}};
          \node[bvertex] (X3) at (15.2, 0) {X\textsuperscript{(N)}};
          \node[bvertex] (Y3) at (16.7, 0) {Y\textsuperscript{(N)}};
          \node[uvertex] (U13) at (14.45, 1) {U\textsubscript{1}\textsuperscript{(N)}};
          \node[uvertex] (U23) at (15.95, 1) {U\textsubscript{2}\textsuperscript{(N)}};

          \node[uvertex] (t1) at (15.2, -0.7) {\textbf{V}\textsuperscript{(N)} $\sim$ P(\textbf{v}\textsubscript{z = 1})};

          \node[uvertex] (dots) at (13.1, 0.5) {\Large \dots};





          \draw[dir] (X) -- (Y);
          \draw[dir] (Z) -- (X);
          \draw[dir, dashed] (U1) -- (Z);
          \draw[dir, dashed] (U1) -- (Y);
          \draw[dir, dashed] (U2) -- (X);
          \draw[dir, dashed] (U2) -- (Y);

          \draw[dir] (X1) -- (Y1);
          \draw[dir] (Z1) -- (X1);
          \draw[dir, dashed] (U11) -- (Z1);
          \draw[dir, dashed] (U11) -- (Y1);
          \draw[dir, dashed] (U21) -- (X1);
          \draw[dir, dashed] (U21) -- (Y1);

          \draw[dir] (X2) -- (Y2);
          \draw[dir] (Z2) -- (X2);
          \draw[dir, dashed] (U12) -- (Y2);
          \draw[dir, dashed] (U22) -- (X2);
          \draw[dir, dashed] (U22) -- (Y2);

          \draw[dir] (X3) -- (Y3);
          \draw[dir] (Z3) -- (X3);
          \draw[dir, dashed] (U13) -- (Y3);
          \draw[dir, dashed] (U23) -- (X3);
          \draw[dir, dashed] (U23) -- (Y3);







    \end{tikzpicture}
    \caption{The data-generating process for a finite dateset $\{x^{(n)}, y^{(n)}, z^{(n)} \}_{n = 1}^N$ in an SCM associated with \Cref{fig1b}; the set $\3Z = \{\emptyset, z = 0, z = 1\}$ where the idle intervention $\doo(\emptyset)$ corresponds to the observational distribution.} 
    \label{fig5}
\end{figure*}

\section{Bayesian Approach for Partial Identification} \label{sec4}
This section describes an effective algorithm to approximate the optimal bound in \Cref{eq:opt2} from finite samples drawn from interventional distributions $\left \{P(\*V_{\*z})\mid \*z \in \3Z \right\}$, provided with prior distributions over parameters $\theta_u$ and $\mu_V^{(\pa_V, u_V)}$ (possibly uninformative). Given space constraints, all proofs for results in this section are provided in Appendix B.

More specifically, the learner has access to a finite dataset $\5v = \left\{\*V^{(n)} = \*v^{(n)} \mid n = 1, \dots, N \right \}$, where each $\*V^{(n)}$ is an independent sample drawn from an interventional distribution $P\left (\*V_{\*z} \right)$ for some $\*z \in \3Z$. With a slight abuse of notation, we denote by $\*Z^{(n)}$ the set of variables $\*Z$ that are intervened for generating the $n$-th sample; therefore, its realization $\*z^{(n)} = \*z$. As an example, \Cref{fig5} shows a graphical representation of the data-generating process for a finite dateset $\{x^{(n)}, y^{(n)}, z^{(n)} \}_{n = 1}^N$ associated with SCMs in \Cref{fig1b}; the intervention set $\3Z = \{\emptyset, z = 0, z = 1\}$.

We first introduce effective Markov Chain Monte Carlo (MCMC) algorithms that sample the posterior distribution $P\left(\theta_{\text{ctf}} \mid \5v \right)$ over an arbitrary counterfactual probability $\theta_{\text{ctf}} = P\left(\*y_{\*x}, \dots, \*z_{\*w} \right)$. For every $V \in \*V$, $\forall \pa_V, u_V$, endogenous parameters $\mu^{(\pa_V, u_V)}_V$ are drawn uniformly over the finite domain $\D_V$. For every $U \in \*U$, exogenous parameters $\theta_u$ are drawn from a Dirichlet distribution \citep{connor1969concepts}. Formally, 
\begin{align}
    &\left( \theta_1, \dots, \theta_{d_U} \right) \sim \texttt{Dir}\left (\alpha^{(1)}_U, \dots, \alpha^{(d_U)}_U \right), \label{eq:sec5.1}
\end{align}
where the cardinality $d_U = \prod_{V \in \*C(U)} \left|\D_{\Pa_V} \mapsto \D_{V} \right|$ and hyperparameters $\alpha^{(u)}_1, \dots, \alpha^{(d_U)}_U > 0$. 

Gibbs sampling is a well-known MCMC algorithm that allows one to sample posterior distributions. We first introduce the following notations. Let parameters $\*\theta$ and $\*\mu$ be:
\begin{equation}
\begin{aligned}
  &\*\theta = \left \{\theta_u \mid \forall U \in \*U, \forall u \right\},\\
  &\*\mu = \left\{\mu^{(\pa_V, u_V)}_V \mid \forall V \in \*V, \forall \pa_V, u_V \right\}.
\end{aligned}
\end{equation}
We denote by $\5U = \left\{\*U^{(n)} \mid n = 1, \dots, N \right\}$ exogenous variables affecting $N$ endogenous variables $\5V = \left \{\*V^{(n)} \mid n = 1, \dots, N \right \}$; we use $\5u$ to represent its realization. Our blocked Gibbs sampler works by iteratively drawing values from the conditional distributions of variables as follows \citep{ishwaran2001gibbs}. Detailed derivations of complete conditionals are shown in Appendix B.1.
\begin{itemize}
  \item{\bf Sampling $P\left (\5u \mid \5v, \*\theta, \*\mu \right)$. }Exogenous variables $\*U^{(n)}$, $n = 1, \dots, N$, are mutually independent given parameters $\*\theta, \*\mu$. We could draw each $\left ( \*U^{(n)} \mid \*\theta, \*\mu, \5V \right)$ corresponding to the $n$-th sample induced by $\doo(\*z^{(n)})$ independently. The complete conditional of $\*U^{(n)}$ is given by 
  \begin{equation}
  \begin{aligned}
      &P\left (\*u^{(n)} \mid \*v^{(n)}, \*\theta, \*\mu \right) \\
      &\propto \prod_{V \in \*V \setminus \*Z^{(n)}} \mu_{v^{(n)}}^{\left(\pa^{(n)}_{V}, u^{(n)}_{V} \right)} \prod_{U \in \*U} \theta_{u}.
  \end{aligned}\label{eq:sample1}
  \end{equation}
  \item {\bf Sampling $P\left (\*\mu, \*\theta \mid \5v, \5u \right)$. }Note that parameters $\*\mu, \*\theta$ are mutually independent given $\5V, \5U$. Therefore, we will derive complete conditionals over $\*\mu, \*\theta$ separately. 
  
  Consider first endogenous parameters $\*\mu$. For every $V \in \*V$, fix $\pa_V, u_V$. If there exists an instance $n = 1, \dots, N$ such that $V \not \in \*Z^{(n)}$ and $\pa^{(n)}_{V} = \pa_V, u^{(n)}_{V} = u_V$, the posterior over $\mu^{(\pa_V, u_V)}_V$ is given by, for $\forall v \in \D_V$,
  \begin{align}
    P\left (\mu^{(\pa_V, u_V)}_v = 1 \mid \5v, \5u \right) = \I_{v = v^{(n)}}. \label{eq:sample2}
  \end{align}
  Otherwise, $\mu^{(\pa_V, u_V)}_V$ is drawn uniformly from $\D_V$.
  
  Consider now exogenous parameters $\*\theta$. For every $U \in \*U$, fix $u$. Let $n_u = \sum_{n = 1}^{N}\I_{u^{(n)} = u}$ be the number of instances in $u^{(n)}$ equal to $u$. By the conjugacy of the Dirichlet distribution, the complete conditional of $\theta_u$ is,
\begin{equation}
  \begin{aligned}
  &\left( \theta_1, \dots, \theta_{d_U} \right) \sim \texttt{Dir}\left (\beta^{(1)}_U, \dots, \beta^{(d_U)}_U \right),\\
  &\text{where} \;\; \beta^{(u)}_U = \alpha^{(u)}_U + n_u \;\; \text{for }u = 1, \dots, d_U.
  \end{aligned} \label{eq:sample3}
\end{equation}
\end{itemize}
Doing so eventually produces values drawn from the posterior distribution over $\left (\*\theta, \*\mu, \5U \mid \5V \right)$. Given parameters $\*\theta, \*\mu$, we compute the counterfactual probability $\theta_{\text{ctf}} = P(\*y_{\*x}, \dots, \*z_{\*w})$ following the three-step algorithm in \citep{pearl:2k} which consists of abduction, action, and prediction. Thus computing $\theta_{\text{ctf}}$ from each draw $\*\theta, \*\mu, \5U$ eventually gives us the draw from the posterior distribution $P\left (\theta_{\text{ctf}} \mid \5v \right)$.

\subsection{Collapsed Gibbs Sampling}
We also describe an alternative MCMC algorithm that applies to Dirichlet priors in \Cref{eq:sec5.1}. For $n = 1, \dots, N$, let $\5U_{-n}$ denote the set difference $\5U \setminus \*U^{(n)}$; similarly, we write $\5V_{-n} =\5V \setminus \*V^{(n)}$. Our collapsed Gibbs sampler first iteratively draws values from the conditional distribution over $\left (\*U^{(n)} \mid \5V, \5U_{-n} \right)$ for every $n = 1, \dots, N$ as follows.
\begin{itemize}
  \item {\bf Sampling $P\left (\*u^{(n)} \mid \5v, \5u_{-n} \right)$. }At each iteration, draw $\*U^{(n)}$ from the conditional distribution given by
  \begin{align}
    P\left (\*u^{(n)} \mid \5v, \5u_{-n}\right) \notag \\
    \propto \prod_{V \in \*V \setminus \*Z^{(n)}} &P\left (v^{(n)} \mid \pa^{(n)}_{V}, u^{(n)}_{V}, \5v_{-n}, \5u_{-n} \right) \notag \\
    \prod_{U \in \*U} &P\left (u^{(n)} \mid \5v_{-n}, \5u_{-n}\right). 
  \end{align}
  Among quantities in the above equation, for every $V \in \*V \setminus \*Z^{(n)}$, if there exists an instance $i \neq n$ such that $V \not \in \*Z^{(i)}$ and $\pa^{(i)}_{V} = \pa^{(n)}_{V}$, $u^{(i)}_{V} = u^{(n)}_{V}$,
  \begin{align}
    P\left (v^{(n)} \mid \pa^{(n)}_V, u^{(n)}_V, \5v_{-n}, \5u_{-n} \right) = \I_{v^{(n)} = v^{(i)}}. 
  \end{align}
  Otherwise, the above probability is equal to $1 / |\D_V|$.

  For every $U \in \*U$, let $\bar{u}_{-n}$ be a set of exogenous samples $\left \{ u^{(1)}, \dots, u^{(N)} \right \} \setminus \{u^{(n)}\}$. Let $\{u^*_{1}, \dots, u^*_{K}\}$ denote $K$ unique values that samples in $\bar{u}_{-n}$ take on. The conditional distribution over $\left (U^{(n)} \mid \5V_{-n}, \5U_{-n} \right)$ is given by as follows, for $\alpha_U = \sum_{u = 1}^{d_U} \alpha_U^{(u)}$,
  \begin{align}
    &P\left (u^{(n)} \mid \5v_{-n}, \5u_{-n}\right)\\
    &=
    \begin{dcases}
      \frac{n^*_k + \alpha^{\left(u^*_k \right)}_U}{\alpha_U + N - 1} & \mbox{if } u^{(n)} = u^*_k\\
      \frac{\alpha^{\left(u^{(n)} \right)}_U}{\alpha_U + N - 1} & \mbox{if } u^{(n)} \not \in \{u^*_{1}, \dots, u^*_{K}\}
    \end{dcases} \notag
\end{align}
where $n^*_k = \sum_{i \neq n} \I_{u^{(i)} = u^*_k}$, for $k = 1, \dots, K$, records the number of values $u^{(i)} \in \bar{u}_{-n}$ that are equal to $u^*_k$. 
\end{itemize}
Doing so eventually produces exogenous variables drawn from the posterior distribution of $\left (\5U \mid \5V \right)$. We then sample parameters from the posterior distribution of $\left (\*\theta, \*\mu \mid \5U, \5V \right)$; complete conditional distributions $P\left (\*\mu, \*\theta \mid \5v, \5u \right)$ are given in \Cref{eq:sample2,eq:sample3}. Finally, computing $\theta_{\text{ctf}}$ from each sample $\*\theta, \*\mu$ gives a draw from the posterior $P\left (\theta_{\text{ctf}} \mid \5v \right)$. 

When the cardinality $d_U$ of exogenous domains is high, the collapsed Gibbs sampler described here is more computational efficient than the blocked sampler, since it does not iteratively draw parameters $\*\theta, \*\mu$ in the high-dimensional space. Instead, the collapsed sampler only draws $\*\theta, \*\mu$ once after samples drawn from the distribution of $\left (\5U \mid \5V \right)$ converge. On the other hand, when the cardinality $d_U$ is reasonably low, the blocked Gibbs sampler is preferable since it exhibits better convergence \citep{ishwaran2001gibbs}.

\subsection{Credible Intervals over Counterfactuals}\label{sec4.2}
Given a MCMC sampler, one could bound the counterfactual probability $\theta_{\text{ctf}}$ by computing credible intervals from the posterior distribution $P\left(\theta_{\text{ctf}} \mid \5v \right)$.
\begin{definition}
  Fix $\alpha \in [0, 1)$. A $100(1-\alpha)\%$ credible interval $[l_{\alpha}, r_{\alpha}]$ for $\theta_{\text{ctf}}$ is given by 
\begin{equation}
  \begin{aligned}
    &l_{\alpha} = \sup \left \{x \mid P\left ( \theta_{\text{ctf}} \leq x \mid \5v \right) = \alpha / 2 \right\}, \\
    &r_{\alpha} = \inf \left \{x \mid P\left ( \theta_{\text{ctf}} \leq x \mid \5v \right) = 1 - \alpha / 2 \right\}.
  \end{aligned}
\end{equation}
\end{definition}
For a $100(1-\alpha)\%$ credible interval $[l_{\alpha}, r_{\alpha}]$, any counterfactual probability $\theta_{\text{ctf}}$ that is compatible with observational data $\5v$ lies between the interval $l_{\alpha}$ and $r_{\alpha}$ with probability $1 - \alpha$. Credible intervals have been widely applied for computing bounds over counterfactuals provided with finite observations \citep{imbens2004confidence,vansteelandt2006ignorance,romano2008inference,bugni2010bootstrap,todem2010global}. Let $N_{\*z}$ denote the number of samples in $\5v$ that are drawn from an interventional distribution $P\left(\*v_{\*z}\right)$. Assume that the prior distribution over $\theta_{\text{ctf}}$ has full support over Borel sets in $[0, 1]$. It follows from the law of large numbers that the $100\%$ credible interval $[l_{0}, r_{0}]$ converges to the optimal bound $[l, r]$ in \Cref{eq:opt2} as the sample size $N_{\*z}$ grows (to infinite) for all $\*z \in \3Z$ \citep{chickering:pea97}.

Let $\left \{\theta^{(t)} \right \}_{t = 1}^T$ be $T$ samples drawn from $P\left (\theta_{\text{ctf}} \mid \5v \right)$. One could compute the $100(1-\alpha)\%$ credible interval for $\theta_{\text{ctf}}$ using following estimators \citep{sen1994large}:
\begin{align}
  &\hat{l}_{\alpha}(T) = \theta^{(\ceil{(\alpha/2)T})}, &&\hat{r}_{\alpha}(T) = \theta^{(\ceil{(1 - \alpha/2)T})}, \label{eq:estimate}
\end{align}
where $\theta^{(\ceil{(\alpha/2)T})}, \theta^{(\ceil{(1 - \alpha/2)T})}$ are the $\ceil{(\alpha/2)T}$th smallest and the $\ceil{(1 - \alpha/2)T}$th smallest of $\left \{\theta^{(t)} \right \}$\footnote{For any real $\alpha \in \mathbb{R}$, let $\ceil{\alpha}$ denote the smallest integer $n \in \mathbb{Z}$ larger than $\alpha$, i.e., $\ceil{\alpha} = \min\{n \in \mathbb{Z} \mid n \geq \alpha\}$.}. 
\begin{restatable}{lemma}{lemci}\label{lem:ci}
  Fix $T > 0$ and $\delta \in (0, 1)$. Let function $f(T, \delta) = \sqrt{2T^{-1}\ln(4 / \delta)}$. With probability at least $1 - \delta$, estimators $\hat{l}_{\alpha}(T), \hat{r}_{\alpha}(T)$ for any $\alpha \in [0, 1)$ is bounded by
  \begin{equation}
    \begin{aligned}
      &l_{\alpha - f(T, \delta)} \leq \hat{l}_{\alpha}(T) \leq l_{\alpha + f(T, \delta)}, \\
      &r_{\alpha + f(T, \delta)} \leq \hat{r}_{\alpha}(T) \leq r_{\alpha - f(T, \delta)}.
    \end{aligned}
  \end{equation}
\end{restatable}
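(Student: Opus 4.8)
The plan is to read the estimators in \Cref{eq:estimate} as empirical quantiles of the posterior and to control them with a uniform concentration bound on the empirical CDF. Write $F(x) = P(\theta_{\text{ctf}} \le x \mid \5v)$ for the posterior CDF, and --- as is standard when reasoning about the output of an ergodic chain --- treat the draws $\theta^{(1)}, \dots, \theta^{(T)}$ as i.i.d.\ samples from $F$, with empirical CDF $\hat F_T(x) = \tfrac1T\sum_{t=1}^T \I_{\theta^{(t)} \le x}$. Under the running hypothesis that the prior over $\theta_{\text{ctf}}$ has full support on the Borel sets of $[0,1]$, $F$ is continuous and strictly increasing on the support of $\theta_{\text{ctf}}$; hence, writing $F^{-1}(p) = \inf\{x : F(x) \ge p\}$, we have $l_\beta = F^{-1}(\beta/2)$ and $r_\beta = F^{-1}(1-\beta/2)$ (single points), $\hat l_\alpha(T) = \hat F_T^{-1}\!\big(\ceil{(\alpha/2)T}/T\big)$ and $\hat r_\alpha(T) = \hat F_T^{-1}\!\big(\ceil{(1-\alpha/2)T}/T\big)$, with $l_\beta$ nondecreasing and $r_\beta$ nonincreasing in $\beta$.

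The first real step is the Dvoretzky--Kiefer--Wolfowitz (DKW) inequality, $\Pr\!\big[\,\sup_x|\hat F_T(x) - F(x)| > \epsilon\,\big] \le 2 e^{-2T\epsilon^2}$. Taking $\epsilon = \tfrac12 f(T,\delta)$ makes the right-hand side equal to $\delta/2$, so the event $E = \{\|\hat F_T - F\|_\infty \le \epsilon\}$ has probability at least $1-\delta$, with a factor of two of slack to spare. On $E$ I would then establish the elementary quantile-transfer fact: since $F$ is continuous the draws are almost surely distinct, so the $\ceil{pT}$-th smallest sample $\hat q(p)$ satisfies $\hat F_T(\hat q(p)) = \ceil{pT}/T$, hence $|F(\hat q(p)) - \ceil{pT}/T| \le \epsilon$; combined with $p \le \ceil{pT}/T \le p + 1/T$ and the monotone implications $F(x) \ge q \Rightarrow x \ge F^{-1}(q)$, $F(x) \le q \Rightarrow x \le F^{-1}(q)$ (valid because $F$ is continuous and strictly increasing), this yields $F^{-1}(p - \epsilon) \le \hat q(p) \le F^{-1}(p + \epsilon + 1/T)$ for every $p \in (0,1)$.

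Instantiating this at $p = \alpha/2$ gives $F^{-1}(\alpha/2 - \epsilon) \le \hat l_\alpha(T) \le F^{-1}(\alpha/2 + \epsilon + 1/T)$; since $F^{-1}(\beta/2) = l_\beta$, this already has the claimed shape once the endpoint levels $\alpha/2 - \epsilon$ and $\alpha/2 + \epsilon + 1/T$ are matched to $(\alpha \mp f(T,\delta))/2$. The tidiest way is to use $\epsilon = \tfrac12 f(T,\delta) - 1/T$ instead (positive, and still inside the DKW budget for all but the smallest $T$ thanks to the slack above), which makes those levels exactly $\alpha/2 \pm \tfrac12 f(T,\delta)$ and therefore gives $l_{\alpha - f(T,\delta)} \le \hat l_\alpha(T) \le l_{\alpha + f(T,\delta)}$. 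Instantiating instead at $p = 1 - \alpha/2$, using $F^{-1}(1-\beta/2) = r_\beta$ and noting that the two inequalities swap roles because $r_\beta$ is nonincreasing in $\beta$, gives $r_{\alpha + f(T,\delta)} \le \hat r_\alpha(T) \le r_{\alpha - f(T,\delta)}$, which is the remaining claim.

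I expect the main obstacle to be low-level bookkeeping rather than any substantive difficulty: getting the generalized-inverse manipulations exactly right --- they lean silently on continuity and strict monotonicity of $F$, which is precisely where the full-support-prior hypothesis is used --- and absorbing the $\ceil{\cdot}$-induced $O(1/T)$ correction into $f(T,\delta)$ without overspending the DKW budget. The i.i.d.\ idealization of the chain and the DKW step itself are routine.
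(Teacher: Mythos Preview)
Your approach is correct and is a genuinely different route from the paper's. The paper does not invoke DKW at all: it fixes $\alpha$, then for each of the four endpoint events (e.g.\ $\{\hat l_\alpha(T) > l_{\alpha+\epsilon}\}$) it rewrites the event as a deviation of the binomial count $\sum_t \I_{\theta^{(t)}\le l_{\alpha\pm\epsilon}}$ from its mean, applies Hoeffding pointwise with deviation $\epsilon/2$ to get $\exp(-T\epsilon^2/2)$, and union-bounds over the four events to land on $\epsilon=f(T,\delta)$. Your DKW argument trades the four pointwise Hoeffding applications for a single uniform bound on $\|\hat F_T-F\|_\infty$, which has the pleasant side effect of delivering the conclusion \emph{simultaneously for all $\alpha$} on one event of probability $\ge 1-\delta$ --- arguably the intended reading of the lemma, and something the paper's proof does not actually give.

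The one place your sketch is slightly rougher than the paper's is the $\lceil\cdot\rceil$ bookkeeping. Your proposed fix --- shrinking $\epsilon$ to $f/2-1/T$ and spending the factor-two slack in the DKW budget --- does not quite close for small $T$ (the residual factor $e^{2f-2/T}$ can exceed $2$). The clean way, which is exactly what the paper does implicitly, is to argue the upper bound on $\hat l_\alpha(T)$ by contraposition: if $\hat l_\alpha(T)>l_{\alpha+f}$ then $\hat F_T(l_{\alpha+f})\le(\lceil(\alpha/2)T\rceil-1)/T\le \alpha/2$, whereas $F(l_{\alpha+f})=(\alpha+f)/2$, contradicting $\|\hat F_T-F\|_\infty\le f/2$. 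Because $\lceil a\rceil-1\le a$ always, the $1/T$ term never appears, and your DKW event with $\epsilon=f/2$ suffices as written.
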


\begin{algorithm}[t]
  \caption{\textsc{CredibleInterval}}
  \label{alg:ci}
\begin{algorithmic}[1]
 \STATE {\bfseries Input:} Credible level $\alpha$, tolerance level $\delta, \epsilon$.
 \STATE {\bfseries Output:} An credible interval $[l_{\alpha}, h_{\alpha}]$ for $\theta_{\text{ctf}}$.
 \STATE Draw $T = \ceil{2\epsilon^{-2}\ln(4/\delta)}$ samples $\left \{\theta^{(1)}, \dots, \theta^{(T)} \right\}$ from the posterior distribution $P\left(\theta_{\text{ctf}} \mid \5v \right)$.
 \STATE Return interval $\left[ \hat{l}_{\alpha}(T), \hat{r}_{\alpha}(T) \right]$ (\Cref{eq:estimate}).
\end{algorithmic}
\end{algorithm}

We summarize our algorithm, \textsc{CredibleInterval}, in \Cref{alg:ci}. It takes a credible level $\alpha$ and tolerance levels $\delta, \epsilon$ as inputs. In particular, \textsc{CredibleInterval} repeatedly draw $T \geq \ceil{2\epsilon^{-2}\ln(4/\delta)}$ samples from $P\left(\theta_{\text{ctf}} \mid \5v \right)$. It then computes estimates $\hat{l}_{\alpha}(T), \hat{h}_{\alpha}(T)$ from drawn samples following \Cref{eq:estimate} and return them as the output. 
\begin{restatable}{corollary}{corolci}\label{corol:ci}
  Fix $\delta \in (0, 1)$ and $\epsilon > 0$. With probability at least $1 - \delta$, the interval $[\hat{l}, \hat{r}] = \textsc{CredibleInterval}(\alpha, \delta, \epsilon)$ for any $\alpha \in [0, 1)$ is bounded by $\hat{l} \in \left [l_{\alpha - \epsilon}, l_{\alpha + \epsilon} \right]$ and $\hat{r} \in \left [r_{\alpha + \epsilon}, r_{\alpha - \epsilon} \right]$.
\end{restatable}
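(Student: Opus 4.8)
The plan is to obtain \Cref{corol:ci} as an essentially immediate consequence of \Cref{lem:ci}, after specializing the sample size and invoking monotonicity of the posterior quantile maps. First I would record what \textsc{CredibleInterval}$(\alpha,\delta,\epsilon)$ actually does: by \Cref{alg:ci} it draws $T=\ceil{2\epsilon^{-2}\ln(4/\delta)}$ posterior samples $\{\theta^{(t)}\}_{t=1}^{T}$ and returns $[\hat l,\hat r]=[\hat l_\alpha(T),\hat r_\alpha(T)]$ with the order-statistic estimators of \Cref{eq:estimate}. Because $T\ge 2\epsilon^{-2}\ln(4/\delta)$, the tolerance function appearing in \Cref{lem:ci} satisfies
\[
  f(T,\delta)\;=\;\sqrt{2T^{-1}\ln(4/\delta)}\;\le\;\sqrt{2\cdot\tfrac{\epsilon^2}{2\ln(4/\delta)}\cdot\ln(4/\delta)}\;=\;\epsilon .
\]
Thus, on the event of \Cref{lem:ci} --- which holds with probability at least $1-\delta$ and, crucially, simultaneously for every $\alpha\in[0,1)$, since the estimators are all functions of the \emph{same} $T$ draws and the guarantee rests on a uniform deviation bound for the posterior empirical CDF --- we have $l_{\alpha-f(T,\delta)}\le\hat l_\alpha(T)\le l_{\alpha+f(T,\delta)}$ and $r_{\alpha+f(T,\delta)}\le\hat r_\alpha(T)\le r_{\alpha-f(T,\delta)}$ for all $\alpha$.

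The second step is to trade $f(T,\delta)$ for $\epsilon$ using monotonicity. From the definition $l_\alpha=\sup\{x: P(\theta_{\text{ctf}}\le x\mid\5v)=\alpha/2\}$ the map $\alpha\mapsto l_\alpha$ is nondecreasing, and from $r_\alpha=\inf\{x: P(\theta_{\text{ctf}}\le x\mid\5v)=1-\alpha/2\}$ the map $\alpha\mapsto r_\alpha$ is nonincreasing. Since $0\le f(T,\delta)\le\epsilon$, combining with the previous display gives $\hat l_\alpha(T)\ge l_{\alpha-f(T,\delta)}\ge l_{\alpha-\epsilon}$ and $\hat l_\alpha(T)\le l_{\alpha+f(T,\delta)}\le l_{\alpha+\epsilon}$, hence $\hat l\in[l_{\alpha-\epsilon},l_{\alpha+\epsilon}]$; symmetrically $\hat r_\alpha(T)\ge r_{\alpha+f(T,\delta)}\ge r_{\alpha+\epsilon}$ and $\hat r_\alpha(T)\le r_{\alpha-f(T,\delta)}\le r_{\alpha-\epsilon}$, hence $\hat r\in[r_{\alpha+\epsilon},r_{\alpha-\epsilon}]$. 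All of this takes place on a single event of probability at least $1-\delta$, and the ``for any $\alpha$'' quantifier is inherited verbatim from \Cref{lem:ci} with no union bound required, which finishes the proof.

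The only point that needs care --- and the one I would flag as the main, if minor, obstacle --- is justifying the monotonicity of $l_\alpha$ and $r_\alpha$, since their definitions implicitly assume that the level sets $\{x: P(\theta_{\text{ctf}}\le x\mid\5v)=c\}$ are nonempty and that the $\sup/\inf$ genuinely tracks the $c$-quantile of the posterior. Under the standing assumption (already used earlier for the convergence of $[l_0,r_0]$) that the posterior of $\theta_{\text{ctf}}$ has full support on $[0,1]$, its CDF is continuous and strictly increasing, so these level sets are singletons, $l_\alpha$ and $r_\alpha$ are the ordinary quantile functions, and monotonicity is immediate. In the general case I would instead define $l_\alpha,r_\alpha$ through the generalized inverse of the posterior CDF and appeal to the standard fact that generalized inverses of nondecreasing functions are nondecreasing; the remainder of the argument is unchanged.
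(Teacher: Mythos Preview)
Your proposal is correct and follows essentially the same approach as the paper: the paper's proof is a one-liner that simply notes the statement follows from \Cref{lem:ci} by choosing $T$ so that $\sqrt{2T^{-1}\ln(4/\delta)}\le\epsilon$. You have written out in full the monotonicity step (in $\alpha$) needed to pass from $f(T,\delta)\le\epsilon$ to the stated bounds, which the paper leaves implicit; otherwise the arguments coincide.
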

\Cref{corol:ci} implies that any counterfactual probability $\theta_{\text{ctf}}$ compatible with the dataset $\5v$ falls between $[\hat{l}, \hat{r}] = \textsc{CredibleInterval}(\alpha, \delta, \epsilon)$ with $P \left( \theta_{\text{ctf}} \in [\hat{l}, \hat{r}] \mid \5v \right) \approx 1 - \alpha \pm \epsilon$. As the tolerance rate $\epsilon \to 0$, $[\hat{l}, \hat{r}]$ converges to a $100(1-\alpha)\%$ credible interval with high probability. 

\begin{figure*}[t]
\centering
\null
\begin{subfigure}{0.245\linewidth}\centering
  \includegraphics[width=\linewidth]{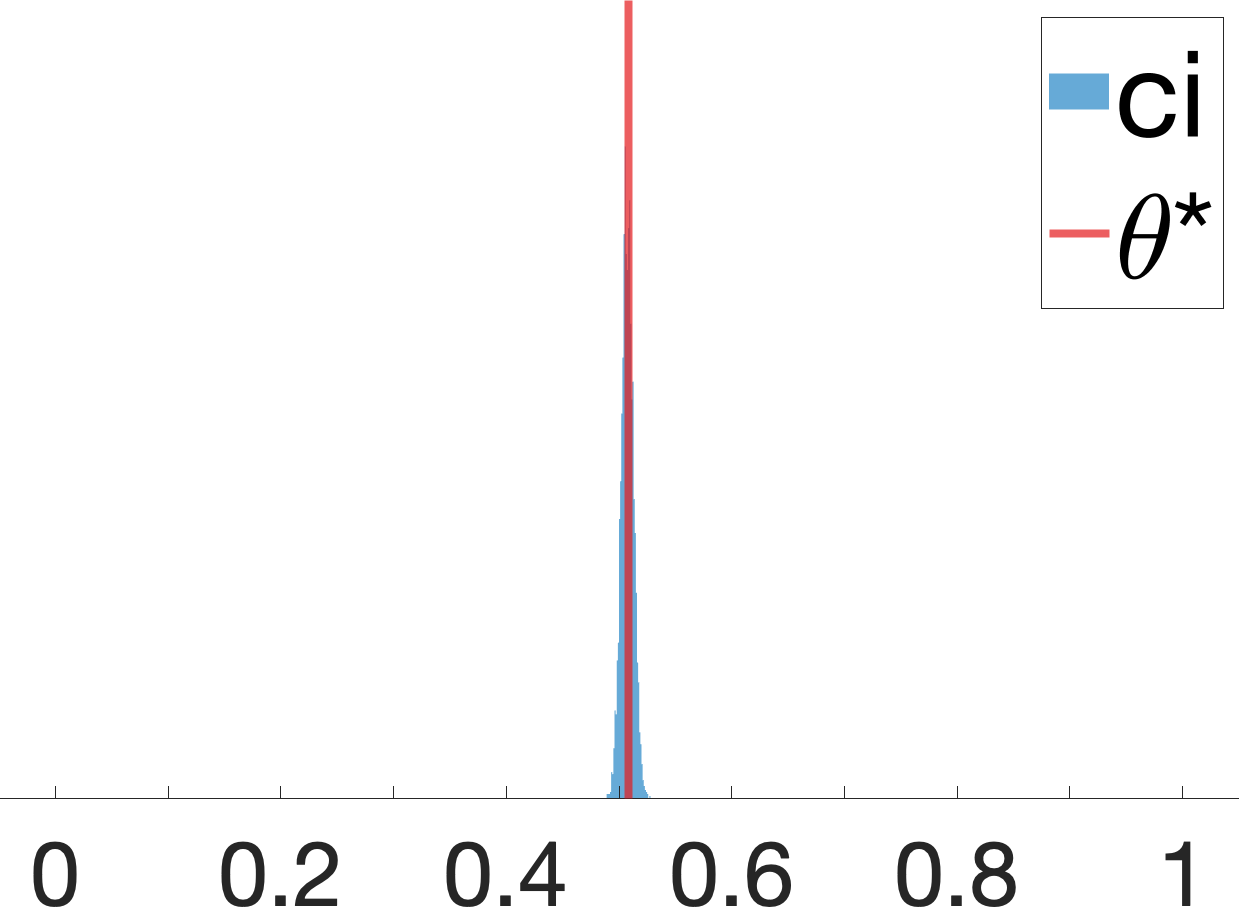}
\caption{Frontdoor}
\label{fig7a}
\end{subfigure}\hfill
\begin{subfigure}{0.245\linewidth}\centering
  \includegraphics[width=\linewidth]{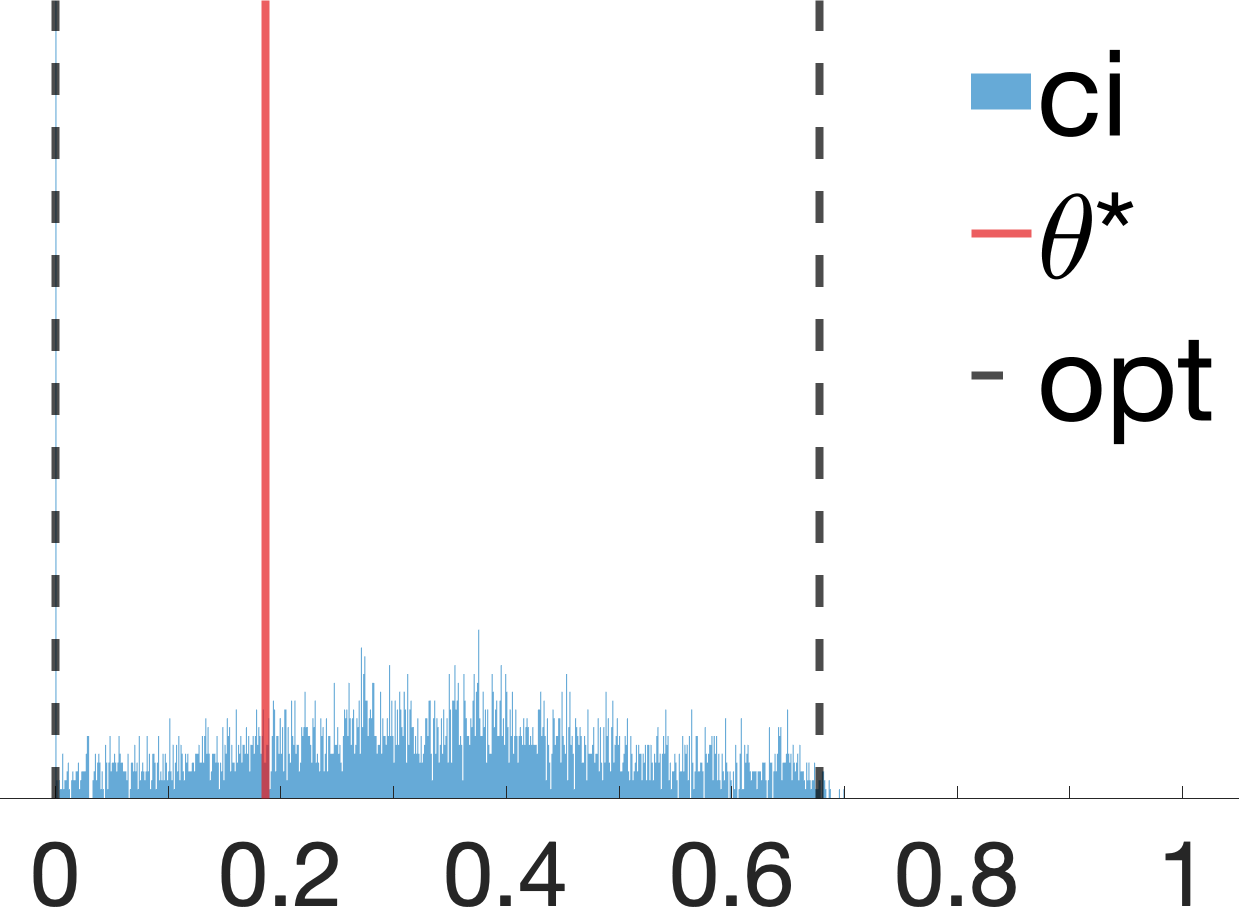}
\caption{PNS}
\label{fig7b}
\end{subfigure}\hfill
\begin{subfigure}{0.245\linewidth}\centering
  \includegraphics[width=\linewidth]{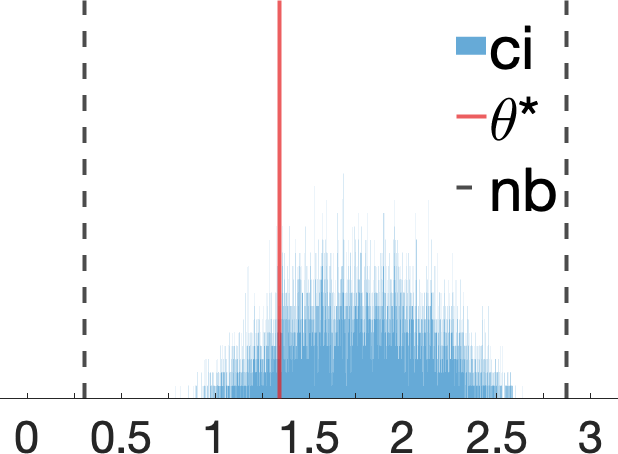}
\caption{IST}
\label{fig7c}
\end{subfigure}\hfill
\begin{subfigure}{0.245\linewidth}\centering
  \includegraphics[width=\linewidth]{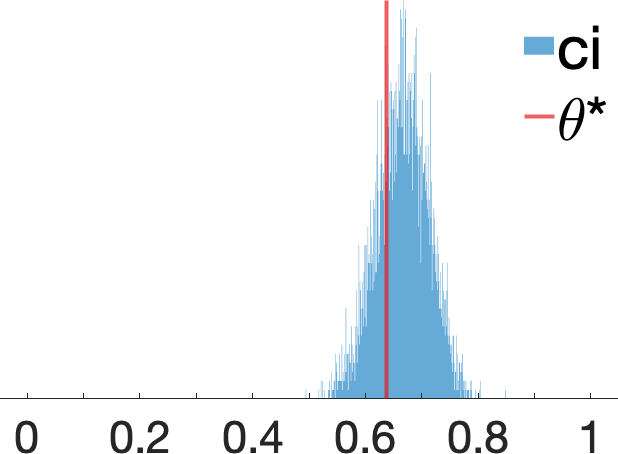}
\caption{Obs. + Exp.}
\label{fig7d}
\end{subfigure}\null
\caption{Simulation results for Experiments 1-4. For all plots (\subref{fig7a} - \subref{fig7d}), \textit{ci} represents our proposed algorithm; $\theta^*$ is the actual counterfactual probability; \textit{opt} is the optimal asymptotic bounds (if exists); \textit{nb} stands for the natural bounds \citep{manski:90}.}
\label{fig7}
\end{figure*}

\section{Simulations and Experiments}
We demonstrate our algorithms on various synthetic and real datasets in different causal diagrams. Overall, we found that simulation results support our findings and the proposed bounding strategy consistently dominates state-of-art algorithms. When target probabilities are identifiable (Experiment 1), our bounds collapse to the actual 
counterfactual probabilities. For non-identifiable settings, our algorithm obtains sharp asymptotic bounds when the closed-form solutions already exist (Experiments 2); and obtains novel counterfactual bounds in other more general cases which consistently improve over existing strategies (Experiment 3 \& 4).

In all experiments, we evaluate our proposed strategy using credible intervals (\textit{ci}). We draw at least $4 \times 10^3$ samples from the posterior distribution $P\left(\theta_{\text{ctf}} \mid \5v \right)$ over the target counterfactual. This allows us to compute $100\%$ credible interval over $\theta_{\text{ctf}}$ within error $\epsilon = 0.05$, with probability at least $1 - \delta = 0.95$. As the baseline, we include the actual counterfactual probability $\theta^*$. We refer readers to Appendix C for more details on simulations and additional experiments with other causal diagrams and datasets.

\paragraph{Experiment 1: Frontdoor Graph}
Consider the ``Frontdoor'' graph described in \Cref{fig1c} where $X, Y, W$ are binary variables in $\{0, 1\}$; $U_1, U_2 \in \3R$. In this case, any interventional probability $P(y_x)$ is identifiable from the observational distribution $P(X, W, Y)$ through the frontdoor adjustment \citep[Thm.~3.3.4]{pearl:2k}. We collect $N = 10^4$ observational samples $\5v = \{x^{(n)}, y^{(n)}, w^{(n)}\}_{n = 1}^N$ from a synthetic SCM instance. \Cref{fig7a} shows samples drawn from the posterior distribution $\left ( P(Y_{x = 0} = 1) \mid \5v \right)$. The analysis reveals that these samples collapse to the actual interventional probability $P(Y_{x = 0} = 1) = 0.5085$, which confirms the identifiability of $P(y_x)$ in the ``frontdoor'' graph.

\paragraph{Experiment 2: Probability of Necessity and Sufficiency (PNS)} Consider the ``Bow'' diagram in \Cref{fig1d} where $X, Y \in \{0, 1\}$ and $U \in \3R$. We study the problem of evaluating the \emph{probability of necessity and sufficiency} $P(Y_{x=1}=1, Y_{x=0}=0)$ from the observational distribution $P(X, Y)$. 
The sharp bound for $P(Y_{x=1}=1, Y_{x=0}=0)$ from $P(X, Y)$ was introduced in \citep{tian:pea00b} (labelled as \textit{opt}). We collect $N = 10^3$ observational samples $\5v = \{x^{(n)}, y^{(n)} \}_{n = 1}^N$ from a randomly generated SCM instance. \Cref{fig7c} shows samples drawn from the posterior distribution over $\left ( P(Y_{x=1}=1, Y_{x=0}=0) \mid \5v \right)$. 
The analysis reveals that the $100\%$ credible interval (\textit{ci}) matches the optimal PNS bound $l = 0, r = 0.6775$ over the actual counterfactual probability $P(Y_{x=1}=1, Y_{x=0}=0) = 0.1867$. 

\paragraph{Experiment 3: International Stroke Trials (IST)}IST was a large, randomized, open trial of up to $14$ days of antithrombotic therapy after stroke onset \citep{carolei1997international}. 
In particular, the treatment $X$ is a pair $(i, j)$ where $i \in \{0, 1\}$ stands for aspirin allocation; $j \in \{0, 1, 2\}$ stands for heparin allocation. The primary outcome $Y \in \{0, \dots, 3\} $ is the health of the patient $6$ months after the treatment.
To emulate the presence of unobserved confounding, we filter the experimental data  following a procedure in \citep{kallus2018confounding}. Doing so allows us to obtain $N = 10^3$ synthetic observational samples $\5v = \{x^{(n)}, y^{(n)}, z^{(n)}\}_{n = 1}^N$ that are compatible with the ``IV'' diagram of \Cref{fig1a} where $Z \in \{0, \dots, 9\}$. We are interested in evaluating the treatment effect $E[Y_{x = (1, 0)}]$ for only assigning aspirin $X = (1, 0)$. As a baseline, we also include the natural bound \citep{manski:90} estimated at the $95\%$ confidence level (\textit{nb}) \citep{zhang2021bounding}. The analysis (\Cref{fig7c}) reveals that both algorithms achieve effective bounds containing target causal effect $E[Y_{x = (1, 0)}] = 1.3418$. The $100\%$ credible interval is  $l_{\textit{ci}}= 0.4363, r_{\textit{ci}}= 2.3162$, which  improves over the existing strategy ($l_{\textit{nb}}= 0.3050, r_{\textit{nb}}= 2.8686$). 

\paragraph{Experiment 4: Obs. + Exp.}Consider the causal diagram in \Cref{fig1b} where $X, Y, Z \in \{0, \dots, 9\}$ and $U_1, U_2 \in \3R$. We are interested in evaluating counterfactual probabilities $P\left (z, x_{z'}, y_{x'} \right)$ from the observational distribution $P(X, Y, Z)$ and a collection of interventional distributions $P(X_{z}, Y_{z})$ induced by interventions $\doo(z)$ for $z = 0, \dots, 9$. We collect $N = 10^3$ samples $\5v = \{x^{(n)}, y^{(n)}, z^{(n)} \}_{n = 1}^N$ from a SCM instance of \Cref{fig1b} where each sample $X^{(n)}, Y^{(n)}, Z^{(n)}$ is an independent draw from $P(X, Y, Z)$ or $P(X_{z}, Y_{z})$. To address the challenge of the high-dimensional exogenous domains, we apply the proposed collapsed Gibbs sampler to obtain samples from the posterior distribution $\left ( P\left (Z + X_{z = 0} + Y_{x=0} \geq 14 \right) \mid \5v \right)$. Simulation results are shown in \Cref{fig7d}. The analysis reveals that our proposed approach is able to achieve an effective bound that contains the actual counterfactual probability $P\left (Z + X_{z = 0} + Y_{x=0} \geq 14 \right) = 0.6378$. The $100\%$ credible interval (\textit{ci}) is equal to $l= 0.4949, r= 0.8482$. To our best knowledge, no existing strategy is applicable for this setting.

\section{Conclusion}
This paper investigated the problem of partial identification of counterfactual distributions, which concerns with bounding  counterfactual probabilities from an arbitrary combination of observational and experimental data, provided with a causal diagram encoding qualitative assumptions about the data-generating process. We introduced a special parametric family of SCMs with discrete exogenous variables, taking values from a finite set of unobserved states, and showed that it could represent \emph{all} counterfactual distributions (over finite observed variables) in \emph{any} causal diagram. Using this result, we reduced the partial identification problem into a polynomial program and developed a novel algorithm to approximate the optimal asymptotic bounds over target counterfactual probabilities from finite samples obtained through arbitrary observations and experiments.

\clearpage
\appendix
\section{A. On the Expressive Power of Canonical Structural Causal Models} 
In this section, we provide the proof for \Cref{thm:cfinite} which establishes the expressive power of discrete SCMs in representing counterfactual distributions in an arbitrary causal diagram containing observed variables with finite domains.

Recall that $\2M(\G)$ and $\2N(\G)$ in \Cref{thm:cfinite} are collections of all SCMs and discrete SCMs (thereafter, canonical) compatible with a causal diagram $\G$ respectively. Since $\2N(\G) \subset \2M(\G)$, the reverse direction of \Cref{eq:ctf_eq} is self-evident. The main challenge here is to prove the other direction. That is, given an SCM $M \in \2M(\G)$ with arbitrary exogenous domains, we want to construct a discrete SCM $N \in \2N(\G)$ with finite exogenous domains such that $N$ and $M$ are both compatible with the same causal diagram $\G$ and induces the same set of counterfactual distributions $\*P^*$. 

To illustrate the idea of this constructive proof, consider as an example the ``Bow'' graph in \Cref{fig1d} where $X, Y$ are binary variables in $\{0, 1\}$; the exogenous variable $U$ takes values in the real numbers $\3R$. Let domain $\D_X$ be ordered by $h_X^{(1)} = 0$ and $h_X^{(2)} = 1$. We denote by $\D_X \mapsto \D_Y$ the set of all functions mapping from domains of $X$ to $Y$, i.e.,
\begin{equation}
\begin{aligned}
  &h_Y^{(1)}(x) = 0, &&h_Y^{(2)}(x) = x, \\
  &h_Y^{(3)}(x) = \neg x, &&h_Y^{(4)}(x) = 1.
\end{aligned}
\end{equation}
Consider the following families of SCMs:
\begin{enumerate}
  \item $\2M$ is the set of all SCMs compatible with the ``Bow'' graph in \Cref{fig1d}.
  \item $\2N$ is the set of all discrete SCMs compatible with the ``Bow'' graph in \Cref{fig1d} with the cardinality $|\D_U| = 8$.
\end{enumerate}
Our goal is to prove that $\2M$ and $\2N$ are counterfactually equivalent for binary $X, Y \in \{0, 1\}$. Since intervening on $Y$ has no causal effect on $X$ \citep{galles:pea98}, it is sufficient to show that for any SCM $M \in \2M$, one could construct a discrete SCM $N \in \2N$ so that 
\begin{align}
  P_M \left (x, y_{x = 0}, y_{x = 1} \right) = P_N \left (x, y_{x = 0}, y_{x = 1} \right). \label{eq:ctf_N}
\end{align} 
The construction procedure is described as follows. Let the exogenous variable $U$ in $N$ be a pair $(U_X, U_Y)$ where $U_X \in \{1, 2\}$ and $U_Y \in \{1, \dots, 4\}$. Values of $X$ and $Y$ are given by the following functions, respectively,
\begin{align}
  &x \gets h_X^{(u_X)}, &&y \gets h_Y^{(u_Y)}(x).
\end{align} 
It is verifiable that in such $N$, the counterfactual distribution $P(x, y_{x = 0}, y_{x = 1})$ is given by, for $\forall i, j, k \in \{0, 1\}$,
\begin{equation}
\begin{aligned}
  &P_N(X = i, Y_{x = 0} = j, Y_{x = 1} = k) \\
  &= P_N(U_X = i + 1, U_Y = 2j + k + 1). 
\end{aligned} \label{eq:ctf_N1}
\end{equation}
For any SCM $M \in \1M$, we define the exogenous distribution $P_N(u_X, u_Y)$ of the discrete SCM $N$ as, for $\forall i, j, k \in \{0, 1\}$,
\begin{equation}
\begin{aligned}
  &P_N(U_X = i + 1, U_Y = 2j + k + 1) \\
  &= P_M(X = i, Y_{x = 0} = j, Y_{x = 1} = k). 
\end{aligned} \label{eq:ctf_N2}
\end{equation}
It follows immediately from \Cref{eq:ctf_N1,eq:ctf_N2} that $M$ and $N$ induce the same counterfactual distribution $P(x, y_{x = 0}, y_{x = 1})$, i.e., the condition in \Cref{eq:ctf_N} holds. This means that when inferring counterfactual distributions in the ``Bow'' graph of \Cref{fig1d} with binary $X, Y$, we could assume that the exogenous variable $U$ is discrete and takes values in the domain $\left \{1, \dots, 8\right\}$, without loss of generality.

Our goal is to generalize the construction described above to any SCMs compatible with an arbitrary causal diagram. The remainder of this section is organized as follows. In Appendix A.1, we introduce a general canonical partitioning \citep{balke:pea94b} over exogenous domains for any SCMs with discrete endogenous variables. This allows us to write counterfactual distributions as functions of products of probabilities assigned to intersections of canonical partitions in every c-component. Appendix A.2 shows that probabilities over canonical partitions could be represented using discrete exogenous variables taking values in finite domains. This allows us to prove \Cref{thm:cfinite} for any causal diagram in the theoretical framework of measure-theoretic probability. Finally, we describe in Appendix A.3 a more fine-grained decomposition for canonical partitions, which provides intuitive explanations for the discretization procedure.

\subsection{A.1 Canonical Partitions of Exogenous Domains}
For every endogenous variable $V \in \*V$, let $\D_{\PA_V} \mapsto \D_V$ denote the hypothesis class containing all functions mapping from domains of $\PA_V$ to $V$. Since $\*V$ are discrete variables with finite domains, the cardinality of the class $\D_{\PA_V} \mapsto \D_V$ must be also finite. Given any configuration $U_V = u_V$, the induced function $f_V(\cdot, u_V)$ must correspond to a unique element in the hypothesis class $\D_{\PA_V} \mapsto \D_V$. Such mappings lead to a finite partition over the exogenous domain $\D_{U_V}$. 
\begin{definition}
  For an SCM $M = \langle \*V, \*U, \2F, P\rangle$, for every $V \in \*V$, let functions in $\D_{\PA_V} \mapsto \D_V$ be ordered by $\left \{h_V^{(i)} \mid i \in \*I_V \right \}$ where $\*I_V = \{1, \dots, m_V\}, m_V = |\D_{\PA_V} \mapsto \D_V|$. A \emph{equivalence class} $\1U_V^{(i)}$ for function $h_V^{(i)}$, $i = 1, \dots, m_V$, is a subset in $\D_{U_V}$ such that 
  \begin{align}
      \1U_V^{(i)} = \left \{u_V \in \D_{U_V} \mid f_V(\cdot, u_V) = h_V^{(i)}\right\}.
  \end{align}
\end{definition}
\begin{definition}[Canonical Partition]\label{def:canonical}
  For an SCM $M = \langle \*V, \*U, \2F, P\rangle$, $\left \{\1U_V^{(i)} \mid i \in \*I_V \right \}$ is the \emph{canonical partition} over exogenous domain $\D_{U_V}$ for every $V \in \*V$.
\end{definition}
\Cref{def:canonical} extends the canonical partition in \citep{balke:pea94b} which was designed for binary variables $X, Y, Z \in \{0, 1\}$ in the ``IV'' diagram of \Cref{fig1a}.

As exogenous variables $U_V$ vary along its domain, regardless of how complex the variation is, its only effect is to switch the functional relationship between $\Pa_V$ and $V$ among elements in class $\D_{\PA_V} \mapsto \D_V$. Formally,
\begin{restatable}{lemma}{lemcp}\label{lem:cp}
    For an SCM $M = \tuple{\*V, \*U, \2F, P}$, for each $V \in \*V$, function $f_V \in \2F$ could be decomposed as:
    \begin{align}
        f_V(\pa_V, u_V) = \sum_{i \in \*I_V} h^{(i)}_{V}(\pa_V) \I_{u_V \in \1U_{V}^{(i)}}. \label{eq:cp1}
    \end{align}
\end{restatable}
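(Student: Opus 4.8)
The plan is to read \Cref{eq:cp1} as nothing more than the statement that the canonical partition of \Cref{def:canonical} is a genuine partition of $\D_{U_V}$, so that the sum over $i \in \*I_V$ reduces, for each fixed $u_V$, to the single block that contains $u_V$. Accordingly, the first step is to verify that $\bigl\{\1U_V^{(i)} \mid i \in \*I_V\bigr\}$ covers $\D_{U_V}$ and consists of pairwise disjoint sets (some possibly empty). For coverage, I would note that for every $u_V \in \D_{U_V}$ the induced function $f_V(\cdot, u_V)$ is by construction a map $\D_{\PA_V} \to \D_V$, hence coincides with one of the enumerated functions $h_V^{(i)}$, so $u_V \in \1U_V^{(i)}$ for that $i$. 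For disjointness, if $u_V$ lay in both $\1U_V^{(i)}$ and $\1U_V^{(j)}$, then $h_V^{(i)} = f_V(\cdot, u_V) = h_V^{(j)}$, and since the $h_V^{(i)}$ are listed without repetition this forces $i = j$. Consequently each $u_V$ belongs to exactly one block; denote its index $i^\star = i^\star(u_V)$.

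With this in hand, the second step is immediate: fix arbitrary $\pa_V \in \D_{\PA_V}$ and $u_V \in \D_{U_V}$. Then $\I_{u_V \in \1U_V^{(i)}}$ equals $1$ when $i = i^\star$ and $0$ otherwise, so the right-hand side of \Cref{eq:cp1} collapses to the single term $h_V^{(i^\star)}(\pa_V)$. But $u_V \in \1U_V^{(i^\star)}$ means precisely that $f_V(\cdot, u_V) = h_V^{(i^\star)}$ as functions of $\PA_V$, and evaluating this equality at $\pa_V$ yields $f_V(\pa_V, u_V) = h_V^{(i^\star)}(\pa_V)$. Since $\pa_V$ and $u_V$ were arbitrary, \Cref{eq:cp1} holds identically, which completes the argument.

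There is essentially no real obstacle here, as the lemma is a direct reformulation of \Cref{def:canonical}; the only points that merit a sentence are (i) the handling of empty equivalence classes, which simply contribute indicator terms that vanish everywhere and hence do not affect the sum, and (ii) the implicit identification of the induced function $f_V(\cdot, u_V)$ with an element of the full hypothesis space $\D_{\PA_V}\mapsto\D_V$, which is legitimate because $\*V$ is discrete with finite domains, so this space is a finite set and the enumeration $\{h_V^{(i)} \mid i \in \*I_V\}$ is exhaustive. No measurability is required for \Cref{eq:cp1} itself, since it is a pointwise algebraic identity; the measurability of the blocks $\1U_V^{(i)}$ that is needed later for integrating against $P(\*u)$ can be deferred to the main construction of the canonical SCM.
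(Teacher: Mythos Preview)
Your proposal is correct and follows essentially the same approach as the paper: both arguments observe that the equivalence classes $\{\1U_V^{(i)}\}$ form a partition of $\D_{U_V}$, so for each fixed $u_V$ exactly one indicator survives and the sum collapses to $h_V^{(i^\star)}(\pa_V) = f_V(\pa_V, u_V)$. Your version is somewhat more explicit about verifying coverage and disjointness, but the underlying idea is identical.
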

\begin{proof}
  By the definition of canonical partitions (\Cref{def:canonical}), for every $i = 1, \dots, m_V$, fix any $u_V \in \1U_{V}^{(i)}$. We must have $f_V(\cdot, u_V) = h^{(i)}_V(\cdot)$. This implies $f_V(\pa_V, u_V) = h^{(i)}_V(\pa_V)$ for any $\PA_V = \pa_V$. Recall that $\left \{ \1U_{V}^{(i)} \mid i = 1, \dots, m_V \right \}$ forms a partition over exogenous domains $\D_{U_V}$. Given the same $\pa_V, u_V$, the r.h.s. of \Cref{eq:cp1} must equate to $h^{(i)}_V(\pa_V)$, which completes the proof.
\end{proof}
As an example, consider an SCM $M$ associated with the ``IV'' graph of \Cref{fig1a} where $X, Y, Z$ are binary variables contained in $\{0, 1\}$; $U_1, U_2$ are continuous variables drawn uniformly from the interval $[0, 3]$. Values of $X, Y, Z$ are decided by functions defined as follows, respectively,
\begin{equation}
  \begin{aligned}
    &x \gets f_X(z, u_2) = \I_{z \leq u_2 \leq z + 2}, \\
    &y \gets f_Y(x, u_2) = \I_{u_2 < x} + \I_{u_2 > x + 2},\\
    &z \gets f_Z(u_1) = \I_{u_1 \leq 1.5},
  \end{aligned} \label{eq:fxyz}
\end{equation}
We show in \Cref{fig:app_a1} the graphical representation of canonical partitions induced by functions $f_X, f_Y$ and $f_Z$ respectively. A detailed description is provided in \Cref{table:app_a1}.
It follows from the decomposition of \Cref{lem:cp} that functions $f_X, f_Y, f_Z$ in \Cref{eq:fxyz} could be written as follows:
\begin{align*}
    f_X(z, u_2) &=  \I_{u_2 \in [0, 1)} \neg z + \I_{u_2 \in [1, 2]} 1 + \I_{u_2 \in (2, 3]} z\\
    f_Y(x, u_2) &= \I_{u_2 \in [0, 1)} x + \I_{u_2 \in [1, 2]} 0 + \I_{u_2 \in (2, 3]} \neg x,\\
    f_Z(u_1) &= \I_{u_1 \in [0, 1.5]} 1 + \I_{u_1 \in (1.5, 3]} 0.
\end{align*}
Let $\*I$ denote the product of indexing sets $\vartimes_{V \in \*V} \*I_V$. For any index $\*i \in \*I$, we use $i_V$ to represent the element in $\*i$ restricted to $V \in \*V$. We omit the subscript $V$ when it is obvious; therefore, $\1U^{(i)}_V = \1U^{(i_V)}_V$, $h^{(i)}_V = h^{(i_V)}_V$. Our next result establishes a universal decomposition of counterfactual distributions in any SCM using canonical partitions.
\begin{restatable}{lemma}{lemctfcp1}\label{lem:ctf_cp1}
  For an SCM $M = \tuple{\*V, \*U, \2F, P}$, for any $\*Y, \dots, \*Z, \*X, \dots, \*W \subseteq \*V$\footnote{For an arbitrary subset $\1U \subseteq \D_{\*U}$, we will consistently use $P\left( \1U \right)$ as a shorthand for the probability $P\left( \*U \in \1U\right)$.},
  \begin{equation}    
  \begin{aligned}
    &P \left (\*y_{\*x}, \dots, \*z_{\*w} \right)\\
    &= \sum_{\*i \in \*I} \I_{\*Y_{\*x}(\*i) = \*y, \dots, \*Z_{\*w}(\*i) = \*z} P\left( \bigcap_{V \in \*V} \1U_V^{(i)} \right), 
  \end{aligned}\label{eq:ctf_cp1}
  \end{equation}
  where variables of the form $\*Y_{\*x}(\*i) = \left \{Y_{\*x}(\*i) \mid \forall Y \in \*Y \right \}$; every $Y_{\*x}(\*i)$ is recursively defined as:
  \begin{align}
  Y_{\*x}(\*i) = 
    \begin{dcases}
      \*x_Y &\mbox{if }Y \in \*X \\
      h_Y^{(i)} \left ( \left( \PA_{Y}\right)_{\*x}(\*i) \right) &\mbox{otherwise}
    \end{dcases} \label{eq:ctf_cp2}
  \end{align}
\end{restatable}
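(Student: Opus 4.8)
The plan is to refine the integral in \Cref{eq:cfd} along the canonical partitions and then argue that the integrand is piecewise constant on the cells of a product refinement. Write $C_{\*i} = \bigcap_{V \in \*V} \1U_V^{(i)}$ for each $\*i \in \*I$, where each $\1U_V^{(i)} = \1U_V^{(i_V)}$ is read as its cylinder extension in $\D_{\*U}$. Since, for every $\*u \in \D_{\*U}$ and every $V \in \*V$, the restriction $u_V$ lies in exactly one equivalence class $\1U_V^{(i_V)}$ (by \Cref{def:canonical}), the finite family $\set{C_{\*i} \mid \*i \in \*I}$ is a measurable partition of $\D_{\*U}$ — measurable because each $\1U_V^{(i)}$ is a level set of the map $u_V \mapsto f_V(\cdot, u_V)$ into the finite hypothesis class $\D_{\PA_V} \mapsto \D_V$. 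Splitting \Cref{eq:cfd} over this partition gives $P(\*y_{\*x}, \dots, \*z_{\*w}) = \sum_{\*i \in \*I} \int_{C_{\*i}} \I_{\*Y_{\*x}(\*u) = \*y, \dots, \*Z_{\*w}(\*u) = \*z}\, dP(\*u)$.

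The crux is the pointwise claim that for each fixed $\*i$, each $\*u \in C_{\*i}$, and each intervention $\doo(\*x)$ occurring in the query, the potential response $V_{\*x}(\*u)$ of every $V \in \*V$ equals the quantity $V_{\*x}(\*i)$ defined in \Cref{eq:ctf_cp2}. I would prove this by induction on a topological ordering of $\*V$ in $\G$, using the recursive characterization of $V_{\*x}(\*u)$ (\Cref{eq:sec3.3}). If $V \in \*X$, both sides are the constant $\*x_V$. If $V \notin \*X$, then $V_{\*x}(\*u) = f_V((\PA_V)_{\*x}(\*u), u_V)$; by the induction hypothesis $(\PA_V)_{\*x}(\*u) = (\PA_V)_{\*x}(\*i)$ componentwise, since every endogenous parent precedes $V$; and since $u_V \in \1U_V^{(i_V)}$, \Cref{lem:cp} (equivalently, the defining property of the equivalence class) gives $f_V(\cdot, u_V) = h_V^{(i_V)}(\cdot)$, so $V_{\*x}(\*u) = h_V^{(i_V)}((\PA_V)_{\*x}(\*i)) = V_{\*x}(\*i)$. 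Because $\doo(\*x)$ only overwrites the equations of nodes in $\*X$ and leaves the remaining $f_V$ untouched, the same argument applies verbatim to each of the worlds $\*x, \dots, \*w$ appearing in the query.

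Given the pointwise claim, the event $\set{\*Y_{\*x}(\*u) = \*y, \dots, \*Z_{\*w}(\*u) = \*z}$ is constant over each $C_{\*i}$ and coincides with $\set{\*Y_{\*x}(\*i) = \*y, \dots, \*Z_{\*w}(\*i) = \*z}$, so $\int_{C_{\*i}} \I_{\*Y_{\*x}(\*u) = \*y, \dots}\, dP(\*u) = \I_{\*Y_{\*x}(\*i) = \*y, \dots}\, P(C_{\*i})$; summing over $\*i \in \*I$ yields \Cref{eq:ctf_cp1}. I expect the main obstacle to be bookkeeping rather than conceptual: because the argument sets $U_V \subseteq \*U$ may overlap, one must consistently treat $\1U_V^{(i_V)}$, its cylinder, and the cells $C_{\*i}$ as subsets of $\D_{\*U}$, and one must run the topological induction simultaneously across all of the (possibly several) distinct counterfactual worlds appearing in $(\*y_{\*x}, \dots, \*z_{\*w})$. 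The measure-theoretic steps are routine, since $\*I$ is finite and each $C_{\*i}$ is measurable.
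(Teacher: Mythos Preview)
Your proposal is correct and follows essentially the same approach as the paper: both arguments partition $\D_{\*U}$ by the cells $C_{\*i}=\bigcap_{V}\1U_V^{(i)}$ and use structural induction along the DAG (you via a topological order, the paper via $\max_{Y}|\An(Y)_{\G_{\overline{\*X}}}|$) together with \Cref{lem:cp} to show that each potential response $V_{\*x}(\*u)$ is constant on $C_{\*i}$ and equal to $V_{\*x}(\*i)$. Your version states the pointwise equality $V_{\*x}(\*u)=V_{\*x}(\*i)$ directly, whereas the paper formulates the equivalent indicator identity $\I_{\*Y_{\*x}(\*u)=\*y}=\sum_{\*i}\I_{\*Y_{\*x}(\*i)=\*y}\prod_{V}\I_{u_V\in\1U_V^{(i)}}$ before integrating, but this is a cosmetic difference.
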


\begin{figure}[t]
  \centering
  \begin{subfigure}{0.95\linewidth}\centering
  \resizebox{\linewidth}{!}{
  \begin{tikzpicture}
    \draw[->, >={Latex}] (-0.5,0) -- (3.5,0) node[below] {$u_2$}; 

    \fill[fill=lightred] (2,0.01) rectangle (3,0.5);
    \fill[fill=lightblue] (0,0.01) rectangle (1,0.5);
    \fill[fill=lightgreen] (1,0.01) rectangle (2,0.5);

    \node (r1) at (2.5, 0.33) {\scalebox{0.7}{$\1U^{(2)}_X$}};	
    \node (h1) at (2.5, 0.1) {\scalebox{0.7}{$x \gets z$}};

    \node (r1) at (0.5, 0.33) {\scalebox{0.7}{$\1U^{(3)}_X$}};	
    \node (h1) at (0.5, 0.1) {\scalebox{0.7}{$x \gets \neg z$}};

    \node (r1) at (1.5, 0.33) {\scalebox{0.7}{$\1U^{(4)}_X$}};	
    \node (h1) at (1.5, 0.1) {\scalebox{0.7}{$x \gets 1$}};

    \draw[thick, dashed, -] (0, 0) -- (0, 0.5);
    \draw[thick, dashed, -] (1, 0) -- (1, 0.5);
    \draw[thick, dashed, -] (2, 0) -- (2, 0.5);
    \draw[thick, dashed, -] (3, 0) -- (3, 0.5);

    \node [below] at (0, 0) {\scalebox{0.7}{0}};	
    \node [below] at (1, 0) {\scalebox{0.7}{1}};	
    \node [below] at (2, 0) {\scalebox{0.7}{2}};
    \node [below] at (3, 0) {\scalebox{0.7}{3}};	
  \end{tikzpicture}
  }
  \caption{$x \gets f_X(z, u_2)$}
  \label{fig:app_a1a}
\end{subfigure}\hfill

\begin{subfigure}{0.95\linewidth}\centering
  \resizebox{\linewidth}{!}{
  \begin{tikzpicture}
    \draw[->, >={Latex}] (-0.5,0) -- (3.5,0) node[below] {$u_2$}; 

    \fill[fill=lightblue] (2,0.01) rectangle (3,0.5);
    \fill[fill=lightred] (0,0.01) rectangle (1,0.5);
    \fill[fill=none] (1,0.01) rectangle (2,0.5);

    \node (r1) at (2.5, 0.33) {\scalebox{0.7}{$\1U^{(3)}_Y$}};	
    \node (h1) at (2.5, 0.1) {\scalebox{0.7}{$y \gets \neg x$}};

    \node (r1) at (0.5, 0.33) {\scalebox{0.7}{$\1U^{(2)}_Y$}};	
    \node (h1) at (0.5, 0.1) {\scalebox{0.7}{$y \gets x$}};

    \node (r1) at (1.5, 0.33) {\scalebox{0.7}{$\1U^{(1)}_Y$}};	
    \node (h1) at (1.5, 0.1) {\scalebox{0.7}{$y \gets 0$}};

    \draw[thick, dashed, -] (0, 0) -- (0, 0.5);
    \draw[thick, dashed, -] (1, 0) -- (1, 0.5);
    \draw[thick, dashed, -] (2, 0) -- (2, 0.5);
    \draw[thick, dashed, -] (3, 0) -- (3, 0.5);

    \node [below] at (0, 0) {\scalebox{0.7}{0}};	
    \node [below] at (1, 0) {\scalebox{0.7}{1}};	
    \node [below] at (2, 0) {\scalebox{0.7}{2}};
    \node [below] at (3, 0) {\scalebox{0.7}{3}};	
  \end{tikzpicture}
  }
  \caption{$y \gets f_Y(x, u_2)$}
  \label{fig:app_a1b}
  \end{subfigure}\hfill

\begin{subfigure}{0.95\linewidth}\centering
  \resizebox{\linewidth}{!}{
  \begin{tikzpicture}
    \draw[->, >={Latex}] (-0.5,0) -- (3.5,0) node[below] {$u_1$}; 

    \fill[fill=lightred] (0,0.01) rectangle (1.5,0.5);

    \node (r1) at (2.25, 0.33) {\scalebox{0.7}{$\1U^{(1)}_Z$}};	
    \node (h1) at (2.25, 0.1) {\scalebox{0.7}{$z \gets 0$}};

    \node (r1) at (0.75, 0.33) {\scalebox{0.7}{$\1U^{(2)}_Z$}};	
    \node (h1) at (0.75, 0.1) {\scalebox{0.7}{$z \gets 1$}};

    \draw[thick, dashed, -] (0, 0) -- (0, 0.5);
    \draw[thick, dashed, -] (1.5, 0) -- (1.5, 0.5);
    \draw[thick, dashed, -] (3, 0) -- (3, 0.5);

    \node [below] at (0, 0) {\scalebox{0.7}{0}};
    \node [below] at (1.5, 0) {\scalebox{0.7}{1.5}};	
    \node [below] at (3, 0) {\scalebox{0.7}{3}};	
  \end{tikzpicture}
  }
  \caption{$z \gets f_Z(u_1)$}
  \label{fig:app_a1c}
  \end{subfigure}
  \caption{Canonical partitions for exogenous domains over $U_1, U_2$ induced by functions of $X, Y, Z$ defined in \Cref{eq:fxyz}.}
  \label{fig:app_a1}
\end{figure}
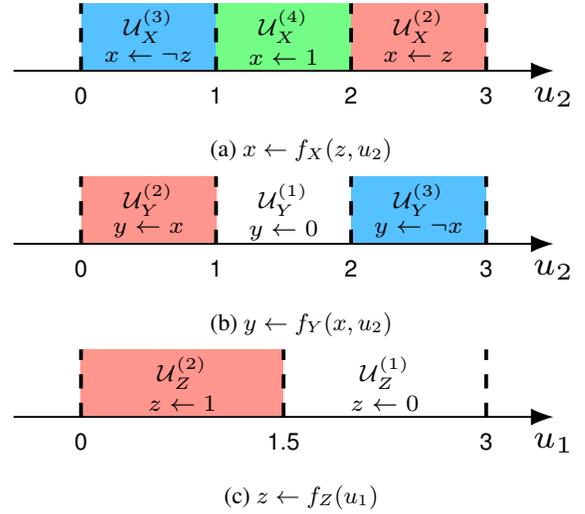
\begin{table}[t]
  \centering
  \begin{subfigure}{0.95\linewidth}\centering
  \begin{tabular}{|l||*{3}{p{1.7cm}|}} \hline
    &$0 \leq U_2 < 1$ &$1 \leq U_2 \leq 2$ & $2 < U_2 \leq 3$ \\\hline
    $Z = 0$ & $X = 1$ & $X = 1$ & $X = 0$ \\\hline
    $Z = 1$ & $X = 0$ & $X = 1$ & $X = 1$ \\\hline
    \end{tabular}
    \caption{$x \gets f_X(z, u_2)$}
    \label{table:app_a1a}\hfill
  \end{subfigure}
  \begin{subfigure}{0.95\linewidth}\centering
  \begin{tabular}{|l||*{3}{p{1.7cm}|}} \hline
    &$0 \leq U_2 < 1$ &$1 \leq U_2 \leq 2$ & $2 < U_2 \leq 3$ \\\hline
    $X = 0$ & $Y = 0$ & $Y = 0$ & $Y = 1$ \\\hline
    $X = 1$ & $Y = 1$ & $Y = 0$ & $Y = 0$ \\\hline
    \end{tabular}
    \caption{$y \gets f_Y(x, u_2)$}
    \label{table:app_a1b}\hfill
  \end{subfigure}
  \begin{subfigure}{0.95\linewidth}\centering
  \begin{tabular}{|*{2}{p{2cm}|}} \hline
    $0 \leq U_1 < 1.5$ &$1.5 \leq U_1 \leq 3$ \\\hline
    $Z = 1$ & $Z = 0$ \\\hline
    \end{tabular}
    \caption{$z \gets f_Z(u_1)$}
    \label{table:app_a1c}
  \end{subfigure}
  \caption{Canonical partitions for exogenous domains over $U_1, U_2$ induced by functions of $X, Y, Z$ defined in \Cref{eq:fxyz}.}
  \label{table:app_a1}
\end{table}

\begin{proof}
  We will first prove the following claims: for arbitrary subsets $\*Y, \*X \subseteq \*V$, for any $\*u, \*x, \*y$, 
  \begin{align}
      \I_{\*Y_{\*x}(\*u) = \*y} = \sum_{\*i \in \*I}\I_{\*Y_{\*x}(\*i) = \*y} \prod_{V \in \*V}\I_{u_V \in \1U^{(i)}_V}. \label{eq:ctfcp3}
  \end{align}
  Let $\G_{\overline{\*X}}$ be a subgraph obtained from the causal diagram $\G$ by removing all incoming arrows of $\*X$. We will prove \Cref{eq:ctfcp3} by induction on $n = \max_{Y \in \*Y} \left|\An(Y)_{\G_{\overline{\*X}}} \right|$.
  \paragraph{Base Case $n = 1$.} Recall that an intervention $\doo(\*x)$ set values of variables $\*X$ as constants $\*x$. For any $Y \in \*X \cap \*Y$, let $\*x_Y$ be the values assigned to $Y$ in $\*x$. It is verifiable that
  \begin{align}
    \I_{Y_{\*x}(\*u) = y} = \I_{y = \*x_Y} \label{eq:ctf_cp1.1}
  \end{align}
  As for every variable $Y \in \*Y \setminus \*X$, we must have its parent nodes $\PA_Y = \emptyset$ since $n = 1$. This implies
  \begin{align}
      \I_{Y_{\*x}(\*u) = y} = \I_{f_Y(u_Y) = y}=\sum_{i \in \*I_Y} \I_{h^{(i)}_Y = y} \I_{u_Y \in \1U^{(i)}_Y} \label{eq:ctf_cp1.2}
  \end{align}
  The last step follows from the decomposition in \Cref{lem:cp}. \Cref{eq:ctf_cp1.1,eq:ctf_cp1.2} together imply that
  \begin{align*}
      &\I_{\*Y_{\*x}(\*u) = \*y} \\
      &= \sum_{i \in \*I} \prod_{Y\in \*Y \cap \*X}\I_{y = \*x_Y}\prod_{Y \in (\*Y \setminus \*X)} \I_{h^{(i)}_Y = y} \prod_{V \in \*V}\I_{u_V \in \1U^{(i)}_V} \\
      &= \sum_{i \in \*I} \I_{\*Y_{\*x}(\*i) = \*y} \prod_{V \in \*V}\I_{u_V \in \1U^{(i)}_V}. 
  \end{align*}
  The last step follows from the definition of variables $\*Y_{\*x}(\*i)$ in \Cref{eq:ctf_cp2} given an index $\*i \in \*I$. 

  \paragraph{Induction Case $n = k+1$.} Assume that \Cref{eq:ctfcp3} holds for $n = k$. We will prove for the case $n = k+1$. For every $Y \in \*X \cap \*Y$, $\I_{Y_{\*x}(\*u) = y}$ is given in \Cref{eq:ctf_cp1.1}. For every $Y \in \*Y \setminus \*X$, the decomposition in \Cref{lem:cp} implies:
  \begin{align*}
      &\I_{Y_{\*x}(\*u) = y} \\
      &= \I_{f_Y(\left( \PA_{Y}\right)_{\*x}(\*u), u_Y) = y} \\
      &= \I\left \{y = \sum_{i \in \*I_Y}h^{(i)}_Y(\left( \PA_{Y}\right)_{\*x}(\*u)) \I_{u_Y \in \1U^{(i)}_Y} \right\} \\
      &= \sum_{i \in \*I_Y} \I_{h^{(i)}_Y(\left( \PA_{Y}\right)_{\*x}(\*u)) = y} \I_{u_Y \in \1U^{(i)}_Y}\\
      &= \sum_{i \in \*I_Y} \sum_{\pa_Y} \I_{h^{(i)}_Y(\pa_Y) = y} \I_{\left( \PA_{Y}\right)_{\*x}(\*u) = \pa_Y} \I_{u_Y \in \1U^{(i)}_Y}.
  \end{align*}
  The last step hold by conditioning on events $\left( \PA_{Y}\right)_{\*x}(\*u) = \pa_Y$, $\forall \pa_Y \in \D_{\PA_Y}$. Since we assume \Cref{eq:ctfcp3} holds for Case $n = k$, the above equation could be further written as
  \begin{align*}
      \I_{Y_{\*x}(\*u) = y} = & \sum_{i \in \*I_Y} \sum_{\pa_Y}\I_{h^{(i)}_Y(\pa_Y) = y} \I_{u_Y \in \1U^{(i)}_Y}\\
      &\cdot \sum_{i \in \*I} \I_{\left( \PA_{Y}\right)_{\*x}(\*u) = \pa_Y} \prod_{V \in \*V}\I_{u_V \in \1U^{(i)}_V}
  \end{align*}
  A few simplification gives:
  \begin{align}
    &\I_{Y_{\*x}(\*u) = y} \notag \\
    &= \sum_{i \in \*I} \sum_{\pa_Y} \I_{h^{(i)}_Y(\pa_Y) = y} \I_{\left( \PA_{Y}\right)_{\*x}(\*u)= \pa_Y}  \prod_{V \in \*V} \I_{u_V \in \1U^{(i)}_V} \notag \\
    &= \sum_{i \in \*I} \I_{h^{(i)}_Y(\left( \PA_{Y}\right)_{\*x}(\*u) ) = y} \prod_{V \in \*V}\I_{u_V \in \1U^{(i)}_V}. \label{eq:ctf_cp1.3}
  \end{align}
  \Cref{eq:ctf_cp1.1,eq:ctf_cp1.3} together imply that
  \begin{align*}
      &\I_{\*Y_{\*x}(\*u) = \*y} \\
      &= \sum_{i \in \*I}  \left (\prod_{Y \in \*Y \cap \*X} \I_{y = \*x_Y} \prod_{Y \in (\*Y \setminus \*X)} \I_{h^{(i)}_Y(\left( \PA_{Y}\right)_{\*x}(\*u)) = y} \right) \\
      &\cdot \prod_{V \in \*V}\I_{u_V \in \1U^{(i)}_V} \\
      &= \sum_{i \in \*I} \I_{\*Y_{\*x}(\*i) = \*y} \prod_{V \in \*V}\I_{u_V \in \1U^{(i)}_V}.
  \end{align*}
  Again, the last step follows from the definition of variables $\*Y_{\*x}(\*i)$ in \Cref{eq:ctf_cp2} given an index $\*i \in \*I$. 

  We are now ready to prove \Cref{eq:ctf_cp1}. The statement of \Cref{eq:ctfcp3} implies that for any $\*Y, \dots, \*Z, \*X, \dots, \*W \subseteq \*V$, 
  \begin{align*}
      &P \left (\*y_{\*x}, \dots, \*z_{\*w} \right) \\
      &= \int_{\D_{\*U}} \I_{\*Y_{\*x}(\*u) = \*y, \dots, \*Z_{\*w}(\*u) = \*z} dP(\*u)\\
      &= \int_{\D_{\*U}} \left (\sum_{\*i \in \*I}\I_{\*Y_{\*x}(\*i) = \*y} \prod_{V \in \*V}\I_{u_V \in \1U^{(i)}_V}\right) \wedge \\
      &\cdots \wedge \left(\sum_{\*i \in \*I} \I_{\*Z_{\*w}(\*i) = \*z} \prod_{V \in \*V}\I_{u_V \in \1U^{(i)}_V} \right)dP(\*u)
  \end{align*}
  Simplifying the above equation gives:
  \begin{align*}
    &P \left (\*y_{\*x}, \dots, \*z_{\*w} \right)\\
    &=\int_{\D_{\*U}} \sum_{\*i \in \*I} \I_{\*Y_{\*x}(\*i) = \*y} \wedge \dots \wedge \I_{\*Z_{\*w}(\*i) = \*z} \prod_{V \in \*V}\I_{u_V \in \1U^{(i)}_V} dP(\*u)\\
    &=\sum_{\*i \in \*I} \I_{\*Y_{\*x}(\*i) = \*y} \wedge \dots \wedge \I_{\*Z_{\*w}(\*i) = \*z}\int_{\D_{\*U}} \prod_{V \in \*V}\I_{u_V \in \1U^{(i)}_V} dP(\*u)\\
    &= \sum_{\*i \in \*I} \I_{\*Y_{\*x}(\*i) = \*y, \dots, \*Z_{\*w}(\*i) = \*z} P\left( \bigcap_{V \in \*V} \1U_V^{(i)} \right).
  \end{align*}
  In the above equations, the last two steps hold since variables $\*Y_{\*x}(\*i), \dots, \*Z_{\*w}(\*i)$ are not functions of exogenous variables $\*U$. This completes the proof.
\end{proof}
Let $\1C(\G)$ denote the collection of all maximal c-components (\Cref{def:c-component}) in a causal diagram $\G$. For instance, in the ``IV'' diagram $\G$ of \Cref{fig1a}, $\1C(\G)$ contains c-components $\*C(U_1) = \{Z\}$, $\*C(U_2) = \{X, Y\}$. The following proposition shows that probabilities over canonical partitions factorize over c-components in a causal diagram.
\begin{restatable}{lemma}{lemctfcp2}\label{lem:ctf_cp2}
  For an SCM $M = \tuple{\*V, \*U, \2F, P}$, let $\G$ be the associated causal diagram. For any $\*i \in \*I$,
  \begin{align}
    P\left( \bigcap_{V \in \*V} \1U_V^{(i)} \right) = \prod_{\*C \in \1C(\G)} P\left( \bigcap_{V \in \*C} \1U_V^{(i)} \right).
  \end{align}
\end{restatable}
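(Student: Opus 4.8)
The plan is to combine two ingredients: (i) the maximal c-components of $\G$ partition $\*V$ and, correspondingly, partition the relevant exogenous variables of $\*U$ into disjoint blocks; and (ii) each canonical cell $\1U_V^{(i)}$, read as a cylinder event in $\D_{\*U}$, lives in the $\sigma$-algebra generated by the block to which $V$ belongs. Since the exogenous variables are mutually independent ($P(\*U)=\prod_{U\in\*U}P(U)$), the per-c-component events are then mutually independent and the product decomposition drops out.

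First I would record that the relation ``$V,V'$ belong to a common c-component'' is an equivalence relation on $\*V$, so that $\1C(\G)=\{\*C_1,\dots,\*C_k\}$ genuinely partitions $\*V$ (see \citep{tian:pea02}). Reflexivity is the length-one sequence; the only delicate point is transitivity: given $V\in\*C(U)\cap\*C(U')$, reverse a sequence witnessing $W\in\*C(U)$ back to $U$ (legal because the ``common child'' relation on $\*U$ is symmetric), splice it to the sequence from $U$ to an exogenous parent of $V$, then through the shared child $V$ to $U'$; this produces a sequence witnessing $W\in\*C(U')$, so $\*C(U)=\*C(U')$. For each $j$ put $\*U_j=\bigcup_{V\in\*C_j}U_V$. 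The structural claim I would prove next is that these blocks are pairwise disjoint: if $U\in U_V\cap U_{V'}$ then the length-one sequence $\{U\}$ witnesses $V,V'\in\*C(U)$, so $V,V'$ lie in the same c-component; hence an exogenous variable parenting a member of $\*C_j$ parents no member of any other $\*C_{j'}$, and $\*U_1,\dots,\*U_k$ are disjoint (childless exogenous variables, if any, contribute only the trivial factor and are ignored).

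Then the conclusion is routine measure theory. For $V\in\*C_j$ the cell $\1U_V^{(i)}\subseteq\D_{U_V}$, viewed as a subset of $\D_{\*U}$, is a finite intersection over $\pa_V\in\D_{\PA_V}$ of preimages of the measurable maps $f_V(\pa_V,\cdot)$, hence $\sigma(U:U\in\*U_j)$-measurable; therefore $A_j:=\bigcap_{V\in\*C_j}\1U_V^{(i)}$ is $\sigma(\*U_j)$-measurable. Disjointness of the index sets $\*U_j$ together with mutual independence of the family $\{U:U\in\*U\}$ makes the vectors $(\*U_j)_{j=1}^k$ — and hence the events $A_1,\dots,A_k$ — mutually independent. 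Since $\{\*C_j\}$ partitions $\*V$ we have $\bigcap_{V\in\*V}\1U_V^{(i)}=\bigcap_{j=1}^k A_j$, and multiplicativity of $P$ over independent events yields $P\big(\bigcap_{V\in\*V}\1U_V^{(i)}\big)=\prod_j P(A_j)=\prod_{\*C\in\1C(\G)}P\big(\bigcap_{V\in\*C}\1U_V^{(i)}\big)$.

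The main obstacle is the graph-theoretic bookkeeping in the middle step — confirming that c-components cut $\G$ cleanly into blocks with no exogenous variable straddling two components — after which independence and multiplicativity are immediate; the only other thing to state carefully is that grouping a mutually independent family of (possibly continuous) exogenous variables along a partition preserves independence, and that the cylinder events $\1U_V^{(i)}$ are measurable, both standard given that the $\D_U$ are arbitrary.
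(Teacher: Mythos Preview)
Your proposal is correct and follows essentially the same approach as the paper: define $U_{\*C}=\bigcup_{V\in\*C}U_V$, argue that distinct maximal c-components have disjoint exogenous blocks, and then invoke mutual independence of $\*U$ to factor the probability. The paper's own proof is a two-sentence sketch that asserts the disjointness of $U_{\*C_1},U_{\*C_2}$ as ``verifiable from the definition'' and immediately appeals to independence, whereas you spell out the equivalence-relation structure, the disjointness argument, and the measurability of the cylinder events; the content is the same.
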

\begin{proof}
  For any c-compoment $\*C \in \1C(\G)$, let $U_{\*C} = \cup_{V \in \*C} U_V$ the set of exogenous variables affecting (at least one of) endogenous variables in $\*C$. By the definition of c-components (\Cref{def:c-component}), it is verifiable that for two different c-compoments $\*C_1, \*C_2 \in \1C(\G)$, their corresponding exogenous variables $U_{\*C_1}, U_{\*C_2}$ do not share any element, i.e., $U_{\*C_1} \cap U_{\*C_2} = \emptyset$. We complete the proof by noting that exogenous variables in $\*U$ are mutually independent. 
\end{proof}
As an example, consider again the SCM $M$ compatible with \Cref{fig1a} defined in \Cref{eq:fxyz}. The event $Z = 1, X_{z=0} = 1, Y_{x=1}=0$ occurs if any only if $U_1 \in \1U_Z^{(2)}$ and $U_2 \in \left (\1U_X^{(3)} \cup \1U_X^{(4)} \right ) \cap \left (\1U_Y^{(1)} \cup \1U_Y^{(3)} \right)$. This implies
\begin{align*}
  &P\left(Z = 1, X_{z=0} = 1, Y_{x=1} = 0 \right) \\
  &= P\left ( \1U_Z^{(2)} \cap (\1U_X^{(3)} \cup \1U_X^{(4)}) \cap (\1U_Y^{(1)} \cup \1U_Y^{(3)}) \right)\\
  &= P\left ( \1U_Z^{(2)} \right) P\left (\1U_X^{(3)} \cup \1U_X^{(4)}) \cap (\1U_Y^{(1)} \cup \1U_Y^{(3)}) \right).
\end{align*}
The last step holds since $\{Z\}$ and $\{X, Y\}$ are two different c-components. It is verifiable from \Cref{fig:app_a1} that $\1U_Z^{(2)} = \{u_1 \in [0, 1.5]\}$, $\left (\1U_X^{(3)} \cup \1U_X^{(4)} \right ) \cap \left (\1U_Y^{(1)} \cup \1U_Y^{(3)} \right) = \{u_2 \in [1, 2] \}$. The above equation could be further written as:
\begin{align*}
  &P\left(Z = 1, X_{z=0} = 1, Y_{x=1} = 0 \right) \\
  &= P\left ( U_1 \in [0, 1.5] \right) P\left(U_2 \in [1, 2] \right) = \frac{1}{6}.
\end{align*}
The last step follows since variables $U_1, U_2$ are drawn uniformly at random over the interval $[0, 3]$. 

\subsection{A.2 Bounding Cardinalities of Exogenous Domains}
\Cref{lem:ctf_cp1,lem:ctf_cp2} together allow us to write any counterfactual distribution in an SCM as a function of products of probabilities assigned to the intersections of canonical partitions in every c-component. 
To prove the counterfactual equivalence in \Cref{thm:cfinite}, it is thus sufficient to construct a canonical SCM $N$ from an arbitrary SCM $M$ such that (1) $M, N$ are compatible with the same causal diagram $\G$; and (2) $M, N$ generate the same probabilities over canonical partitions. This section will describe how to construct such a discrete SCM.


We start the discussion by introducing some necessary notations and concepts. The probability distribution for every exogenous variable $U \in \*U$ is characterized with a probability space. It is frequently designated $\tuple{\D_U, \1F_U, P_U}$ where $\D_U$ is a sample space containing all possible outcomes; $\1F_U$ is a $\sigma$-algebra containing subsets of $\D_U$; $P_U$ is a probability measure on $\1F_U$ normalized by $P_U(\D_U) = 1$. Elements of $\1F_U$ are called \emph{events}, which are closed under operations of set complement and unions of countably many sets. By means of $P_U$, a real number $P_U(\1A) \in [0, 1]$ is assigned to every event $\1A \in \1F_U$; it is called the probability of event $\1A$.

For an arbitrary set of exogenous variables $\*U$, its realization $\*U = \*u$ is an element in the Cartesian product $\bigtimes_{U \in \*U} \D_U$, represented by a sequence $(u)_{U \in \*U}$. If now $\1A_U \in \D_U$, $\forall U \in \*U$, we may be interested in inferring whether a sequence of events $U \in \1A_U$ for every $U \in \*U$ occurs. Such an event is represented by a subset $\bigtimes_{U \in \*U} \1A_U \subseteq \bigtimes_{U \in \*U} \D_U$. The products $\times_{U \in \*U} \1A_U$ with $\1A_U$ running through $\1F_U$ generate precisely the product $\sigma$-algebra $\bigotimes_{U \in \*U} \1F_U$. The product measure $\bigotimes_{U\in \*U} P_U$ is the only probability measure $P$ with restrictions to $\bigotimes_{U \in \*U} \1F_U$ that satisfies the following consistency condition
\begin{align}
  P\left (\bigtimes_{U \in \*U} \1A_U \right) = \prod_{U \in \*U}P_U\left (\1A_U \right),
\end{align}
for arbitrary $\1A_U \in \1F_U$. It is obvious that $P$ is a probability measure. Consequently,
\begin{align}
  \left \langle \bigtimes_{U \in \*U} \D_U, \bigotimes_{U \in \*U} \1F_U, \bigotimes_{U \in \*U} P_U \right \rangle
\end{align}
defines the product of probability spaces $\langle \D_U, \1F_U, P_U \rangle$, $U \in \*U$. It is adequate to describe all ``measurable events'' occurring to exogenous variables $\*U$.

Recall that for subsets $\*X, \*Y \subseteq \*V$, counterfactual random variables (or potential responses) $\*Y_{\*x}(\*u)$ is defined as the solution of $\*Y$ in the submodel $M_{\*x}$ induced by intervention $\doo(\*x)$ given the configuration $\*U = \*u$. For any $\*y \in \D_{\*Y}$, let the inverse image $\*Y^{-1}_{\*x}(\*y)$ be the set of values $\*u$ generating the event $\*Y_{\*x}(\*u) = \*y$, i.e., 
\begin{align}
  \*Y^{-1}_{\*x}(\*y) = \left \{\*u \in \D_{\*U} \mid \*Y_{\*x}(\*u) = \*y\right \}.
\end{align}
Evidently, we are dealing with a $\bigotimes_{U \in \*U} \1F_U$-measurable mapping $\*Y_{\*x}: \*u \mapsto \*y$. Because of this measurability, the inverse image $\*Y^{-1}_{\*x}(\*y)$ is an event in $\bigotimes_{U \in \*U} \1F_U$ for any realization $\*y$. Thus $P\left (\*Y^{-1}_{\*x}(\*y) \right)$ is defined as the probability of $\*Y_{\*x}$ taking on a value $\*y$. Similarly, for any subsets $\*Y, \dots, \*Z$, $\*X, \dots, \*W \subseteq \*V$, the probability of a sequence of counterfactual events $\*Y_{\*x} = \*y, \dots, \*Z_{\*w} = \*z$ is defined as:
\begin{align*}
  P \left (\*y_{\*x}, \dots, \*z_{\*w} \right) = P \left (\*Y^{-1}_{\*x}(\*y) \cap \cdots \cap \*Z^{-1}_{\*w}(\*z) \right). \label{eq:product}
\end{align*}
We refer readers to \citep{durrett2019probability,bauer1972probability} for a detailed discussion on measure-theoretic probability concepts.

For a c-component $\*C$ in a causal diagram $\G$, we denote by $U_{\*C} = \cup_{V \in \*C}U_V$ the union of exogenous variables $U_V$ affecting an endogenous variable $V$ for every $V \in \*C$. Let exogenous variables in $U_{\*C}$ be ordered by $U_1, \dots, U_m$, $m = |U_{\*C}|$. For convenience, we consistently write $\langle \D_i, \1F_i, P_i\rangle$ as the probability space of $U_i$, $i = 1, \dots, m$. The product of these probability spaces is thus written as
\begin{align}
  \left \langle \bigtimes_{i = 1}^m \D_i, \bigotimes_{i = 1}^m \1F_i, \bigotimes_{i = 1}^m P_i \right \rangle.
\end{align}
For any SCM $M$ compatible with the diagram $\G$, the joint distribution over events defined by canonical partitions $\1U^{(i)}_V$ associated with variables $V \in \*C$ is given by
\begin{align}
  P\left( \bigcap_{V \in \*C} \1U_V^{(i)} \right) = \int_{\bigtimes_{i = 1}^m \D_i} \prod_{V \in \*C} \I_{u_V \in \1U_V^{(i)} } d \left(\bigotimes_{i = 1}^m P_i \right). \label{eq:ctf_cp3}
\end{align}
Our goal is to show that all correlations among events $\1U_V^{(i)}$, $V \in \*V$, induced by exogenous variables described by arbitrary probability spaces could be produced by a ``simpler'' generative process with discrete exogenous domains. 
\begin{lemma}\label{lem:ctf_cp3}
  Any distribution $P\left( \bigcap_{V \in \*C} \1U_V^{(i)} \right)$ in \Cref{eq:ctf_cp3} could be reproduced with a generic model of the form:
  \begin{align}
    P\left( \bigcap_{V \in \*C} \1U_V^{(i)} \right) = \sum_{j = 1}^m \sum_{u_j = 1}^{d} \prod_{V \in \*C} \I_{u_V \in \1U_V^{(i)} } \prod_{j = 1}^m P(u_j), \label{eq:ctf_cp4}
  \end{align}
  where every exogenous variable $U_j \in \*U$ takes values in a finite domain $\{1, \dots, d\}$, $d = \prod_{V \in \*C} |\D_{\PA_V} \mapsto \D_V|$.
\end{lemma}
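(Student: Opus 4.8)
\emph{Proof plan.} The plan is to keep the structural functions of $M$ fixed and to shrink the law of each exogenous variable $U_1,\dots,U_m$ feeding $\*C$ to finitely many atoms, by repeated use of Carath\'eodory's theorem. Write $m_V=|\D_{\PA_V}\mapsto\D_V|$ and let $\*R=(R_V)_{V\in\*C}$ be the vector of response--function indices, so that $R_V$ is the index $i$ with $U_V\in\1U_V^{(i)}$; then $\*R$ takes values in a finite set of cardinality $d=\prod_{V\in\*C}m_V$, and the whole collection of numbers $\{P(\bigcap_{V\in\*C}\1U_V^{(i)})\}$ is exactly the law $Q$ of $\*R$, viewed as a point of the simplex $\Delta^{d-1}$. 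Hence it is enough to reproduce $Q$ after replacing each $P_{U_j}$ by a law supported on at most $d$ points; relabelling those points as $\{1,\dots,d\}$, padding with zero--probability states, and reading off the canonical partitions induced by the restricted functions then yields precisely the discrete model in \Cref{eq:ctf_cp4}, whose value equals the integral in \Cref{eq:ctf_cp3}.

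For the shrinking step I would peel off one exogenous variable at a time. Freeze $P_{U_2},\dots,P_{U_m}$ together with all functions, and let $\psi(u_1)\in\Delta^{d-1}$ denote the law of $\*R$ in the model obtained by clamping $U_1$ to the value $u_1$. Since $Q=\int_{\D_{U_1}}\psi(u_1)\,dP_{U_1}(u_1)$ is the barycentre of the pushforward of $P_{U_1}$ under $\psi$, it lies in the closed convex hull of $\{\psi(u_1):u_1\in\D_{U_1}\}\subseteq\Delta^{d-1}$; as this set lies in a $(d-1)$--dimensional affine subspace, Carath\'eodory's theorem furnishes weights $\lambda_1,\dots,\lambda_d\ge 0$ with $\sum_l\lambda_l=1$ and points $q_1,\dots,q_d$ of the set with $Q=\sum_{l=1}^d\lambda_l q_l$. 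Writing $q_l=\psi(u_1^{(l)})$ and redefining $P_{U_1}:=\sum_l\lambda_l\,\delta_{u_1^{(l)}}$ leaves $Q$ unchanged, because $Q$ depends linearly on $P_{U_1}$, while making $U_1$ discrete with at most $d$ states. Iterating over $j=2,\dots,m$ — each step is legitimate since it only shrinks the remaining exogenous spaces and never touches the functions — makes every $U_j$ discrete with at most $d$ states, which is the assertion of the lemma.

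The step I expect to be the real obstacle is the passage from ``$q_l$ lies in the \emph{closed} convex hull'' to ``$q_l=\psi(u_1^{(l)})$ for a genuine value $u_1^{(l)}$'': when $\D_{U_1}$ is a continuum the set $\{\psi(u_1)\}$ need not be closed, so the Carath\'eodory vertices could a priori be mere limit points. I would handle this by first proving the lemma for models whose exogenous spaces are already finite — there $\{\psi(u_1)\}$ is a finite set, hence closed, and the peeling above goes through verbatim, showing any finite model collapses to one with at most $d$ states per exogenous variable — and then observing that the family of laws $Q$ realizable by such finite models is compact (a finite union, over the choices of structural functions, of images of the multilinear maps $(P_{U_1},\dots,P_{U_m})\mapsto Q$ on products of probability simplices), whereas every $Q$ realizable by an arbitrary measurable model arises as a limit of finite--model ones obtained by approximating each $P_{U_j}$ by finitely supported measures; compactness then forces the arbitrary--model $Q$ to be finite--model realizable as well. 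An alternative route is to argue directly that $\{\psi(u_1)\}$ is a finite union of polytopes. Everything else in the argument is bookkeeping, so this closure-and-approximation point is where the proof actually has to work.
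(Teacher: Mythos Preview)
Your plan matches the paper's proof almost exactly: both fix the structural functions and peel off one exogenous variable at a time, writing the target vector $Q\in\Delta^{d-1}$ as the barycentre $\int\psi(u_1)\,dP_{U_1}$, invoking Carath\'eodory to collapse the support of $P_{U_1}$ to at most $d$ atoms, and iterating over $U_2,\dots,U_m$.

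The only substantive difference is how the closure worry you single out is dispatched. The paper does not go through your compactness-and-approximation detour; instead it cites a classical measure-theoretic strengthening of Carath\'eodory: if a probability measure on $\mathbb{R}^d$ is concentrated on a set $S$, its mean already lies in the \emph{convex hull} of $S$, not merely its closure (Blackwell--Girshick, \emph{Theory of Games and Statistical Decisions}, Thm.~2.4.1; extended to arbitrary measures by Rubin--Wesler, \emph{Proc.\ Amer.\ Math.\ Soc.}\ 1958). This delivers genuine values $u_1^{(l)}\in\D_{U_1}$ with $q_l=\psi(u_1^{(l)})$ directly, after which the linear-dependence pruning reduces the count to at most $d$. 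Your two-stage workaround (first prove it for finite exogenous spaces, then pass to the limit via compactness of the image of the multilinear map on products of simplices) looks correct, but is unnecessary once you know this result; it is worth adding to your toolkit.
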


\citep[Prop.~2]{rosset2018universal} applied a classic result of Carath\'eodory theorem in convex geometry \citep{caratheodory1911variabilitatsbereich} and showed that the observational distribution in any causal diagram could be generated using discrete exogenous variables, assuming that exogenous variables are drawn from distributions characterized with well-defined probability density functions. We here present a constructive proof that applies to the general framework of measure-theoretic probability theory.
\begin{proof}[Proof of Lemma~\ref{lem:ctf_cp3}]
  Let $\vec{P}$ be a vector representing probabilities of $\left ( P\left (\bigcap_{V \in \*C} \1U^{(i)}_V \right) \right )_{\*i \in \*I}$. Note that for every $V \in \*V$, there are $|\D_{\PA_V} \mapsto \D_V|$ equivalence classes $\1U^{(i)}_V$. $\vec{P}$ is thus a vector with $d = \prod_{V \in \*C} |\D_{\PA_V} \mapsto \D_V|$ elements. Since $\sum_{\*i}P\left (\bigcap_{V \in \*C} \1U^{(i)}_V \right) = 1$, it only takes a vector with $d - 1$ dimensions to uniquely determine $\vec{P}$. We could thus see $\vec{P}$ as a point in the $(d -1)$-dimensional real space. Similarly, $\left(\vec{P}, 1\right)$ is vector in $\3R^{d}$ where the $d$-th element is equal to $1$
  
  Fix an exogenous variable $U_1 \in U_{\*C}$. We define function $P_{u_1}\left (\bigcap_{V \in \*C} \1U^{(i)}_V \right)$ as the distribution over canonical partitions when $U_1$ is fixed as a constant $u_1 \in \D_{1}$. That is, 
  \begin{equation}
  \begin{aligned}
    &P_{u_1}\left (\bigcap_{V \in \*C} \1U^{(i)}_V \right) \\
    &= \left [ \int_{\bigtimes_{j = 2}^m \D_i} \prod_{V \in \*C} \I_{u_V \in \1U_V^{(i)} } d \left(\bigotimes_{j = 2}^m P_j \right) \right ]_{U_1 = u_1}
  \end{aligned}
  \end{equation}
  The associativity of the product of probability spaces \citep[Ch.~3.3]{bauer1972probability} generally implies:
  \begin{equation}
  \begin{aligned}
    &\bigotimes_{j = 1}^m \1F_j = \1F_1 \otimes \left (\bigotimes_{j = 2}^m \1F_j \right),\\
    &\bigotimes_{j = 1}^m P_j = P_1 \otimes \left (\bigotimes_{j = 2}^m P_j \right).
  \end{aligned}
  \end{equation}
  Let $\vec{P}_{u_1}$ be a vector in $\3R^{d-1}$ representing probabilities of $P_{u_1}\left (\bigcap_{V \in \*C} \1U^{(i)}_V \right)$ and let $\left(\vec{P}_{u_1}, 1\right)$ be vector in $\3R^{d}$ where the $d$-th element is equal to $1$. Applying Fubini's Theorem \citep[Thm.~1.7.2]{durrett2019probability} implies that function $u_1 \mapsto \left (\vec{P}_{u_1}, 1\right)$ is $\1F_1$-measurable. That is, $\langle \D_1, \1F_1, P_1 \rangle$ yields a probability measure for a set $\left \{ \left(\vec{P}_{u_1}, 1 \right) \mid \forall u_1 \in \D_1 \right\}$ with respective to Borel sets in real space $\3R^{d}$ with average 
  \begin{align}
    \left (\vec{P}, 1\right) = \int_{\D_{1}}\left(\vec{P}_{u_1}, 1 \right) dP_1.
  \end{align}
  It can be shown that the probability vector $\left (\vec{P}, 1\right)$ is a point lying in the convex hull of a set $\left \{\left(\vec{P}_{u_1}, 1 \right) \mid \forall u \in \D_U \right\}$ (see \citep[Thm.~2.4.1]{blackwell1979theory} and its extension to arbitrary probability measures in \citep{rubin1958note}). This means that there exists a finite set of vectors $\left(\vec{P}_{u^{(1)}_1}, 1 \right), \dots, \left(\vec{P}_{u^{(n)}_1}, 1 \right)$ and a sequence of positive coefficients $\alpha_1, \dots, \alpha_n >0$ such that
  \begin{align}
      \left (\vec{P}, 1\right) = \sum_{k = 1}^n \alpha_k \left(\vec{P}_{u^{(k)}_1}, 1 \right).
  \end{align}
  The above equation implies
  \begin{align}
    &\vec{P} = \sum_{k = 1}^{n} \alpha_{k} \vec{P}_{u^{(k)}_1}, &&\text{and }\sum_{k = 1}^{n} \alpha_k = 1
 \end{align}
 Indeed, we could further reduce the number of coefficients $n$ by removing linearly dependent vectors. If vectors $\left(\vec{P}_{u^{(k)}_1}, 1 \right)$ are not linearly independent, there exists a non-trivial solution $\lambda_1, \dots \lambda_n$ such that $\sum_{k} \lambda_k \left(\vec{P}_{u^{(k)}_1}, 1 \right) = \vec{0}$. It is verifiable that for any real value $\beta > 0$
 \begin{align}
     &\sum_{k = 1}^n (\alpha_k - \beta \lambda_k) \left(\vec{P}_{u^{(k)}_1}, 1 \right) \\
     &= \sum_{k = 1}^n \alpha_k \left(\vec{P}_{u^{(k)}_1}, 1 \right) - \beta \sum_{k = 1}^n \lambda_k \left(\vec{P}_{u^{(k)}_1}, 1 \right)\\
     &= \sum_{k = 1}^n \alpha_k \left(\vec{P}_{u^{(k)}_1}, 1 \right).
 \end{align}
 The last step holds since $\sum_{k} \lambda_k \left(\vec{P}_{u^{(k)}_1}, 1 \right) = \vec{0}$. Therefore, coefficients $\alpha_k - \beta \lambda_k$, $k = 1, \dots, n$, satisfy
  \begin{align}
     \sum_{k = 1}^n (\alpha_k - \beta \lambda_k) \left(\vec{P}_{u^{(k)}_1}, 1 \right) = \left (\vec{P}, 1\right).
 \end{align}
 Let $\beta$ be the largest value such that $\alpha_k - \beta \lambda_k \geq 0$ for all $k$. Consequently, there must exist a coefficient $\alpha_k - \beta \lambda_k = 0$. We could then remove the corresponding vector $\left(\vec{P}_{u^{(k)}_1}, 1 \right)$ from the base. This procedure continues until all remaining vectors are linearly independent. Since $\left(\vec{P}_{u_1}, 1 \right) \in \3R^d$, there are at most $d$ linearly independent vectors, i.e., $n \leq d$. 
 
 Finally, we replace the probability measure $P_1$ with a discrete distribution $P\left(U_1 = u^{(k)}_1 \right) = w_k$ over a finite discrete domain $\D^*_{1} = \left\{u^{(1)}_1, \dots, u^{(d)}_1 \right \}$. Doing so generated a new SCM $N^*$, with cardinality $|\D_{1}| \leq d$, that reproduces probabilities $P\left (\bigcap_{V \in \*C} \1U^{(i)}_V \right)$ over canonical partitions in the original SCM $M$. Repeatedly applying this procedure for every exogenous $U_2, \dots, U_m$ completes the proof.
\end{proof}
\Cref{lem:ctf_cp1,lem:ctf_cp2,lem:ctf_cp3} together yield a natural constructive proof for \Cref{thm:cfinite} in an arbitrary causal diagram $\G$.
\thmcfinite*
\begin{proof}
  By the definition of c-components (\Cref{def:c-component}), it is verifiable that for two different c-compoments $\*C_1, \*C_2 \in \1C(\G)$, their corresponding exogenous variables $U_{\*C_1}, U_{\*C_2}$ do not share any element, i.e., $U_{\*C_1} \cap U_{\*C_2} = \emptyset$. Therefore, we could repeatedly apply the construction of \Cref{lem:ctf_cp3} for every c-component $\*C \in \1C(\G)$. Doing so generates a discrete SCM $N$ satisfying conditions as follows: 
  \begin{enumerate}
    \item $N$ is compatible with $\G$;
    \item $N$ and $M$ share the same set of structural functions $\2F$;
    \item $N$ and $M$ generate the same joint distribution over the intersections of canonical partitions associated with every c-component. 
  \end{enumerate}
  It follows from \Cref{lem:ctf_cp1,lem:ctf_cp2} that $M$ and $N$ must coincide in all counterfactual distributions $\*P^*$ over endogenous variables. This completes the proof.
\end{proof}

\subsection{A.3 Decomposing Canonical Partitions}
This section provides a more fine-grained decomposition for equivalence classes in canonical partitions. Such a decomposition provides new insights to the discretization procedure.
\begin{definition}[Cell]\label{def:rec}
  For an SCM $M = \tuple{\*V, \*U, \2F, P}$, for each $V \in \*V$, a subset $\1R_V \subseteq \D_{U_V}$ is a \emph{cell} if $\1R_V = \vartimes_{U \in U_V} \1R_{V, U}$ where $\1R_{V, U} \subseteq \D_U$, for every $U \in \*U$.
\end{definition}
Obviously, for $|U_V| = 1$, any subset of $\D_{U_V}$ is a cell. However, the same is not necessarily true for $|U_V| \geq 2$. As an example, consider an SCM $M$ associated with the causal diagram of \Cref{fig1b} where $X, Y, Z$ are binary variables in $\{0, 1\}$; $U_1, U_2$ are continuous variables drawn uniformly from the interval $[0, 3]$. More specifically,
\begin{equation}
  \begin{aligned}
    &y \gets f_Y(x, u_1, u_2) = \sum_{i = 1}^{4} \I_{(u_1, u_2) \in \1U^{(i)}_Y} h^{(i)}_Y(x),\\
    &x \gets f_X(z, u_2) = \sum_{i = 1}^{4} \I_{u_2 \in \1U^{(i)}_X} h^{(i)}_X(z),\\
    &z \gets f_Z(u_1) = \sum_{i = 1}^{2}\I_{u_1 \in \1U^{(i)}_Z} h^{(i)}_Z
  \end{aligned} \label{eq:fxyz2}
\end{equation}
Canonical partitions $\1U^{(i)}_Y, \1U^{(j)}_X, \1U^{(k)}_Z$ are described in \Cref{fig:app_a2}. For points on the boundary, we include them in the equivalence class with a higher indices $i, j, k$. As an example, consider the equivalence class $\1U^{(1)}_Y$, i.e.,
\begin{align}
  &\1U^{(1)}_Y = \left ([0, 2) \times [0, 1) \right) \cup \left ((2, 3] \times (2,3] \right).
\end{align}
It is a subset in the union of two cells $\1R^{(1)}_Y, \1R^{2)}_Y$ given by
\begin{align}
  &\1R^{(1)}_Y = [0, 2] \times [0, 1], &\1R^{(2)}_Y = [2, 3] \times [2,3]. \label{eq:cell1}
\end{align}
However, one could not write the equivalence class $\1U^{(1)}_Y$ as a product of intervals in $\D_{U_1}, \D_{U_2}$, i.e., $\1U^{(1)}_Y$ is not a cell.

\begin{figure}[t]
  \centering
  \null
  \begin{subfigure}{0.95\linewidth}\centering
  \resizebox{\linewidth}{!}{
  \begin{tikzpicture}

    \node[box, fill=lightblue] at (0.5, 1.5) {};
    \node[box, fill=lightblue] at (2.5, 0.5) {};

    \node[box, fill=lightgreen] at (1.5, 1.5) {};
    \node[box, fill=lightgreen] at (2.5, 1.5) {};

    \node[box, fill=lightred] at (1.5, 2.5) {};
    \node[box, fill=lightred] at (0.5, 2.5) {};

    \draw[->, >={Latex}] (-0.5,0) -- (3.5,0) node[below] {\scalebox{0.7}{$u_1$}}; 
    \draw[->, >={Latex}] (0,-0.5) -- (0,3.5) node[left] {\scalebox{0.7}{$u_2$}};

    \node (r1) at (1, 0.7) {\scalebox{0.7}{$\1U^{(1)}_Y$}};	
    \node (h1) at (1, 0.2) {\scalebox{0.7}{$y \gets 0$}};


    \node (r1) at (2.5, 2.7) {\scalebox{0.7}{$\1U^{(1)}_Y$}};	
    \node (h1) at (2.5, 2.2) {\scalebox{0.7}{$y \gets 0$}};


    \node (r1) at (0.5, 1.7) {\scalebox{0.7}{$\1U^{(3)}_Y$}};	
    \node (h1) at (0.5, 1.2) {\scalebox{0.7}{$y \gets \neg x$}};


    \node (r1) at (2, 1.7) {\scalebox{0.7}{$\1U^{(4)}_Y$}};	
    \node (h1) at (2, 1.2) {\scalebox{0.7}{$y \gets 1$}};

    \node (r1) at (2.5, 0.7) {\scalebox{0.7}{$\1U^{(3)}_Y$}};	
    \node (h1) at (2.5, 0.2) {\scalebox{0.7}{$y \gets \neg x$}};

    \node (r1) at (1, 2.7) {\scalebox{0.7}{$\1U^{(2)}_Y$}};	
    \node (h1) at (1, 2.2) {\scalebox{0.7}{$y \gets x$}};

    \draw[thick, dashed, -] (0, 1) -- (3, 1);
    \draw[thick, dashed, -] (0, 2) -- (3, 2);
    \draw[thick, dashed, -] (0, 3) -- (3, 3);

    \draw[thick, dashed, -] (1, 1) -- (1, 2);
    \draw[thick, dashed, -] (2, 0) -- (2, 1);
    \draw[thick, dashed, -] (2, 2) -- (2, 3);
    \draw[thick, dashed, -] (3, 0) -- (3, 3);
    
    \draw[thin, -] (1, 0) -- (1, 0.05);

    \node at (-0.2, -0.2) {\scalebox{0.7}{0}};

    \node [below] at (1, 0) {\scalebox{0.7}{1}};	
    \node [below] at (2, 0) {\scalebox{0.7}{2}};
    \node [below] at (3, 0) {\scalebox{0.7}{3}};
    \node [left] at (0, 1) {\scalebox{0.7}{1}};
    \node [left] at (0, 2) {\scalebox{0.7}{2}};
    \node [left] at (0, 3) {\scalebox{0.7}{3}};
  \end{tikzpicture}
  }
  \caption{$y \gets f_Y(x, u_1, u_2)$}
  \label{fig:app_a2a}
  \end{subfigure}
\hfill
  \begin{subfigure}{0.95\linewidth}\centering
  \resizebox{\linewidth}{!}{
  \begin{tikzpicture}
    \draw[->, >={Latex}] (-0.5,0) -- (3.5,0) node[below] {\scalebox{0.7}{$u_2$}}; 

    \fill[fill=lightred] (2,0.01) rectangle (3,0.5);
    \fill[fill=lightblue] (0,0.01) rectangle (1,0.5);
    \fill[fill=lightgreen] (1,0.01) rectangle (2,0.5);

    \node (r1) at (2.5, 0.33) {\scalebox{0.7}{$\1U^{(2)}_X$}};	
    \node (h1) at (2.5, 0.1) {\scalebox{0.7}{$x \gets z$}};

    \node (r1) at (0.5, 0.33) {\scalebox{0.7}{$\1U^{(3)}_X$}};	
    \node (h1) at (0.5, 0.1) {\scalebox{0.7}{$x \gets \neg z$}};

    \node (r1) at (1.5, 0.33) {\scalebox{0.7}{$\1U^{(4)}_X$}};	
    \node (h1) at (1.5, 0.1) {\scalebox{0.7}{$x \gets 1$}};

    \draw[thick, dashed, -] (0, 0) -- (0, 0.5);
    \draw[thick, dashed, -] (1, 0) -- (1, 0.5);
    \draw[thick, dashed, -] (2, 0) -- (2, 0.5);
    \draw[thick, dashed, -] (3, 0) -- (3, 0.5);

    \node [below] at (0, 0) {\scalebox{0.7}{0}};	
    \node [below] at (1, 0) {\scalebox{0.7}{1}};	
    \node [below] at (2, 0) {\scalebox{0.7}{2}};
    \node [below] at (3, 0) {\scalebox{0.7}{3}};	
  \end{tikzpicture}
  }
  \caption{$x \gets f_X(z, u_2)$}
  \label{fig:app_a2b}
  \end{subfigure}\hfill
  \begin{subfigure}{0.95\linewidth}\centering
  \resizebox{\linewidth}{!}{
  \begin{tikzpicture}
    \draw[->, >={Latex}] (-0.5,0) -- (3.5,0) node[below] {\scalebox{0.7}{$u_1$}}; 

    \fill[fill=lightred] (0,0.01) rectangle (1,0.5);

    \node (r1) at (2, 0.33) {\scalebox{0.7}{$\1U^{(1)}_Z$}};	
    \node (h1) at (2, 0.1) {\scalebox{0.7}{$z \gets 0$}};

    \node (r1) at (0.5, 0.33) {\scalebox{0.7}{$\1U^{(2)}_Z$}};	
    \node (h1) at (0.5, 0.1) {\scalebox{0.7}{$z \gets 1$}};

    \draw[thick, dashed, -] (0, 0) -- (0, 0.5);
    \draw[thick, dashed, -] (1, 0) -- (1, 0.5);
    \draw[thin, -] (2, 0) -- (2, 0.05);
    \draw[thick, dashed, -] (3, 0) -- (3, 0.5);
    
    \node [below] at (0, 0) {\scalebox{0.7}{0}};	
    \node [below] at (1, 0) {\scalebox{0.7}{1}};	
    \node [below] at (2, 0) {\scalebox{0.7}{2}};
    \node [below] at (3, 0) {\scalebox{0.7}{3}};

  \end{tikzpicture}
  }
  \caption{$z \gets f_Z(u_1)$}
  \label{fig:app_a2c}
  \end{subfigure}\null
  \caption{Canonical partitions of exogenous domains associated with $X, Y, Z$. Each equivalence class (e.g., $\1U^{(i)}_Y$) is covered by a finite set of (almost) disjoint cells (e.g., $\1U^{(4)}_Y \subseteq [2, 3] \times [0, 1]$). For points on the boundary, we break the ties in favor of equivalence classes with higher indices.}
  \label{fig:app_a2}
\end{figure}
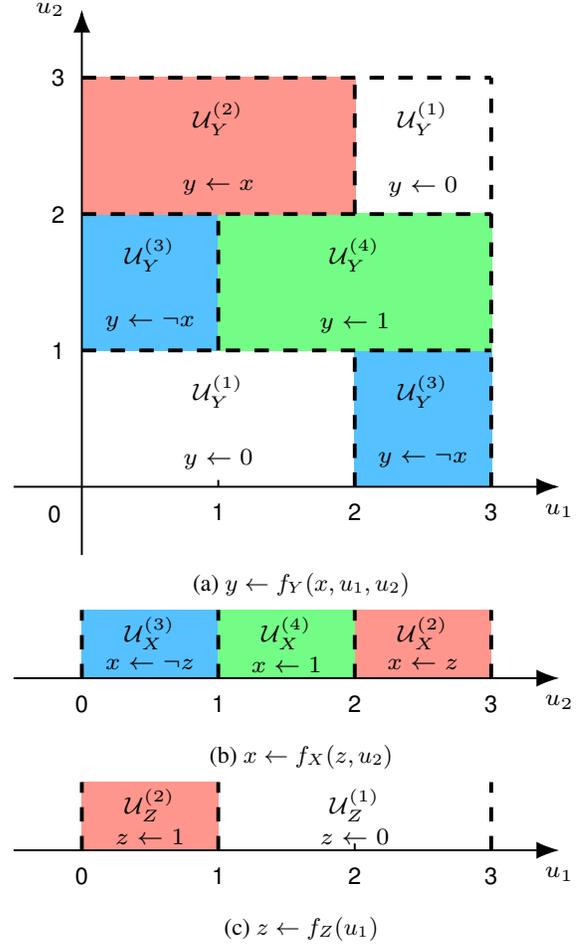

Our next result shows that each equivalence class in the canonical partition could be decomposed into a countable union of almost disjoint cells.
\begin{definition}[Covering]\label{def:covering}
  For an SCM $M = \tuple{\*V, \*U, \2F, P}$, for every $V \in \*V$, let $\1U_V$ be an arbitrary subset of $\D_{U_V}$. Consider the following conditions:
  \begin{enumerate}[itemsep=0pt,topsep=0pt,parsep=0pt]
    \item $\left \{ \1R^{(j)}_V \mid j \in \*J_V \right\}$ is a countable set of cells.
    \item For any $i \neq j$, $\1R^{(i)}_V$ and $\1R^{(j)}_V$ are almost disjoint, i.e., 
    \begin{align}
        P \left (\1R^{(i)}_V \cap \1R^{(j)}_V \right) = 0.
    \end{align}
    \item $\1U_V$ is a subset for $\cup_{j \in \*J_V}\1R^{(j)}_V$.
  \end{enumerate}
  Then, $\left \{ \1R^{(j)}_V \mid j \in \*J_V  \right\}$ is said to be a covering for $\1U_V$.
\end{definition} 
\begin{restatable}{lemma}{lemrec}\label{lem:covering}  
  For an SCM $M = \tuple{\*V, \*U, \2F, P}$, for every $V \in \*V$, let $\1U^{(i)}_V$ be the equivalent class for an arbitrary function $h_V^{(i)} \in \D_{\PA_V} \mapsto \D_V$. There exists a covering $\left \{ \1R^{(j)}_V \mid j\in \*J_V \right\}$ for $\1U^{(i)}_V$ such that 
  \begin{align}
      P \left (\1U^{(i)}_V \right) = \sum_{j \in \*J_V} P \left (\1R^{(j)}_{V} \right).  \label{eq:lemrec2}
  \end{align}
\end{restatable}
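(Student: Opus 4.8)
The plan is to set the SCM aside and prove a statement purely about the product probability space of $U_V=\{U_1,\dots,U_m\}$ ($m=|U_V|$), obtained by multiplying the spaces $\langle\D_j,\1F_j,P_j\rangle$ of the individual exogenous variables $U_1,\dots,U_m$ as in Appendix~A.2: the equivalence class $\1U^{(i)}_V$ is a measurable subset of it, a \emph{cell} (\Cref{def:rec}) is exactly a measurable rectangle $A_1\times\cdots\times A_m$ of it, and the task is to cover $\1U^{(i)}_V$ by countably many pairwise almost disjoint cells whose probabilities sum to $P(\1U^{(i)}_V)$. The first ingredient is the combinatorial fact that the collection $\2A$ of \emph{finite disjoint unions of cells} is an algebra: the difference of two cells is again a finite disjoint union of cells — $(A_1\times\cdots\times A_m)\setminus(B_1\times\cdots\times B_m)$ is the disjoint union, over $k=1,\dots,m$, of the cell that has $A_l\cap B_l$ in coordinate $l<k$, $A_k\setminus B_k$ in coordinate $k$, and $A_l$ in coordinate $l>k$ — so $\2A$ is closed under differences, hence under complements and finite intersections; and by the usual disjointification $\bigcup_kS_k=\bigsqcup_k(S_k\setminus(S_1\cup\cdots\cup S_{k-1}))$, every countable union of cells is in fact a countable \emph{disjoint} union of cells.

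The second ingredient is approximation: since the product measure is the Carath\'eodory extension of the obvious premeasure on the semiring of cells, for every $\varepsilon>0$ there is an $E\in\2A$ with $P(E\,\triangle\,\1U^{(i)}_V)<\varepsilon$. Using this I would build, by induction on $n$, an increasing sequence $G_1\subseteq G_2\subseteq\cdots$ in $\2A$ with $G_n\subseteq\1U^{(i)}_V$ and $P(\1U^{(i)}_V\setminus G_n)\to 0$: at stage $n$, pick $F_n\in\2A$ approximating $\1U^{(i)}_V\setminus G_{n-1}$ to within symmetric difference $2^{-n}$; replace $F_n$ by $F_n\setminus G_{n-1}\in\2A$ so the new mass is disjoint from $G_{n-1}$; subtract from it (again using that $\2A$ is an algebra) an $\2A$-cover of the small overshoot $(F_n\setminus G_{n-1})\setminus\1U^{(i)}_V$ so the residue lies inside $\1U^{(i)}_V$; and let $G_n$ be $G_{n-1}$ together with that residue. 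A routine estimate then bounds $P(\1U^{(i)}_V\setminus G_n)$ by a constant multiple of $2^{-n}$. Now put $G=\bigcup_nG_n$. Since the $G_n$ increase, $G=G_1\sqcup\bigsqcup_{n\ge 1}(G_{n+1}\setminus G_n)$, with every shell $G_{n+1}\setminus G_n$ a finite disjoint union of cells, so collecting the cells appearing in $G_1$ and in all the shells yields a countable family $\{\1R^{(j)}_V\mid j\in\*J_V\}$ of pairwise disjoint cells with $\bigcup_{j\in\*J_V}\1R^{(j)}_V=G\subseteq\1U^{(i)}_V$.

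It remains to read off \Cref{eq:lemrec2}: pairwise disjointness gives $\sum_{j\in\*J_V}P(\1R^{(j)}_V)=P(G)$, and $G\subseteq\1U^{(i)}_V$ with $P(\1U^{(i)}_V\setminus G)=\lim_nP(\1U^{(i)}_V\setminus G_n)=0$ gives $P(G)=P(\1U^{(i)}_V)$, so $\sum_{j\in\*J_V}P(\1R^{(j)}_V)=P(\1U^{(i)}_V)$. The one place that needs care — and the step I expect to be the main obstacle — is clause~(3) of \Cref{def:covering}: the construction above only guarantees $\1U^{(i)}_V\subseteq\bigcup_j\1R^{(j)}_V$ \emph{up to the $P$-null set} $\1U^{(i)}_V\setminus G$, which already suffices for \Cref{eq:lemrec2} but not, verbatim, for literal set containment. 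When the equivalence class is a finite union of cells, or differs from one by a boundary of measure zero as in the examples of Appendix~A.3, the convention of breaking ties in favor of higher indices makes containment exact; in general one would run the dual, outer exhaustion instead — cover $\1U^{(i)}_V$ by cells of total mass at most $P(\1U^{(i)}_V)+2^{-n}$, pass to decreasing common refinements and take their intersection — trading easy containment for more delicate control of the limiting union. Reconciling exact containment with the exact identity \Cref{eq:lemrec2} is the crux; everything else is bookkeeping inside $\2A$.
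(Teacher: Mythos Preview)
Your route differs genuinely from the paper's. You build disjoint cells from the inside and then confront the containment clause; the paper works from the outside and lets almost-disjointness fall out of optimality. Concretely, the paper invokes the Carath\'eodory construction of the product measure to identify $P\bigl(\1U^{(i)}_V\bigr)$ with the infimum of $\sum_{\1R\in\1C}P(\1R)$ over all countable cell-covers $\1C$ of $\1U^{(i)}_V$, and then simply takes a cover $\{\1R^{(j)}_V\}$ attaining this infimum. Containment is automatic (it is a cover), \Cref{eq:lemrec2} holds by choice, and almost-disjointness is derived a posteriori by contradiction: if the total mass strictly exceeded $P\bigl(\bigcup_j\1R^{(j)}_V\bigr)$, one could exhibit a strictly cheaper cover of $\1U^{(i)}_V$, contradicting minimality. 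Your ``dual, outer exhaustion'' is heading toward the same idea, but the paper short-circuits the sequence of shrinking covers by positing a single optimal one.

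What each buys: your construction delivers \emph{exact} disjointness and is fully explicit, at the cost of the containment gap you correctly flag; the paper's argument gets containment and \Cref{eq:lemrec2} for free but rests on the claim that the outer-measure infimum is actually \emph{attained} by some cover, which it does not justify. The crux you isolate---reconciling literal containment with the exact identity---is therefore present in both arguments, only located differently: in yours it is the final step, in the paper's it hides inside the sentence ``therefore, we could obtain a covering \dots\ such that $P(\1U^{(i)}_V)=\sum_jP(\1R^{(j)}_V)$.''
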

\begin{proof}
We first consider a weaker version of the covering cells which does not require every pair of cells to be disjoint. That is, condition (2) in \Cref{def:covering} does not necessarily hold. For any $\1A \subseteq \D_{U_V}$, define a set of coverings $\2C(\1A)$:
  \begin{align}
      \2C(\1A) = \left \{\1C \subseteq 2^{\D_{U_V}} \mid \1C \text{ is a covering for } \1A \right\}. \label{eq:lemrec1}
  \end{align}
  where $2^{\D_{U_V}}$ represents the set of all subsets of $\D_{U_V}$.
  
  Recall that every $U \in \*U$ is associated with a probability space $\tuple{\D_U, \1F_U, P_U}$. The product measure $\bigotimes_{U\in \*U} P_U$ is the only probability measure $P$ with restrictions to $\bigotimes_{U \in \*U} \1F_U$ which satisfies the independence restriction in \Cref{eq:product}. It follows from the construction of product measures \citep[Theorem 1.5.2]{bauer1972probability} that such a probability measure $P$ must satisfy the following property: for any $\1A \subseteq \D_{U_V}$,
  \begin{align}
      P\left(\1U^{(i)}_V \right) = \inf \left \{\sum_{\1R_V \in \1C} P \left( \1R_V \right) \mid \forall \1C \in \2C \left (\1U_V \right) \right\}. \label{eq:lemrec4}
  \end{align}
  Therefore, we could obtain a covering $\left \{ \1R^{(j)}_V \mid j\in \*J_V \right\}$ for an arbitrary equivalence class $\1U^{(i)}_V$ such that
  \begin{align}
      P \left (\1U^{(i)}_V \right) = \sum_{j \in \*J_V} P \left (\1R^{(j)}_{V} \right).  \label{eq:lemrec5}
  \end{align}
  What remains is to show that every pair $\1R_V^{(i)}, \1R_V^{(j)}$ are almost disjoint. This is equivalent to proving the following:
  \begin{align}
      P \left (\bigcup_{j \in \*J_V} \1R^{(j)}_V \right) =  \sum_{j \in \*J_V} P \left( \1R^{(j)}_V \right).
  \end{align}
  By basic properties of probability measures, 
  \begin{align}
    P \left (\bigcup_{j \in \*J_V} \1R^{(j)}_V \right) \leq  \sum_{j \in \*J_V} P \left( \1R^{(j)}_V \right).
  \end{align}
  Therefore, it is sufficient to show that 
  \begin{align}
    P \left (\bigcup_{j \in \*J_V} \1R^{(j)}_V \right) \geq  \sum_{j \in \*J_V} P \left( \1R^{(j)}_V \right). \label{eq:lemrec3}
  \end{align}
  Suppose now \Cref{eq:lemrec3} does not hold. This means that there exists a covering $\1C' \in \2C \left (\cup_{j \in \*J_V}\1R^{(j)}_V \right)$ such that 
  \begin{align}
      P \left (\bigcup_{j \in \*J_V}\1R^{(j)}_V \right) = \sum_{\1R'_V \in \1C'} P \left( \1R'_V \right) < \sum_{j \in \*J_V} P \left( \1R^{(j)}_V \right).
  \end{align}
  By the definition in \Cref{eq:lemrec1}, $\1C'$ is also a covering in $\2C \left (\1U^{(i)}_V \right)$. The property in \Cref{eq:lemrec4} implies:
  \begin{align}
      P\left(\1U^{(i)}_V \right) \leq \sum_{\1R'_V \in \1C'} P \left( \1R'_V \right) < \sum_{j \in \*J_V} P \left( \1R^{(j)}_V \right), 
  \end{align}
  which contradicts \Cref{eq:lemrec5}. This completes the proof.
\end{proof}
Henceforth, we will consistently refer to a set of cells as a covering if they satisfy conditions \emph{both} in \Cref{def:covering} and \Cref{eq:lemrec2}. For instance, consider the equivalence class $\1U^{(1)}_Y$ in \Cref{fig:app_a2a} and cells $\1R^{(1)}_Y, \1R^{(2)}_Y$ defined in \Cref{eq:cell1}. Since $\1U^{(1)}_Y \subseteq  \1R^{(1)}_Y \cup \1R^{(1)}_Y$, $\left \{\1R^{(1)}_Y, \1R^{(2)}_Y\right\}$ forms a covering for $\1U^{(1)}_Y$. By noting that finite segments in  $\D_{U_1} \times \D_{U_2}$ (e.g., a line $U_1 = 2$) has zero measure, we have
\begin{equation}
\begin{aligned}
  &P \left ( \1U^{(1)}_Y \right) \\
  &= P \left (\left(U_1, U_2 \right) \in  \left ([0, 2] \times [0, 1] \right) \cup \left ((2, 3] \times (2,3] \right) \right) \\
  &= P \left (\left(U_1, U_2 \right) \in [0, 2] \times [0, 1] \right) \\
  &+ P \left (\left(U_1, U_2 \right) \in [2, 3] \times [2,3] \right) \\
  &= P \left (\1R^{(1)}_Y \right) + P \left (\1R^{(2)}_Y \right).
\end{aligned}
\end{equation}
The existence of covering cells also allows us to decompose probabilities over intersections of equivalence classes across canonical partitions. Formally,
\begin{restatable}{lemma}{lemctfcp}\label{lem:ctf_cp4}
  For an SCM $M = \tuple{\*V, \*U, \2F, P}$, for any $\*i \in \*I$, there exists a sequence of coverings $\left \{ \1R^{(j)}_V \mid j \in \*J_V  \right\}$ for $\1U^{(i)}_V$, $\forall V \in \*V$, such that 
  \begin{align}
    P\left( \bigcap_{V \in \*V} \1U_V^{(i)} \right) = \sum_{\*j \in \*J} \prod_{U \in \*U} P\left( \bigcap_{V \in \ch(U)} \1R_{V,U}^{(j)} \right) \label{eq:ctf_cp5}
  \end{align}
\end{restatable}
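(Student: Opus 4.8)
The plan is to feed the covering guaranteed by \Cref{lem:covering} into the integral representation of $P\left( \bigcap_{V \in \*V} \1U_V^{(i)} \right)$ and then regroup the resulting product of indicators by exogenous variable, exactly as the product-measure factorization was used in \Cref{lem:ctf_cp3}. First, for every $V \in \*V$ I would apply \Cref{lem:covering} to the equivalence class $\1U^{(i)}_V$, obtaining a countable covering $\left\{ \1R^{(j)}_V \mid j \in \*J_V \right\}$ by almost-disjoint cells $\1R^{(j)}_V = \vartimes_{U \in U_V} \1R^{(j)}_{V,U}$ with $P\left( \1U^{(i)}_V \right) = \sum_{j \in \*J_V} P\left( \1R^{(j)}_V \right)$. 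Since $\1U^{(i)}_V \subseteq \bigcup_{j \in \*J_V} \1R^{(j)}_V$, the cells are pairwise almost disjoint, and their total measure (which equals the measure of their union) equals $P\left( \1U^{(i)}_V \right)$, both $\left( \bigcup_{j} \1R^{(j)}_V \right) \setminus \1U^{(i)}_V$ and every pairwise intersection $\1R^{(j)}_V \cap \1R^{(j')}_V$ are $P$-null; hence $\I_{u_V \in \1U^{(i)}_V} = \sum_{j \in \*J_V} \I_{u_V \in \1R^{(j)}_V}$ holds $P$-almost everywhere. Intersecting these conull sets over the \emph{finite} index set $\*V$, all of these identities hold simultaneously for $P$-almost every $\*u$.

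Next, writing $\*J = \vartimes_{V \in \*V} \*J_V$ with $j_V$ the restriction of $\*j \in \*J$ to $V$ (so $\1R^{(j)}_{V,U} = \1R^{(j_V)}_{V,U}$), I would start from $P\left( \bigcap_{V \in \*V} \1U^{(i)}_V \right) = \int_{\D_{\*U}} \prod_{V \in \*V} \I_{u_V \in \1U^{(i)}_V} \, dP(\*u)$, substitute the $P$-a.e.\ identity above, distribute the finite product of countable sums into a countable sum over $\*j$, and interchange that sum with the integral by Tonelli's theorem (every integrand is nonnegative). This yields $P\left( \bigcap_{V} \1U^{(i)}_V \right) = \sum_{\*j \in \*J} \int_{\D_{\*U}} \prod_{V \in \*V} \I_{u_V \in \1R^{(j)}_V} \, dP(\*u)$. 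Decomposing each cell indicator coordinatewise as $\I_{u_V \in \1R^{(j)}_V} = \prod_{U \in U_V} \I_{u_U \in \1R^{(j)}_{V,U}}$ and regrouping the double product over $(V,U)$ by the exogenous variable via $U \in U_V \iff V \in \ch(U)$ gives $\prod_{V \in \*V} \prod_{U \in U_V} \I_{u_U \in \1R^{(j)}_{V,U}} = \prod_{U \in \*U} \I_{u_U \in \bigcap_{V \in \ch(U)} \1R^{(j)}_{V,U}}$. Finally, since the exogenous variables are mutually independent, $P = \bigotimes_{U \in \*U} P_U$, and the consistency condition of product measures (the same factorization invoked in Appendix~A.2) turns each integral into $\prod_{U \in \*U} P\left( \bigcap_{V \in \ch(U)} \1R^{(j)}_{V,U} \right)$, which is \Cref{eq:ctf_cp5}.

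The main obstacle I anticipate is the careful handling of the measure-zero exceptional sets: one must verify that replacing each indicator $\I_{u_V \in \1U^{(i)}_V}$ by the sum $\sum_j \I_{u_V \in \1R^{(j)}_V}$ is valid $P$-almost everywhere \emph{simultaneously} across all $V \in \*V$, so that neither the product identity nor the subsequent integral identity is affected; once this is in place, the remainder is the routine distributive expansion together with the product-measure factorization already used in the proof of \Cref{lem:ctf_cp3}. A secondary point is justifying the sum--integral interchange when some $\*J_V$ is countably infinite, which Tonelli's theorem handles since every term is nonnegative.
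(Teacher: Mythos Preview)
Your proposal is correct and follows essentially the same approach as the paper's proof. The only packaging difference is that the paper establishes the key replacement step as the measure identity $P\bigl(\1U_V^{(i)} \cap \1A\bigr) = \sum_{j \in \*J_V} P\bigl(\1R_V^{(j)} \cap \1A\bigr)$ for arbitrary measurable $\1A$ and then iterates over $V \in \*V$, whereas you phrase the same step as a $P$-a.e.\ indicator identity and invoke Tonelli; the underlying arithmetic (the total-measure equality from \Cref{lem:covering} forcing the overshoot and overlaps to be null) is identical.
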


\begin{table*}[t]
    \centering
    \begin{tabular}{|c||c|c|c|c|c|c|c|c|}
    \hline
    $\*I \subseteq \{1, \dots, 4\}$ & $\emptyset$ & $\{1\}$ & $\{2\}$ & $\{3\}$ & $\{4\}$ & $\{1, 2\}$ & $\{1, 3\}$ & $\{1, 4\}$\\ \hline
    $\1A_{U_1}^{\*I}$ & $\emptyset$ & $\emptyset$ & $\emptyset$ & $\emptyset$ & $\emptyset$ & $\emptyset$ & *$[0, 1)$ & $\emptyset$ \\ \hline \hline
     $\*I \subseteq \{1, \dots, 4\}$ & $\{2, 3\}$ & $\{2, 4\}$ & $\{3, 4\}$ & $\{1, 2, 3\}$ & $\{1, 2, 4\}$ & $\{1, 3, 4\}$ & $\{2, 3, 4\}$ & $\{1, 2, 3, 4\}$\\ \hline
    $\1A_{U_1}^{\*I}$ & *$(1, 2)$ & *$(2, 3]$ & $\emptyset$ & $[1, 1]$ & $\emptyset$ & $\emptyset$ & $[2, 2]$ & $\emptyset$ \\ \hline
    \end{tabular}
    \caption{Atoms generated by subsets $\1R_{U_1}^{(i)}, i = 1, \dots, 4$ defined in \Cref{eq:projection_exp} which are contained in the domain of an exogenous variable $U_1$ drawn uniformly from an interval $[1, 3]$. Atoms with positive probability measure is marked with a asterisk ``*''.}.
    \label{tab:atom}
\end{table*}

\begin{proof}
For every $V \in \*V$, let $\left \{ \1R^{(j)}_V \mid j \in \*J_V  \right\}$ be a covering for $\1U^{(i)}_V$ defined in \Cref{lem:covering}, i.e., it satisfies \Cref{eq:lemrec1}. We first show that, for any subset $\1A \subseteq \D_{\*U}$,
  \begin{align}
      P\left ( \1U_V^{(i)} \cap \1A \right) = \sum_{j \in \*J_V } P \left(\1R^{(i)}_V \cap \1A \right). \label{eq:ctfcp1}
  \end{align}
  Let $\1A^\complement = \D \setminus \1A$. Since $\left \{ \1R_V^{(j)} \mid j \in \*J_V \right \}$ is a covering of $\1U_V^{(i)}$, we must have the following:
  \begin{align}
      &P\left ( \1U_V^{(i)} \cap \1A \right) \leq \sum_{j \in \*J_V} P \left(\1R^{(j)}_V  \cap \1A \right), \label{eq:ctfcp2} \\
      &P\left ( \1U_V^{(i)} \cap \1A^\complement \right) \leq \sum_{j \in \*J_V} P \left(\1R^{(j)}_V \cap \1A^\complement\right). \label{eq:ctfcp4}
  \end{align}
  Next, we show that the above inequality relationships are both tight. Suppose at least one of inequalities in \Cref{eq:ctfcp2,eq:ctfcp3} is strict. We must have 
  \begin{align*}
      P\left(\1U_V^{(i)} \right)  &= P\left ( \1U_V^{(i)} \cap \1A \right) + P\left ( \1U_V^{(i)} \cap \1A^\complement\right) \\
      &< \sum_{j \in \*J_V} P \left(\1R^{(j)}_V  \cap \1A \right) + \sum_{j \in \*J_V} P \left(\1R^{(j)}_V \cap \1A^\complement\right).
  \end{align*}
  The above equation implies 
  \begin{align}
      P\left(\1U_V^{(i)} \right) < \sum_{j \in \*J_V} P \left(\1R^{(j)}_V \right),
  \end{align}
  which contradicts \Cref{eq:lemrec1}. This means that the statement in \Cref{eq:ctfcp1} must hold, which implies, for any $\*i \in \*I$,
  \begin{align}
      P\left( \bigcap_{V \in \*V} \1U_V^{(i)} \right) = \sum_{\*j \in \*J} P\left( \bigcap_{V \in \*V} \1R_{V}^{(j)} \right).
  \end{align}
  Recall that each cell $\1R^{(j)}_{V}$ is a product $\vartimes_{U \in U_V} \1R^{(j)}_{V, U}$ where $\1R^{(j)}_{V, U}$ is a subset in $\D_U$. Since exogenous variables $\*U$ are mutually independent, we must have, for any $\*j \in \*J$,
  \begin{align}
      P\left( \bigcap_{V \in \*V} \1R_{V}^{(j)} \right) = \prod_{U\in \*U} P\left( \bigcap_{V \in \ch(U)} \1R_{V,U}^{(j)} \right).
  \end{align}
  This completes the proof.
\end{proof}
Consider again the SCM $M$ described in \Cref{eq:fxyz2}. Note that only function in the hypothesis class $\1D_{\emptyset} \mapsto \1D_Z$ compatible with event $Z = 1$ is $h^{(2)}_Z \equiv z \gets 1$. Similarly, event $X_{z=0} = 1, X_{z=1} = 0$ corresponds to function $h^{(3)}_X \equiv x \gets \neg z$; event $Y_{x=0} = 0, Y_{x=1} = 0$ corresponds to the function $h^{(1)}_Y(x) \equiv y \gets 0$. The decomposition of \Cref{eq:ctf_cp1} gives:
\begin{align}
  &P \left(Z = 1, X_{z=0} = 1, X_{z=1} = 0, Y_{x=0} = 0, Y_{x=1} = 0\right) \notag \\
  &= P \left (\1U^{(1)}_Z \cap \1U^{(3)}_X \cap \1U^{(1)}_Y \right) \label{eq:exp1}
\end{align}
Among above quantities, $\1U^{(1)}_Y$ is covered by cells $\left \{\1R^{(1)}_Y, \1R^{(2)}_Y \right \}$ defined in \Cref{eq:cell1}. $\1U^{(1)}_Z $ and $\1U^{(3)}_X$ are covered by cells $\1R^{(1)}_Z$ and $\1R^{(1)}_X$, respectively, given by 
\begin{align}
    &\1R^{(1)}_Z = \{u_1 \in [0, 1] \}, &&\1R^{(1)}_X = \{u_2 \in [0, 1] \}.
\end{align}
Applying the decomposition in \Cref{eq:ctf_cp5} implies
\begin{align}
    &P \left (\1U^{(1)}_Z \cap \1U^{(3)}_X \cap \1U^{(1)}_Y \right) \notag \\
    &= P\left ( \1R^{(1)}_Z \cap \1R^{(1)}_X \cap \1R^{(1)}_Y \right) +P\left ( \1R^{(1)}_Z \cap \1R^{(1)}_X \cap \1R^{(2)}_Y \right)  \notag\\
    &= P\left ( U_1 \in [0, 1] \right)P\left ( U_2 \in [0, 1] \right). \label{eq:exp2}
\end{align}
\Cref{eq:exp1,eq:exp2} together give the evaluation 
\begin{align*}
  P \left(Z = 1, X_{z=0} = 1, X_{z=1} = 0, Y_{x=0} = 0, Y_{x=1} = 0\right) = \frac{1}{9}.
\end{align*}
One could verify the above equation from the parametrization in \Cref{eq:fxyz2} using the three-step algorithm in \citep{pearl:2k} which consists of abduction, action, and prediction.

\subsection{A.4 Decomposing Covering Cells}
For an arbitrary cell $\1R_V = \times_{U \in U_V} \1R_{V, U}$, we will call every ``side'' $\1R_{V, U}$ the projection of $\1R_V$ onto domain $\D_U$, for every $U \in U_V$. Observe that for disjoint cells, their projections onto the same domain may not necessarily be disjoint. As an instance, equivalence classes $\1U^{2}_Y$ and $\1U^{4}_Y$ in \Cref{fig:app_a1a} are covered by (almost) disjoint cells $[1, 3]\times [1, 2]$ and $[0, 2] \times [2, 3]$ respectively. Their projections onto $U_1$ are intervals $[1, 3]$ and $[0, 2]$, which overlap in the sub-interval $[1, 2]$. This observation suggests that every covering cell could be further decomposed, which will be our focus in this section.

The collection of all projections of covering cells $\1R_{V,U}^{(j)}$ defined in \Cref{lem:ctf_cp4} onto an exogenous $U \in \*U$ is given by
\begin{align}
    \left \{ \1R_{V,U}^{(j)} \mid \forall V \in \ch(U), \forall \*j \in \*J \right \}. \label{eq:projection}
\end{align}
\Cref{lem:ctf_cp1,lem:ctf_cp4} shows that all counterfactual distributions in any SCM could be written as a function of probabilities over intersections of above projections, i.e.,  
\begin{align}
    \left \{ P\left( \bigcap_{V \in \ch(U)} \1R_{V,U}^{(j)} \right) \mid \forall V \in \ch(U), \forall \*j \in \*J \right \}. \label{eq:prob_proj}
\end{align}
To prove the counterfactual equivalence of canonical SCMs, it is thus sufficient to show that probabilities in \Cref{eq:prob_proj} could be generated using a discrete distribution.

For convenience, we will slightly abuse the notation and consistently represent \Cref{eq:projection} using a countable set $\left \{ \1R_{U}^{(j)} \mid j \in \mathbb{N}\right\}$. We will also utilize a special type of subsets in domain $\D_U$ generated by intersections over projections and their complements, which we call \emph{atoms}.
\begin{definition}[Atom]\label{def:atom}
  For an arbitrary $U \in \*U$, let $\left \{ \1R_{U}^{(j)} \mid j \in \mathbb{N}\right\}$ be a countable collection of subsets in $\D_U$. For any $\*I \subseteq \3N$, an \emph{atom} $\1A^{\*I}_U \subseteq \D_U$ is defined as:
\begin{align}
    \1A^{\*I}_U = \bigcap_{i \in \*I} \1R^{(i)}_U \cap \bigcap_{i \not \in \*I} \left(\D_U \setminus \1R^{(i)}_U\right). \label{eq:atom1}
\end{align}
\end{definition}
Observe that these atoms are pairwise disjoint, and that $\bigcup_{\*I \subseteq \3N} \1A^{\*I}_U = \Omega$. For instance, consider again canonical partitions described in \Cref{fig:app_a2}. Covering cells for $\1U^{(i)}_Y, \1U^{(j)}_Z$ generates a collection of projections $\left \{\1R^{(i)}_{U_1} \mid i = 1, \dots, 4 \right \}$ onto the exogenous domain of $U_1$, i.e., 
\begin{equation}
\begin{aligned}
    &\1R^{(1)}_{U_1} = [0, 1], &\1R^{(2)}_{U_1} = [1, 3],\\
    &\1R^{(3)}_{U_1} = [0, 2], &\1R^{(4)}_{U_1} = [2, 3].
\end{aligned}\label{eq:projection_exp}
\end{equation}
For an indexing set $\*I = \{1, 3\}$, atom $\1A^{\{1, 3\}}_{U_1}$ is given by
\begin{align*}
    \1A^{\{1, 3\}}_{U_1} &= \1R^{(1)}_{U_1} \cap \1R^{(3)}_{U_1} \cap \left(\D_{U_1} \setminus \1R^{(2)}_{U_1}\right)\cap \left(\D_{U_1} \setminus \1R^{(4)}_{U_1}\right)\\
    &=[0, 1]\cap [0, 2]\cap[0, 1) \cap[0, 2)\\
    &=[0, 1)
\end{align*}
\Cref{tab:atom} shows atoms computed from all indexing sets $\*I \subseteq \{1, \dots, 4\}$. We obtain a set of atoms $\left \{\1A^{(i)}_{U_1} \mid i = 1, \dots, 3 \right\}$ with positive probability measures, given by
\begin{align}
    &\1A^{(1)}_{U_1} = [0, 1), &&\1A^{(2)}_{U_1} = (1, 2), &&\1A^{(3)}_{U_1} = (2, 3].
\end{align}
Evidently, one could write probabilities over any intersection of projections in $\1R^{(i)}_{U_1}$ as a summation over some atoms $\1A^{(i)}_{U_1}$. To witness, we show in \Cref{fig:app_a3} more fine-grained partitions over exogenous domains associated with $X, Y, Z$ following the decomposition of atoms $\left \{\1A^{(i)}_{U_1} \mid i = 1, \dots, 3 \right\}$.

In general, one could represent probabilities of any event generated by a finite set of projections $\left \{\1R^{(i)}_{U_1} \mid i = 1, \dots, N \right \}$ using the decomposition of atoms. However, as the number of projections $N \to \infty$, there could exist uncountably many such atoms. Therefore, one could not immediately represent their measures as a discrete distribution. Next, we show that it suffices to consider only a countable set of atoms with positive measures. 
\begin{lemma}\label{lem:ctf_cp5}
  For an SCM $M = \tuple{\*V, \*U, \2F, P}$, for every $U \in \*U$, there exists a countable set of atoms $\left \{ \1A^{(i)}_U \mid i \in \3N \right \}$ defined in \Cref{eq:atom1} such that $\sum_{i \in \3N} P\left( \1A^{(i)}_U\right) = 1$ and for any $\*j \in \bigtimes_{V \in ch(U)} \*J_V$, 
  \begin{align*}
      &P\left( \bigcap_{V \in \ch(U)} \1R_{V,U}^{(j)} \right) = \sum_{i \in \3N} \prod_{V \in \ch(U)} I_{\1A^{(i)}_U \subseteq \1R_{V,U}^{(j)}} P\left( \1A^{(i)}_U\right).
  \end{align*}
\end{lemma}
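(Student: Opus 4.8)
The plan is to exploit that, although \Cref{eq:atom1} produces potentially $2^{\aleph_0}$ syntactically distinct atoms, (i) at most countably many of them carry positive mass, and (ii) --- the delicate point --- those countably many already account for the whole unit of probability. First I would record the combinatorial setup: since $\ch(U)$ is finite and, by \Cref{lem:ctf_cp4}, each index set $\*J_V$ is countable, the re-indexed family of projections $\left\{\1R_U^{(j)}\mid j\in\3N\right\}$ is genuinely countable; consequently every atom $\1A_U^{\*I}$ ($\*I\subseteq\3N$) is $\sigma\!\left(\left\{\1R_U^{(j)}\right\}\right)$-measurable, the atoms are pairwise disjoint, and $\bigcup_{\*I\subseteq\3N}\1A_U^{\*I}=\D_U$. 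Because the atoms are disjoint and $P(\D_U)=1$, for each $m\in\3N$ at most $m$ atoms can have measure exceeding $1/m$; the union over $m$ is a countable set of positive-measure atoms, which I enumerate as $\left\{\1A_U^{(i)}\mid i\in\3N\right\}$ (padded with $\emptyset$ if finite).

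The hard part will be to show $\sum_{i\in\3N}P\!\left(\1A_U^{(i)}\right)=1$, equivalently that the measurable residual $Z=\D_U\setminus\bigcup_i\1A_U^{(i)}$ --- the points lying in a null atom --- satisfies $P(Z)=0$. This is \emph{not} true for an arbitrary countable family (for the family of all dyadic subintervals of $[0,1]$ every atom is a singleton, hence null, and the equality fails), so the argument must be tied to how the projections arose. Here I would use that, by \Cref{lem:covering} and \Cref{lem:ctf_cp4}, each equivalence class $\1U_V^{(i)}$ is an almost disjoint countable union of covering cells with $P(\1U_V^{(i)})=\sum_j P(\1R_V^{(j)})$, so no mass is lost in passing to the cells; summing over the finitely many classes of each $V$ shows that $\D_{U}$ is, up to a $P$-null set, partitioned by a \emph{countable} family of cells, hence by their projections onto $\D_U$. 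I would then push this \emph{no-loss} property through the increasing filtration $\mathcal{A}_n=\sigma\!\left(\1R_U^{(1)},\dots,\1R_U^{(n)}\right)$: a point of $Z$ lies in a decreasing sequence of $\mathcal{A}_n$-atoms whose measures cannot converge to that of a single positive-measure limit atom, and a downward-continuity / Borel--Cantelli-type estimate then forces the total mass of such points to vanish. I expect this to be the main obstacle precisely because it is the place where a non-atomic pushforward on $\{0,1\}^{\3N}$ would otherwise destroy the conclusion, and it has to be extracted from the outer-measure-tight construction of the coverings rather than asserted in general.

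Granting (i)--(ii), I would finish by checking the decomposition formula formally. Fix a projection, say $\1R_U^{(k)}$ in the re-indexed family: membership $u\in\1R_U^{(k)}$ is decided by the atom of $u$, since $\1A_U^{\*I}\subseteq\1R_U^{(k)}$ when $k\in\*I$ and $\1A_U^{\*I}\cap\1R_U^{(k)}=\emptyset$ when $k\notin\*I$. Hence $\1R_U^{(k)}$ is a disjoint union of atoms, and since $P(Z)=0$ the null ones contribute nothing, giving $P\!\left(\1R_{V,U}^{(j)}\right)=\sum_{i\in\3N}\I_{\1A_U^{(i)}\subseteq\1R_{V,U}^{(j)}}P\!\left(\1A_U^{(i)}\right)$. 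Intersecting the finitely many projections $\1R_{V,U}^{(j)}$, $V\in\ch(U)$, an atom $\1A_U^{(i)}$ is contained in $\bigcap_{V\in\ch(U)}\1R_{V,U}^{(j)}$ iff it is contained in each factor, i.e. iff $\prod_{V\in\ch(U)}\I_{\1A_U^{(i)}\subseteq\1R_{V,U}^{(j)}}=1$; by disjointness of the atoms and $P(Z)=0$ the measure of the intersection equals $\sum_{i\in\3N}\prod_{V\in\ch(U)}\I_{\1A_U^{(i)}\subseteq\1R_{V,U}^{(j)}}P\!\left(\1A_U^{(i)}\right)$, which is the claimed identity.
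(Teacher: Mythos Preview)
Your plan has the same three-part skeleton as the paper's proof: (i) produce a countable list of atoms, (ii) show it accounts for all the mass, (iii) read off the decomposition formula from disjointness. For (i) the paper does not use your $1/m$ counting but instead runs a greedy construction: set $\1B^{(1)}_U=\D_U$ and repeatedly peel off an atom of largest positive measure, $\1B^{(i+1)}_U=\1B^{(i)}_U\setminus\1A^{(i)}_U$. Either this terminates with $P\bigl(\1B^{(n)}_U\bigr)=0$, or it runs forever; in the latter case the paper argues that any atom $\1A$ remaining in the residual with $P(\1A)>0$ was a candidate at every step, forcing $P\bigl(\1A^{(i)}_U\bigr)\geq P(\1A)$ for all $i$ and hence $\sum_i P\bigl(\1A^{(i)}_U\bigr)=\infty$, a contradiction.

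You are right that (ii) is the crux, and your dyadic counterexample correctly shows that ``no positive-measure atom remains'' is strictly weaker than ``the residual is null''. But your proposed fix --- propagate the outer-measure tightness of the coverings from \Cref{lem:covering,lem:ctf_cp4} through the filtration $\sigma\bigl(\1R^{(1)}_U,\dots,\1R^{(n)}_U\bigr)$ via a Borel--Cantelli-type estimate --- never becomes concrete: you do not specify the estimate, nor explain why the equality $P\bigl(\1U^{(i)}_V\bigr)=\sum_j P\bigl(\1R^{(j)}_V\bigr)$ at the level of \emph{cells} prevents the pushforward of $P_U$ onto $\{0,1\}^{\3N}$ (via $u\mapsto(\I_{u\in\1R^{(j)}_U})_j$) from having a continuous part. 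It is worth noting that the paper's greedy argument, though cleaner to state, passes over exactly the same point --- it asserts without proof that a residual of positive measure must contain a positive-measure atom, which is precisely the implication your dyadic example refutes in general. So on this step you have been more scrupulous than the paper, but neither sketch closes the gap.

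Your derivation of the formula from (i)--(ii) in the last paragraph is correct and is essentially the paper's: each projection is a disjoint union of atoms, the null residual contributes nothing, and intersecting the finitely many projections over $V\in\ch(U)$ just multiplies the indicators.
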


\begin{figure}[t]
  \centering
  \null
  \begin{subfigure}{0.95\linewidth}\centering
  \resizebox{\linewidth}{!}{
  \begin{tikzpicture}
    \node[box, fill=lightblue] at (0.5, 1.5) {};
    \node[box, fill=lightblue] at (2.5, 0.5) {};

    \node[box, fill=lightgreen] at (1.5, 1.5) {};
    \node[box, fill=lightgreen] at (2.5, 1.5) {};

    \node[box, fill=lightred] at (1.5, 2.5) {};
    \node[box, fill=lightred] at (0.5, 2.5) {};

    \draw[->, >={Latex}] (-0.5,0) -- (3.5,0) node[below] {\scalebox{0.7}{$u_1$}}; 
    \draw[->, >={Latex}] (0,-0.5) -- (0,3.5) node[left] {\scalebox{0.7}{$u_2$}};

    \node (r1) at (0.5, 0.7) {\scalebox{0.7}{$\1U^{(1)}_Y$}};	
    \node (h1) at (0.5, 0.2) {\scalebox{0.7}{$y \gets 0$}};
    
    \node (r1) at (1.5, 0.7) {\scalebox{0.7}{$\1U^{(1)}_Y$}};	
    \node (h1) at (1.5, 0.2) {\scalebox{0.7}{$y \gets 0$}};


    \node (r1) at (2.5, 2.7) {\scalebox{0.7}{$\1U^{(1)}_Y$}};	
    \node (h1) at (2.5, 2.2) {\scalebox{0.7}{$y \gets 0$}};


    \node (r1) at (0.5, 1.7) {\scalebox{0.7}{$\1U^{(3)}_Y$}};	
    \node (h1) at (0.5, 1.2) {\scalebox{0.7}{$y \gets \neg x$}};


    \node (r1) at (1.5, 1.7) {\scalebox{0.7}{$\1U^{(4)}_Y$}};	
    \node (h1) at (1.5, 1.2) {\scalebox{0.7}{$y \gets 1$}};
    
    \node (r1) at (2.5, 1.7) {\scalebox{0.7}{$\1U^{(4)}_Y$}};	
    \node (h1) at (2.5, 1.2) {\scalebox{0.7}{$y \gets 1$}};

    \node (r1) at (2.5, 0.7) {\scalebox{0.7}{$\1U^{(3)}_Y$}};	
    \node (h1) at (2.5, 0.2) {\scalebox{0.7}{$y \gets \neg x$}};

    \node (r1) at (0.5, 2.7) {\scalebox{0.7}{$\1U^{(2)}_Y$}};	
    \node (h1) at (0.5, 2.2) {\scalebox{0.7}{$y \gets x$}};
    
    \node (r1) at (1.5, 2.7) {\scalebox{0.7}{$\1U^{(2)}_Y$}};	
    \node (h1) at (1.5, 2.2) {\scalebox{0.7}{$y \gets x$}};

    \draw[thick, dashed, -] (0, 1) -- (3, 1);
    \draw[thick, dashed, -] (0, 2) -- (3, 2);
    \draw[thick, dashed, -] (0, 3) -- (3, 3);

    \draw[thick, dashed, -] (1, 0) -- (1, 3);
    \draw[thick, dashed, -] (2, 0) -- (2, 3);
    \draw[thick, dashed, -] (3, 0) -- (3, 3);
    
    \draw[thin, -] (1, 0) -- (1, 0.05);

    \node at (-0.2, -0.2) {\scalebox{0.7}{0}};

    \node [below] at (1, 0) {\scalebox{0.7}{1}};	
    \node [below] at (2, 0) {\scalebox{0.7}{2}};
    \node [below] at (3, 0) {\scalebox{0.7}{3}};
    \node [left] at (0, 1) {\scalebox{0.7}{1}};
    \node [left] at (0, 2) {\scalebox{0.7}{2}};
    \node [left] at (0, 3) {\scalebox{0.7}{3}};
  \end{tikzpicture}
  }
  \caption{$y \gets f_Y(x, u_1, u_2)$}
  \label{fig:app_a3a}
  \end{subfigure}
\hfill
  \begin{subfigure}{0.95\linewidth}\centering
  \resizebox{\linewidth}{!}{
  \begin{tikzpicture}
    \draw[->, >={Latex}] (-0.5,0) -- (3.5,0) node[below] {\scalebox{0.7}{$u_2$}}; 

    \fill[fill=lightred] (2,0.01) rectangle (3,0.5);
    \fill[fill=lightblue] (0,0.01) rectangle (1,0.5);
    \fill[fill=lightgreen] (1,0.01) rectangle (2,0.5);

    \node (r1) at (2.5, 0.33) {\scalebox{0.7}{$\1U^{(2)}_X$}};	
    \node (h1) at (2.5, 0.1) {\scalebox{0.7}{$x \gets z$}};

    \node (r1) at (0.5, 0.33) {\scalebox{0.7}{$\1U^{(3)}_X$}};	
    \node (h1) at (0.5, 0.1) {\scalebox{0.7}{$x \gets \neg z$}};

    \node (r1) at (1.5, 0.33) {\scalebox{0.7}{$\1U^{(4)}_X$}};	
    \node (h1) at (1.5, 0.1) {\scalebox{0.7}{$x \gets 1$}};

    \draw[thick, dashed, -] (0, 0) -- (0, 0.5);
    \draw[thick, dashed, -] (1, 0) -- (1, 0.5);
    \draw[thick, dashed, -] (2, 0) -- (2, 0.5);
    \draw[thick, dashed, -] (3, 0) -- (3, 0.5);

    \node [below] at (0, 0) {\scalebox{0.7}{0}};	
    \node [below] at (1, 0) {\scalebox{0.7}{1}};	
    \node [below] at (2, 0) {\scalebox{0.7}{2}};
    \node [below] at (3, 0) {\scalebox{0.7}{3}};	
  \end{tikzpicture}
  }
  \caption{$x \gets f_X(z, u_2)$}
  \label{fig:app_a3b}
  \end{subfigure}\hfill
  \begin{subfigure}{0.95\linewidth}\centering
  \resizebox{\linewidth}{!}{
  \begin{tikzpicture}
    \draw[->, >={Latex}] (-0.5,0) -- (3.5,0) node[below] {\scalebox{0.7}{$u_1$}}; 

    \fill[fill=lightred] (0,0.01) rectangle (1,0.5);

    \node (r1) at (1.5, 0.33) {\scalebox{0.7}{$\1U^{(1)}_Z$}};	
    \node (h1) at (1.5, 0.1) {\scalebox{0.7}{$z \gets 0$}};
    
    \node (r1) at (2.5, 0.33) {\scalebox{0.7}{$\1U^{(1)}_Z$}};	
    \node (h1) at (2.5, 0.1) {\scalebox{0.7}{$z \gets 0$}};

    \node (r1) at (0.5, 0.33) {\scalebox{0.7}{$\1U^{(2)}_Z$}};	
    \node (h1) at (0.5, 0.1) {\scalebox{0.7}{$z \gets 1$}};

    \draw[thick, dashed, -] (0, 0) -- (0, 0.5);
    \draw[thick, dashed, -] (1, 0) -- (1, 0.5);
    \draw[thick, dashed, -] (2, 0) -- (2, 0.5);
    \draw[thick, dashed, -] (3, 0) -- (3, 0.5);
    
    \node [below] at (0, 0) {\scalebox{0.7}{0}};	
    \node [below] at (1, 0) {\scalebox{0.7}{1}};	
    \node [below] at (2, 0) {\scalebox{0.7}{2}};
    \node [below] at (3, 0) {\scalebox{0.7}{3}};

  \end{tikzpicture}
  }
  \caption{$z \gets f_Z(u_1)$}
  \label{fig:app_a3c}
  \end{subfigure}\null
  \caption{More fine-grained Partitions of exogenous domains associated with $X, Y, Z$ based on atoms. Each equivalence class (e.g., $\1U^{(i)}_Y$) is decomposed into a finite set of pairwise disjoint cells formed by atoms (e.g., $[0, 1) \times [1, 2]$).}
  \label{fig:app_a3}
\end{figure}
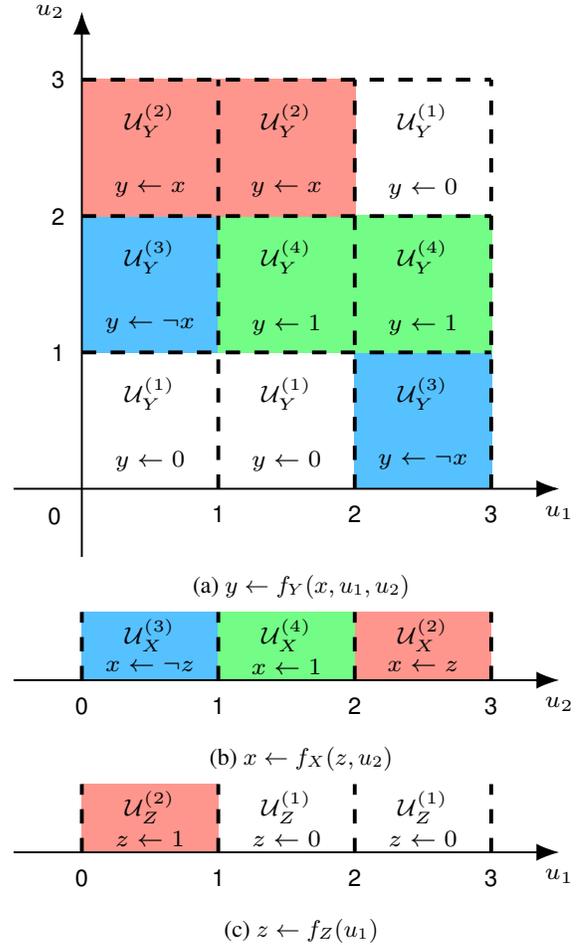

\begin{proof}
    Formally, we define
    \begin{align*}
        \1H = \left \{ \bigcup_{\*J \in \2J} \1A^{(\*J)}_U \mid \2J \subseteq 2^{\3N} \text{ countable or co-countable} \right \}. 
    \end{align*}
    It is verifiable that $\1H$ is a $\sigma$-algebra generated by projections $\left \{ \1R_{U}^{(j)} \mid j \in \mathbb{N}\right\}$. Furthermore, the intersection $\bigcap_{V \in \ch(U)} \1R_{V,U}^{(j)}$ is a measurable set in $\1H$. Therefore, it is sufficient to show that for any event $\1B_U \in \1H$,
    \begin{align}
      P\left(\1B_U \right) = \sum_{i \in \3N} \I_{\1A^{(i)}_U \subseteq \1B_U} P\left( \1A^{(i)}_U\right). \label{eq:ctf_cp6}
    \end{align}
    We first show that there exists a countable set of atoms $\left \{ \1A^{(i)}_U \mid i \in \3N \right \}$ that covers domain $\D_U$, i.e.,
    \begin{align}
        P\left (\D_U \setminus \bigcup_{i \in \3N} \1A^{(i)}_U \right) = 0. \label{eq:atom2}
    \end{align}
    Take $\1B^{(1)}_U = \D_U$ and define by induction, for all $i \in \3N$, if $P\left(\1B^{(i)}_U \right) > 0$, then let
    \begin{align}
        \1B^{(i+1)}_U = \1B^{(i)}_U \setminus \1A^{(i)}_U  
    \end{align}
    where $\1A^{(i)}_U \subseteq \1B^{(i)}_U$ is an atom with the largest positive measure among all atoms contained in $\1B^{(i)}_U$.
    
    If we repeatedly apply the above construction, one of two things may happen:
    \begin{enumerate}
        \item For some $n \in \3N$, $P\left(\1B^{(i)}_U \right) = 0$ and in this case, $\1A^{(1)}_U, \dots, \1A^{(n-1)}_U$ satisfy \Cref{eq:atom2}.
        \item For all $i \in \3N$, $P\left(\1B^{(i)}_U \right) > 0$. In this case, we have a countable set of atoms $\left \{ \1A^{(i)}_U \mid i \in \3N \right \}$ of positive measures. We now prove \Cref{eq:atom2} by contradiction. If \Cref{eq:atom2} does not hold, then there is an atom $\1A \subseteq \D_{U_V} \setminus \bigcup_{i \in \3N} \1A^{(i)}_U$ such that $P(\1A) > 0$. By our choice of $\1A^{(i)}_U$ at each step, we have that, for all $i \in \3N$, 
        \begin{align}
            P\left(\1A^{(i)}_U\right) \geq P(\1A).
        \end{align}
        Therefore, 
        \begin{align}
             P\left (\bigcup_{i \in \3N} \1A^{(i)}_U \right) = \sum_{i \in \3N}P\left(\1A^{(i)}_U\right) = \infty.
        \end{align}
        Contradiction, since the probability measure $P$ is finite.
    \end{enumerate}
    For any event $\1B_U \in \1H$, since $\1B_U \subseteq \D_U$, \Cref{eq:atom2} implies 
    \begin{align}
        P\left (\1B_U \setminus \bigcup_{i \in \3N} \1A^{(i)}_U \right) \leq P\left (\D_U \setminus \bigcup_{i \in \3N} \1A^{(i)}_U \right) = 0.
    \end{align}
    Therefore,
    \begin{align}
        P\left(\1B_U \right) &= P\left(\1B_U \cap \bigcup_{i \in \3N} \1A^{(i)}_U \right) \\
        &=\sum_{i \in \3N} P\left( \1B_U \cap \1A^{(i)}_U\right)\\
        &=\sum_{i \in \3N} \I_{\1A^{(i)}_U \subseteq \1B_U} P\left( \1A^{(i)}_U\right)
    \end{align}
    The last step holds since $\1B_U$ is a union of atoms and atoms are pairwise disjoint. This completes proof.
\end{proof}
We are now ready to prove the counterfactual equivalence for canonical SCMs with discrete exogenous domains.
\begin{restatable}{lemma}{lemdscm}\label{lem:d-scm}
  For a DAG $\G$, let $M$ be an arbitrary SCM compatible with $\G$. There exists a discrete SCM $N$ compatible with $\G$ such that $\*P^*_M = \*P^*_N$, i.e., $M$ and $N$ coincide in all counterfactual distributions.
\end{restatable}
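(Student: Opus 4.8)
The plan is to reduce the claim, through the decompositions already established, to a single task: building a discrete exogenous distribution that reproduces the joint probabilities $P\!\left(\bigcap_{V\in\*V}\1U_V^{(i)}\right)$ that $M$ assigns to the intersections of its canonical partitions, over all indices $\*i\in\*I$. This is enough because, by \Cref{lem:ctf_cp1}, every joint counterfactual distribution $P\!\left(\*y_{\*x},\dots,\*z_{\*w}\right)$ of $M$ is a fixed linear combination of the numbers $P\!\left(\bigcap_{V\in\*V}\1U_V^{(i)}\right)$, whose coefficients $\I_{\*Y_{\*x}(\*i)=\*y,\,\dots,\,\*Z_{\*w}(\*i)=\*z}$ depend only on the structural functions of $M$ (through the representatives $h_V^{(i)}\in\D_{\PA_V}\mapsto\D_V$) and not on the exogenous distribution at all. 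So if $N$ is given the \emph{same} structural functions as $M$, written in the form of \Cref{lem:cp} as $f_V(\pa_V,u_V)=\sum_{i\in\*I_V}h_V^{(i)}(\pa_V)\,\I_{u_V\in\1U_V^{(i)}}$, together with an exogenous distribution inducing the same canonical-partition probabilities, then automatically $\*P^*_M=\*P^*_N$.

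First I would apply \Cref{lem:ctf_cp2} to split $P\!\left(\bigcap_{V\in\*V}\1U_V^{(i)}\right)=\prod_{\*C\in\1C(\G)}P\!\left(\bigcap_{V\in\*C}\1U_V^{(i)}\right)$; since distinct maximal c-components involve disjoint families of exogenous variables, it is enough to discretize within each $\*C$ separately. Inside a fixed $\*C$, \Cref{lem:ctf_cp4} expresses $P\!\left(\bigcap_{V\in\*C}\1U_V^{(i)}\right)$ as a countable sum of products $\prod_{U}P\!\left(\bigcap_{V\in\ch(U)}\1R_{V,U}^{(j)}\right)$ over coverings by cells, and \Cref{lem:ctf_cp5} produces, for each $U$, a countable family of pairwise disjoint atoms $\{\1A_U^{(k)}\}_{k\in\3N}$ of positive measure with $\sum_k P(\1A_U^{(k)})=1$ and $P\!\left(\bigcap_{V\in\ch(U)}\1R_{V,U}^{(j)}\right)=\sum_k\big(\prod_{V\in\ch(U)}\I_{\1A_U^{(k)}\subseteq\1R_{V,U}^{(j)}}\big)P(\1A_U^{(k)})$. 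I would then define $N$ by replacing, for every $U\in\*U$, the original probability space of $U$ by the discrete space whose outcomes are the atom indices $k\in\3N$ with mass $P(\1A_U^{(k)})$, and by setting, for each $V$ and each $i$, ``$u_V\in\1U_V^{(i)}$ in $N$'' to mean that the sampled atom indices fall into the covering cells that make up $\1U_V^{(i)}$; equivalently, the discrete exogenous samples are fed directly into the decomposed $f_V$ above. One may further merge atoms that induce the same joint behaviour and collapse each exogenous domain to the finite cardinality $\prod_{V\in\*C}|\D_{\PA_V}\mapsto\D_V|$ of \Cref{eq:cardinality}, but this refinement is not needed for the statement.

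Two things then need checking. First, $N$ is compatible with $\G$: its structural functions are exactly those of $M$, so each $f_V$ still takes $\PA_V$ and the (now discrete) $U_V$ as arguments and each $U$ still feeds precisely $\ch(U)$ --- no edge is added or deleted, hence $\G_N=\G$. Second, $\*P^*_M=\*P^*_N$: by construction $M$ and $N$ assign equal probability to every atom-generated event, hence --- unwinding \Cref{lem:ctf_cp5}, then \Cref{lem:ctf_cp4}, then \Cref{lem:ctf_cp2} --- equal values $P\!\left(\bigcap_{V\in\*V}\1U_V^{(i)}\right)$ for all $\*i\in\*I$, and finally \Cref{lem:ctf_cp1} upgrades this to equality of every joint counterfactual distribution.

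I expect the main obstacle to be the passage from the \emph{per-variable} atom families to a coherent \emph{joint} discrete exogenous variable for a whole c-component: when several endogenous variables in $\*C$ share one exogenous $U$, the construction must preserve all the correlations among the events $\{\1U_V^{(i)}\}_{V\in\*C}$ simultaneously rather than marginalizing them one variable at a time, and it must do so inside the measure-theoretic framework of product $\sigma$-algebras, where before pruning there may be uncountably many atoms. Everything else --- the graph-compatibility check and the linearity argument that converts equality of canonical-partition probabilities into equality of all counterfactuals --- is essentially bookkeeping once \Cref{lem:ctf_cp4,lem:ctf_cp5} are in hand.
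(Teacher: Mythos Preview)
Your proposal is correct and follows essentially the same route as the paper: discretize each exogenous $U$ by picking one representative per atom $\1A_U^{(k)}$ from \Cref{lem:ctf_cp5}, assign it mass $P_M(\1A_U^{(k)})$, keep the structural functions $\2F$ of $M$ unchanged, and then invoke \Cref{lem:ctf_cp1,lem:ctf_cp4} to conclude $\*P^*_M=\*P^*_N$. The only cosmetic difference is that the paper phrases the discrete outcomes as ``pick a constant $u^{(i)}\in\1A_U^{(i)}$'' rather than ``use the atom index $k$'', and it does not separately invoke \Cref{lem:ctf_cp2} since the per-$U$ atom construction already respects the product structure across c-components; your worry about preserving joint correlations across several $V\in\ch(U)$ is exactly what \Cref{lem:ctf_cp5} handles, so it is not an additional obstacle.
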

\begin{proof}
    Let $\left \{ \1A^{(i)}_U \mid i \in \3N \right \}$ the countable set of atoms defined in \Cref{lem:ctf_cp5} for every $U \in \*U$. We construct a discrete SCM $N$ from $M$ as follows. 
    \begin{enumerate}
        \item For every $U \in \*U$, pick an arbitrary constant $u^{(i)}$ in each atom $\1A^{(i)}_U$.
        \item Define distribution $P(U)$ for every $U \in \*U$ in $N$ as:
            \begin{align}
                P_N\left(U = u^{(i)} \right) = P_M\left ( \1A^{(i)}_U \right).
            \end{align}
    \end{enumerate}
    Doing so generates a discrete SCM $N$ satisfying conditions as follows: 
    \begin{enumerate}
    \item $N$ is compatible with $\G$;
    \item $N$ and $M$ share the same set of structural functions $\2F$;
    \item $N$ and $M$ generate the same distribution over the intersections of projections of covering cells defined in \Cref{eq:prob_proj}.
    \end{enumerate}
     It follows from \Cref{lem:ctf_cp1,lem:ctf_cp4} that $M$ and $N$ must coincide in all counterfactual distributions $\*P^*$ over endogenous variables. This completes the proof.
\end{proof}
A mental image for the discretization procedure in \Cref{lem:d-scm} is described as follows. We first partition the exogenous domain $\D_{U}$ for each $U \in \*U$ into a countable collection of atoms. By doing so, we obtain a partition over the product domain $\D_{U_V} = \bigtimes_{U \in U_V}\D_U$ for every $V \in \*V$. Such a partition consists of countably many (almost) disjoint covering cells (\Cref{def:rec}) formed by products of atoms (\Cref{def:atom}). Every cell is assigned with a unique function $h_V$ in the hypothesis class $\D_{\PA_V}\mapsto \D_V$ mapping from domains of input $\PA_V$ to $V$. Given any configuration $\*U = \*u$, for every $V \in \*V$, one could find the cell containing the constant $u_V$ and generate values of $V$ following the associated function $h_V$. As an example, we show in \Cref{fig:app_a3} a graphical illustration for this discretization procedure for the SCM described in \Cref{eq:fxyz2}.

Finally, to construct a discrete SCM, it is sufficient to pick an arbitrary constant $u^{(i)}$ in each atom, assign it with the probability measure over the corresponding atom, and replace the exogenous $U$ with a variable drawn from a discrete distribution over constants $u^{(i)}$. Repeatedly applying this procedure for every exogenous $U \in \*U$ results in a canonical SCM with discrete exogenous domains. Also, one could further reduce cardinalities of exogenous domains by shrinking the support of the constructed discrete distribution. This could be done by re-weighting probabilities assigned to constant $u^{(i)}$ in each atom while maintaining probabilities over canonical partitions. Indeed, it is possible to bound the total number of atoms with positive probabilities to a finite value. The existence of such probability measures is guaranteed by the classic result of Carath\'eodory theorem \citep{caratheodory1911variabilitatsbereich}, following a similar procedure in the proof of \Cref{lem:ctf_cp4}.
\clearpage
\section{B. Markov Chain Monte Carlo for Partial Counterfactual Identification} 
In this section, we will show derivations for complete conditional distributions utilized in our proposed Gibbs samplers. We will also provide proofs for non-asymptotic bounds for empirical estimates of credible intervals used in \Cref{alg:ci}.

\subsection{B.1 Derivations of Complete Conditionals}
\paragraph{Sampling $P\left (\5u \mid \5v, \*\theta, \*\mu \right)$.}It is verifiable that variables $\*U^{(n)}, \*V^{(n)}$, $n = 1, \dots, N$, are mutually independent given parameters $\*\theta, \*\mu$. This implies 
\begin{align*}
    P\left (\5u \mid \5v, \*\theta, \*\mu \right) &= \prod_{U \in \*U}P\left (\*u^{(n)} \mid \5v, \*\theta, \*\mu \right)\\
    &=\prod_{U \in \*U}P\left (\*u^{(n)} \mid \*v^{(n)}, \*\theta, \*\mu \right)
\end{align*}
The complete conditional over $\left (\*U^{(n)} \mid \*V^{(n)}, \*\theta, \*\mu \right)$, $n = 1, \dots, N$, is given by
\begin{align*}
    P\left (\*u^{(n)} \mid \*v^{(n)}, \*\theta, \*\mu \right) &\propto P\left (\*u^{(n)} \*v^{(n)} \mid \*\theta, \*\mu \right)\\
    &\propto \prod_{V \in \*V} P\left (v^{(n)} \mid \pa^{(n)}_V, u^{(n)}_V, \*\theta, \*\mu \right)\\
    &\cdot \prod_{U \in \*U} P\left (u^{(n)}_V \mid \*\theta, \*\mu \right).
\end{align*}
Among quantities in the above equation, 
\begin{align*}
    &P\left (v^{(n)} \mid \pa^{(n)}_V, u^{(n)}_V, \*\theta, \*\mu \right) = \mu^{\left(\pa^{(n)}_V, u^{(n)} \right)}_{v^{(n)}},
\end{align*}
and 
\begin{align*}
    &P\left (u^{(n)}_V \mid \*\theta, \*\mu \right) = \theta_u \;\; \text{for} \;\; u = u^{(n)}_V.
\end{align*}

\paragraph{Sampling $P\left (\*\mu, \*\theta \mid \5v, \5u \right)$.}For every exogenous variable $U \in \*U$, we denote by $\*\theta_U$ the set of parameters $\left\{\theta_u\mid \forall u \right \}$. Similarly, for every endogenous variable $V \in \*V$, let $\*\mu_V = \left \{\mu_V^{(\pa_V, u_V)}\mid \forall \pa_V, u_V \right\}$. Obviously, parameters $\*\mu_V$ and $\*\theta_U$ are mutually independent, and they do not directly determine values of a variable (exogenous or endogenous) simultaneously. We must have
\begin{align*}
    P\left (\*\mu, \*\theta \mid \5v, \5u \right) = \prod_{V \in \*V} P\left (\mu_V \mid \5v, \5u \right)\prod_{U \in \*U}P\left (\theta_U \mid \5v, \5u \right).
\end{align*}
The above independence relationship implies that to draw samples from the posterior distribution $P\left (\*\mu, \*\theta \mid \5v, \5u \right)$, we could sample distributions over $\left (\mu_V \mid \5V, \5U \right)$ and $\left (\theta_U \mid \5V, \5U \right)$ for every $V \in \*V$ and every $U \in \*U$ separately.

Recall that for every $V \in \*V$, any $\pa_V, u_V$, $\mu_V^{(\pa_V, u_V)}=\left ( \mu_v^{(\pa_V, u_V)} \mid \forall v \in \D_V \right)$ is an indicator vector such that 
\begin{align*}
  &\mu_v^{(\pa_V, u_V)}\in \{0, 1\}, &\sum_{v \in \D_V} \mu_v^{(\pa_V, u_V)} = 1.
\end{align*}  
The complete conditional distribution over $\left (\*\mu_V \mid \5V, \5U \right)$, given by \Cref{eq:sample2}, follows from the fact that in any discrete SCM, the $n$-th observation of $V \in \*V \setminus \*Z^{(n)}$ is decided by 
\begin{align*}
    v^{(n)} \gets f_V\left (\pa^{(n)}_V , u^{(n)}_V \right) = v,
\end{align*}
where $v$ is a unique element in $\D_V$ such that $\mu_v^{\left(\pa_V, u_V \right)} = 1$.

The complete conditional distribution over $\left (\theta_U \mid \5V, \5U \right)$, given by \Cref{eq:sample3}, follows from the conjugacy of Dirichlet distributions with regard to categorical distributions (e.g., see \citep[Sec.~5.2]{ishwaran2001gibbs}). 

\paragraph{Sampling $P\left (\*u^{(n)} \mid \5v, \5u_{-n} \right)$.}At each iteration, draw $\*U^{(n)}$ from the conditional distribution given by
\begin{align*}
    P\left (\*u^{(n)} \mid \5v, \5u_{-n}\right) \notag \\
    \propto \prod_{V \in \*V \setminus \*Z^{(n)}} &P\left (v^{(n)} \mid \pa^{(n)}_{V}, u^{(n)}_{V}, \5v_{-n}, \5u_{-n} \right) \notag \\
    \prod_{U \in \*U} &P\left (u^{(n)} \mid \5v_{-n}, \5u_{-n}\right). 
\end{align*}
Among quantities in the above equation, by expanding on valus of parameters $\mu_V^{(\pa_V, u_V)}$, one could rewrite the posterior distribution $P\left (v^{(n)} \mid \pa^{(n)}_{V}, u^{(n)}_{V}, \5v_{-n}, \5u_{-n} \right)$ for every $V \in \*V \setminus \*Z^{(n)}$ as follows
\begin{align}
    &P\left (v^{(n)} \mid \pa^{(n)}_V, u^{(n)}_V, \5v_{-n}, \5u_{-n} \right) \notag \\
    &= \sum_{\pa_V, u_V}\sum_{\mu_V^{\left(\pa_V, u_V \right)}}  \mu_{v^{(n)}}^{\left(\pa_V, u_V \right)} \I_{\pa_V = \pa^{(n)}_V} \I_{u_V = u^{(n)}_V} \notag \\
    &\cdot P\left (\mu_V^{\left(\pa_V, u_V \right)}\mid \5v_{-n}, \5u_{-n} \right). \label{eq:app_f2}
\end{align}
The complete conditional over $\left (\mu_V^{\left(\pa_V, u_V \right)}\mid \5V_{-n}, \5V_{-n} \right)$, $\forall \pa_V, u_V$, follows from the definition of discrete SCMs. The $n$-th observation of $V \in \*V \setminus \*Z^{(n)}$ is decided by 
\begin{align*}
    v^{(n)} \gets f_V\left (\pa^{(n)}_V , u^{(n)}_V \right) = v,
\end{align*}
for a unique $v \in \D_V$ such that $\mu_v^{\left(\pa_V, u_V \right)} = 1$. Formally, if there exists a sample $i \neq n$ such that $V \not \in \*Z^{(i)}$ and $\pa^{(i)}_{V} = \pa_V, u^{(i)}_{V} = u_V$, the posterior over $\mu^{(\pa_V, u_V)}_V$ is given by
\begin{align*}
  P\left (\mu^{(\pa_V, u_V)}_v = 1 \mid \5v, \5u \right) = \I_{v = v^{(i)}}.
\end{align*}
Otherwise, 
\begin{align*}
  P\left (\mu^{(\pa_V, u_V)}_V \mid \5v, \5u \right) = \frac{1}{|\D_V|}.
\end{align*}
Marginalizing probabilities $P\left (\mu^{(\pa_V, u_V)}_V \mid \5v, \5u \right)$ over the domain $\D_V$ in \Cref{eq:app_f2} gives the complete conditional distribution over $\left (V^{(n)} \mid \PA^{(n)}_V, U^{(n)}_V, \5U_{-n}, \5U_{-n} \right)$.

For every $U \in \*U$, the complete conditional over $\left (U^{(n)} \mid \5V_{-n}, \5U_{-n}\right)$, given by \Cref{eq:sample3}, follows immediately from the P\'olya urn characterization of Dirichlet distributions (e.g., see \citep[Sec.~4]{ishwaran2001gibbs}).

\subsection{B.2 Monte Carlo Estimation of Credible Intervals}
Recall that for samples $\left \{\theta^{(t)} \right \}_{t = 1}^T$ drawn from $P\left (\theta_{\text{ctf}} \mid \5v \right)$, the empirical estimates for $100(1-\alpha)\%$ credible interval over $\theta_{\text{ctf}}$ are defined as:
\begin{align}
  &\hat{l}_{\alpha}(T) = \theta^{(\ceil{(\alpha/2)T})}, &&\hat{r}_{\alpha}(T) = \theta^{(\ceil{(1 - \alpha/2)T})},
\end{align}
where $\theta^{(\ceil{(\alpha/2)T})}, \theta^{(\ceil{(1 - \alpha/2)T})}$ are the $\ceil{(\alpha/2)T}$th smallest and the $\ceil{(1 - \alpha/2)T}$th smallest of $\left \{\theta^{(t)} \right \}$. One could apply standard concentration inequalities to determine a sufficient number of draws $T$ required for obtaining accurate estimates of a $100(1-\alpha)\%$ credible interval.
\lemci*
\begin{proof}
    Fix $\epsilon > 0$. If $\hat{l}_{\alpha}(T) > l_{\alpha + \epsilon}$, this means that there are at most $\ceil{(\alpha/2)T} - 1$ instances in $\left \{ \theta_{\text{ctf}}^{(t)}\right \}_{t = 1}^T$ that are smaller than or equal to $l_{\alpha + \epsilon}$. That is,
    \begin{align*}
        P \left ( \hat{l}_{\alpha}(T) > l_{\alpha + \epsilon} \right) &\leq P \left ( \sum_{t = 1}^T \I_{\theta_{\text{ctf}}^{(t)} \leq l_{\alpha + \epsilon}} \leq \ceil{(\alpha/2)T} - 1\right)\\
        &\leq P \left ( \sum_{t = 1}^T \I_{\theta_{\text{ctf}}^{(t)} \leq l_{\alpha + \epsilon}} \leq (\alpha/2)T \right)\\
        &\leq P \left ( \frac{1}{T} \sum_{t = 1}^T \I_{\theta_{\text{ctf}}^{(t)} \leq l_{\alpha + \epsilon}} \leq \frac{\alpha + \epsilon}{2} - \frac{\epsilon}{2}\right)\\
        &\leq \exp \left( - \frac{T\epsilon^2}{2} \right).
    \end{align*}
    The last step in the above equation follows from the standard Hoeffding's inequality.

    If $\hat{l}_{\alpha}(T) < l_{\alpha - \epsilon}$, this implies that there are at least $\ceil{(\alpha/2)T}$ instances in $\left \{ \theta_{\text{ctf}}^{(t)}\right \}_{t = 1}^T$ that are larger than or equal to $l_{\alpha + \epsilon}$. That is,
    \begin{align*}
        P \left ( \hat{l}_{\alpha}(T) < l_{\alpha - \epsilon} \right) &\leq P \left ( \sum_{t = 1}^T \I_{\theta_{\text{ctf}}^{(t)} \leq l_{\alpha - \epsilon}} \geq \ceil{(\alpha/2)T} \right)\\
        &\leq P \left ( \sum_{t = 1}^T \I_{\theta_{\text{ctf}}^{(t)} \leq l_{\alpha - \epsilon}} \geq (\alpha/2)T \right)\\
        &\leq P \left ( \frac{1}{T} \sum_{t = 1}^T \I_{\theta_{\text{ctf}}^{(t)} \leq l_{\alpha - \epsilon}} \geq \frac{\alpha -  \epsilon}{2} + \frac{\epsilon}{2} \right)\\
        &\leq \exp \left( - \frac{T\epsilon^2}{2} \right).
    \end{align*}
    The last step follows from the standard Hoeffding's inequality. Similarly, we could also show that 
    \begin{align*}
        &P \left ( \hat{h}_{\alpha}(T) < h_{\alpha + \epsilon} \right) \leq \exp \left( - \frac{T\epsilon^2}{2} \right), \\
        &P \left ( \hat{h}_{\alpha}(T) > h_{\alpha - \epsilon} \right) \leq \exp \left( - \frac{T\epsilon^2}{2} \right).
    \end{align*}
    Finally, bounding the error rate by $\delta / 4$ gives:
    \begin{align}
        \exp \left( - \frac{T\epsilon^2}{2} \right) = \frac{\delta}{4} \Rightarrow \epsilon = \sqrt{2T^{-1}\ln(4 / \delta)}.
    \end{align}
    Replacing the error rate $\epsilon$ with $f(T, \delta) = \sqrt{2T^{-1}\ln(4 / \delta)}$ completes the proof.
\end{proof}
As a corollary, it immediately follows from \Cref{lem:ci} that Algorithm \textsc{CredibleInterval} (\Cref{alg:ci}) is guaranteed to from a sufficient estimate of $100(1-\alpha)\%$ credible intervals within the specified margin of errors.
\corolci*
\begin{proof}
    The statement follows immediately from \Cref{lem:ci} by setting $\sqrt{2T^{-1}\ln(4 / \delta)} \leq \epsilon$. 
\end{proof}
\clearpage
\section{C. Simulation Setups and Additional Experiments} \label{appendix:c}
In this section, we will provide details on the simulation setups and preprocessing of datasets. We also conduct additional experiments on other more involved causal diagrams and using skewed hyperparameters for prior distributions. For all experiments, we will focus on Dirichlet priors in \Cref{eq:sec5.1} with hyperparameters $\alpha_U^{(u)} = \alpha_U / d_U$ for some real $\alpha_U > 0$. This is equivalent to drawing probabilities $\theta_u$ from a Dirichlet distribution defined as follows:
\begin{equation}
    \left(\theta_1, \dots, \theta_{d_U} \right) \sim \texttt{Dirichlet}\left (\frac{\alpha_U}{d_U}, \cdots, \frac{\alpha_U}{d_U}\right),
\end{equation}
All experiments were performed on a computer with 32GB memory, implemented in MATLAB. We are migrating the source code to other open-source platforms (e.g., Julia), which will be released once the code migration is done.

\paragraph{Experiment 1: Frontdoor}We study the problem of evaluating interventional probabilities $P(y_x)$ from the observational distribution $P(X, Y, W)$ in the ``Frontdoor'' diagram of \Cref{fig1c}. We collect $N = 10^4$ samples $\5v = \{x^{(n)}, y^{(n)}, w^{(n)}\}_{n = 1}^N$ from an SCM compatible with \Cref{fig1c}. Detailed parametrization of the SCM is provided in the following:
\begin{equation}
    \begin{split}
    &U_1 \sim \texttt{Unif}(0, 1), \\
    &U_2 \sim \texttt{Normal}(0, 1), \\
    &X \sim \texttt{Binomial}(1, \rho_X),\\
    &W \sim \texttt{Binomial}(1, \rho_W), \\
    &Y \sim \texttt{Binomial}(1, \rho_Y),
    \end{split} \label{eq:frontdoor}
\end{equation}
where probabilities $\rho_X, \rho_W, \rho_Y$ are given by
\begin{align*}
  &\rho_X = U_1,\\
  &\rho_W = \frac{1}{1 + \exp(-X - U_2)},\\
  &\rho_Y = \frac{1}{1 + \exp(W - U_1)}.
\end{align*}
Each observation $\left(x^{(n)}, y^{(n)}, w^{(n)}\right)$ is an independent draw from the observational distribution $P(X, Y, W)$. We set hyperparameters $\alpha_{U_1} = d_{U_1} = 8$, $\alpha_{U_1} = d_{U_2} = 4$. 

\paragraph{Experiment 2: PNS}We study the problem of evaluating the counterfactual probability $P(y_x, y'_{x'}) \equiv P(Y_x = y, Y_{x'} = y')$ for any $x \neq x', y \neq y'$ from the observational distribution $P(X, Y)$ in the ``Bow'' diagram of \Cref{fig1d}. We collect $N = 10^3$ observational samples $\5v = \{x^{(n)}, y^{(n)} \}_{n = 1}^N$ from an SCM compatible with \Cref{fig1d}. Detailed parametrization of the SCM is defined as follows:
\begin{equation}
    \begin{split}
    &U \sim \texttt{Normal}(0, 1),\\
    &X \sim \texttt{Binomial}(1, \rho_X),\\
    &E \sim \texttt{Logistic}(0, 1), \\
    &Y \gets \I_{X - U + E + 0.1 > 0},
    \end{split}\label{eq:bow}
\end{equation}
where probabilities $\rho_X$ are given by
\begin{align*}
  &\rho_X = \frac{1}{1 + \exp(U)}.
\end{align*}
Each observation $\left(x^{(n)}, y^{(n)} \right)$ is an independent draw from the observational distribution $P(X, Y)$. In this experiment, we set hyperparameters $\alpha_{U} = d_{U} = 8$. 

\paragraph{Experiment 3: IST}International Stroke Trials (IST) was a large, randomized, open trial of up to $14$ days of antithrombotic therapy after stroke onset \citep{carolei1997international}. The aim was to provide reliable evidence on the efficacy of aspirin and of heparin. The dataset is released under Open Data Commons Attribution License (ODC-By). In particular, the treatment $X$ is a pair $(i, j)$ where $i = 0$ stands for no aspirin allocation, $1$ otherwise; $j = 0$ stands for no heparin allocation, $1$ for median-dosage, and $2$ for high-dosage. The primary outcome $Y \in \{0, \dots, 3\} $ is the health of the patient $6$ months after the treatment, where $0$ stands for death, $1$ for being dependent on the family, $2$ for the partial recovery, and $3$ for the full recovery.

To emulate the presence of unobserved confounding, we filter the experimental data with selection rules $f_X^{(Z)}$, $Z \in \{0, \dots, 9\}$, following a procedure in \citep{zhang2021bounding}. More specifically, we are provided with a collection of IST samples $\{X^{(n)}, Y^{(n)}, U_{2}^{(n)} \}_{n = 1}^N$ where $U_{2}^{(n)}$ is the age of the $n$-th patient. For each data point $\left (X^{(n)}, Y^{(n)}, U_{2}^{(n)} \right)$, we introduce an instrumental variable $Z^{(n)} \in \{0, \dots, 9\}$. Values of the instrumental variable $Z^{(n)}$ for the $n$-th patient are decided by 
\begin{align}
    Z^{(n)} = \floor{10 \times U_1}, \text{ where }U^{(n)}_1 \sim \texttt{Unif}(0, 1).
\end{align}
We then check if $X^{(n)}$ satisfies the following condition 
\begin{align}
    X^{(n)} = \floor{6 \times \rho_X},
\end{align}
where parameter $\rho_X$ is given by
\begin{align*}
  &\rho_X = \frac{1}{1 + \exp\left(-U^{(n)}_2 / 100 - Z^{(n)}/10 \right)}
\end{align*}
If the above condition is satisfied, we keep the data point $\left (X^{(n)}, Y^{(n)}, Z^{(n)}, U_1^{(n)}, U_{2}^{(n)} \right)$ in the dataset; otherwise, the data point is dropped. After this data selection process is complete, we hide columns of variables $U_{1}^{(n)}, U_{2}^{(n)}$. Doing so allows us to obtain $N = 1 \times 10^3$ synthetic observational samples $\5V = \left \{X^{(n)}, Y^{(n)}, Z^{(n)} \right \}_{n = 1}^N$ that are compatible with the  ``IV''' diagram of \Cref{fig1a}. 

In this experiment, we set hyperparameters $\alpha_{U_1} = 10$ and $\alpha_{U_2} = 1$. As a baseline, we estimate the treatment effect $E[Y_{x = (1, 0)}] = 1.3418$ for only assigning aspirin $X = (1, 0)$ from randomized trial data containing $1.9285\times 10^4$ subjects. 

\begin{figure*}[t]
  \centering
  \null
  \begin{subfigure}{0.2\linewidth}\centering
    \begin{tikzpicture}
      \node[vertex] (Z) at (0.75, 0.75) {Z};
      \node[vertex] (X) at (0, 0) {X};
      \node[vertex] (W) at (1.5, 1.5) {W};
      \node[vertex] (Y) at (3, 0) {Y};
      \node[uvertex] (U1) at (0.2, 1.3) {U\textsubscript{1}};
      \node[uvertex] (U2) at (2.25, 0.75) {U\textsubscript{2}};
      \node[uvertex] (U3) at (1.5, 0.2) {U\textsubscript{3}};
      
      \draw[dir] (X) -- (Y);
      \draw[dir] (Z) -- (X);
      \draw[dir] (W) -- (Z);
      \draw[dir, dashed] (U1) -- (W);
      \draw[dir, dashed] (U1) -- (X);
      \draw[dir, dashed] (U3) -- (Z);
      \draw[dir, dashed] (U2) -- (W);
      \draw[dir, dashed] (U2) -- (Y);
  \end{tikzpicture}
  \caption{Napkin}
  \label{fig:app_c2a}
  \end{subfigure}\hfill
  \begin{subfigure}{0.2\linewidth}\centering
    \begin{tikzpicture}
      \node[vertex] (Z) at (0, 0) {Z};
      \node[vertex] (X) at (1.5, 0) {X};
      \node[vertex] (Y) at (3, 0) {Y};
      \node[uvertex, opacity = 0] (dummy) at (0.75, 1.5) {U\textsubscript{1}};
      \node[uvertex, opacity = 0] (dummy) at (2.25, 1.5) {U\textsubscript{2}};
      \node[uvertex] (U1) at (0.75, 1) {U\textsubscript{1}};
      \node[uvertex] (U2) at (2.25, 1) {U\textsubscript{2}};
      
      \draw[dir] (X) -- (Y);
      \draw[dir] (Z) -- (X);
      \draw[dir, dashed] (U1) -- (Z);
      \draw[dir, dashed] (U1) -- (X);
      \draw[dir, dashed] (U2) -- (X);
      \draw[dir, dashed] (U2) -- (Y);
  \end{tikzpicture}
  \caption{Double Bow}
    \label{fig:app_c2b}
  \end{subfigure}\hfill
  \begin{subfigure}{0.2\linewidth}\centering
    \begin{tikzpicture}
      \node[vertex] (X) at (0, 0) {X};
      \node[vertex] (Z) at (1.5, 0.75) {Z};
      \node[vertex] (Y) at (3, 0) {Y};
      \node[uvertex] (U1) at (0.5, 1.5) {U\textsubscript{1}};
      \node[uvertex] (U2) at (2.5, 1.5) {U\textsubscript{2}};
      
      \draw[dir] (X) -- (Y);
      \draw[dir] (Z) -- (X);
      \draw[dir] (Z) -- (Y);
      \draw[dir, dashed] (U1) -- (Z);
      \draw[dir, dashed] (U1) -- (X);
      \draw[dir, dashed] (U2) -- (Z);
      \draw[dir, dashed] (U2) -- (Y);
  \end{tikzpicture}
  \caption{M+BD Graph}
    \label{fig:app_c2c}
  \end{subfigure}\hfill
  \begin{subfigure}{0.3\linewidth}\centering
  \begin{tikzpicture}
    \node[vertex] (Z) at (0, 0) {Z};
    \node[vertex] (W) at (1.5, 0) {W};
    \node[vertex] (X) at (3, 0) {X};
    \node[vertex] (Y) at (4.5, 0) {Y};
    \node[uvertex, opacity = 0] (dummy) at (0.75, 1.5) {U\textsubscript{1}};
    \node[uvertex, opacity = 0] (dummy) at (2.25, 1.5) {U\textsubscript{2}};
    \node[uvertex, opacity = 0] (dummy) at (3.75, 1.5) {U\textsubscript{3}};
    \node[uvertex] (U1) at (0.75, 1) {U\textsubscript{1}};
    \node[uvertex] (U2) at (2.25, 1) {U\textsubscript{2}};
    \node[uvertex] (U3) at (3.75, 1) {U\textsubscript{3}};
    
    \draw[dir] (X) -- (Y);
    \draw[dir] (Z) -- (W);
    \draw[dir] (W) -- (X);
    \draw[dir, dashed] (U1) -- (Z);
    \draw[dir, dashed] (U1) -- (W);
    \draw[dir, dashed] (U3) -- (X);
    \draw[dir, dashed] (U3) -- (Y);
    \draw[dir, dashed] (U2) -- (W);
    \draw[dir, dashed] (U2) -- (X);
\end{tikzpicture}
\caption{Triple Bow}
  \label{fig:app_c2d}
\end{subfigure}\null
  \caption{Causal diagrams for Experiment 5 (\subref{fig:app_c2a}), Experiment 6 (\subref{fig:app_c2b}), Experiment 7 (\subref{fig:app_c2c}), and Experiment 8 (\subref{fig:app_c2d}). Each diagram contains (not exclusively) a treatment $X$, an outcome $Y$, ancestors $Z, W$, and exogenous variables $U_i$, $i = 1, 2, 3$.}
  \label{fig:app_c2}
\end{figure*}
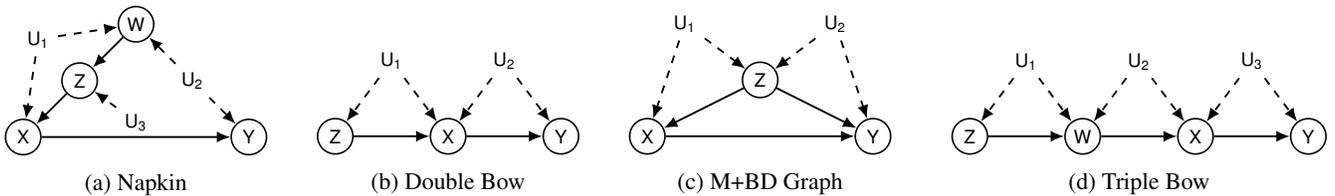

\paragraph{Experiment 4: Obs. + Exp.}We study the problem of evaluating counterfactual probabilities $P(z, x_{z'}, y_{x'})$ from the combination of the observational distribution $P(X, Y, Z)$ and interventional distributions $P(X_z, Y_z)$, $\forall z \in \D_Z$, in the causal diagram of \Cref{fig1b}. We collect $N = 10^3$ samples $\5v = \{x^{(n)}, y^{(n)}, z^{(n)} \}_{n = 1}^N$ from an SCM compatible with \Cref{fig1b}, which we define as follows:
\begin{equation}
  \begin{split}
    &U_1 \sim \texttt{Unif}(0, 1), \\
    &U_2 \sim \texttt{Unif}(0, 1), \\
    &Z \gets \min \left \{\floor{15 \cdot U_1}, 9 \right \}, \\
    &X \sim \texttt{Binomial}(9, \rho_X), \\
    &Y \sim \texttt{Binomial}(9, \rho_Y),
  \end{split} \label{eq:see_do}
\end{equation}
where for any real $\alpha \in \mathbb{R}$, the operator $\floor{\alpha}$ denotes the largest integer $n \in \mathbb{Z}$ smaller than $\alpha$, i.e., $\floor{\alpha} = \min\{n \in \mathbb{Z} \mid n \geq \alpha\}$; probabilities $\rho_X, \rho_Y$ are given by
\begin{align*}
  &\rho_X = \frac{1}{1 + \exp(-Z - U_2)},\\
  &\rho_Y = \frac{1}{1 + \exp(X / 10 - U_1 \cdot U_2)}.
\end{align*}
Each sample $\left(x^{(n)}, y^{(n)}, z^{(n)} \right)$ is an independent draw from the observational distribution $P(X, Y, Z)$ or an interventional distribution $P(X_z, Y_z)$. To obtain a sample from $P(X_z, Y_z)$, we pick a constant $z \in \D_Z$ uniformly at random, perform intervention $\doo(Z = z)$ in the SCM described in \Cref{eq:see_do} and observed subsequent outcomes. In this experiment, we set hyperparameters $\alpha_{U_1} = 10$ and $\alpha_{U_2} = 10$. 

\subsection{C.1 Additional Simulation Results}
We also evaluate our algorithms on various simulated SCM instances in other more involved causal diagrams. Overall, we found that simulation results match our findings in the main manuscript. For identifiable settings (Experiment 5), our algorithms are able to recover the actual, unknown counterfactual probabilities. For non-identifiable settings, our algorithm consistently dominates existing bounding strategies: it achieves sharp bounds if closed-formed solutions exist (Experiments 6); otherwise, it improves over state-of-art bounds (Experiment 7). Finally, for other more challenging non-identifiable settings where existing strategies do not apply (Experiments 8), our algorithm is able to achieve effective bounds over unknown counterfactual probabilities.

In all experiments, we evaluate our proposed strategy using credible intervals (\textit{ci}). In particular, we draw at least $4 \times 10^3$ samples from the posterior distribution $P\left(\theta_{\text{ctf}} \mid \5v \right)$ over the target counterfactual. This allows us to compute $100\%$ credible interval over $\theta_{\text{ctf}}$ within error $\epsilon = 0.05$, with probability at least $1 - \delta = 0.95$. As the baseline, we also include the actual counterfactual probability, labeled as $\theta^*$.

\paragraph{Experiment 5: Napkin Graph}Consider the ``Napkin'' graph in \Cref{fig:app_c2a} where $X, Y, Z, W$ are binary variables in $\{0, 1\}$; $U_1, U_2, U_3$ take values in real $\3R$. The identifiability of interventional probabilities $P(y_x)$ from the observational distribution $P(X, Y, Z, W)$ could be derived by iteratively applying inference rules of ``do-calculus'' \citep[Thm.~4.3.1]{pearl:2k}. We collect $N = 10^4$ observational samples $\5v = \{x^{(n)}, y^{(n)}, z^{(n)}, w^{(n)}\}_{n = 1}^N$ from an SCM compatible with \Cref{fig:app_c2a}, defined as follows:
\begin{equation}
    \begin{split}
    &U_i \sim \texttt{Normal}(0, 1), \;\; i = 1, 2, 3,\\
    &W \sim \texttt{Binomial}(1, \rho_W), \\
    &Z \sim \texttt{Binomial}(1, \rho_Z), \\
    &X \sim \texttt{Binomial}(1, \rho_X), \\
    &Y \sim \texttt{Binomial}(1, \rho_Y),
    \end{split}
\end{equation}
where probabilities $\rho_W, \rho_Z, \rho_X, \rho_Y$ are given by:
\begin{align*}
  &\rho_W = \frac{1}{1 + \exp(U_1 - U_2)},\\
  &\rho_Z = \frac{1}{1 + \exp(W - U_3)}, \\
  &\rho_X = \frac{1}{1 + \exp(-Z - U_1)},\\
  &\rho_Y = \frac{1}{1 + \exp(X - U_2 - 0.5)}.
\end{align*}
Each observation $\left(x^{(n)}, y^{(n)}, z^{(n)}, w^{(n)}\right)$ is an independent draw from the observational distribution $P(X, Y, Z, W)$. In this experiment, we set hyperparameters $\alpha_{U_1} = d_{U_1} = 32$, $\alpha_{U_2} = d_{U_1} = 32$, and $\alpha_{U_3} = d_{U_3} = 4$. \Cref{fig:app_c1a} shows a histogram containing samples drawn from the posterior distribution of $\left ( P(Y_{x = 0} = 1) \mid \5v \right)$. Our analysis reveals that these samples converges to the actual interventional probability $P(Y_{x = 0} = 1) = 0.6098$, which confirms the identifiability of $P(y_x)$ in the napkin graph.


\begin{figure*}[t]
\centering
\null
\begin{subfigure}{0.245\linewidth}\centering
  \includegraphics[width=\linewidth]{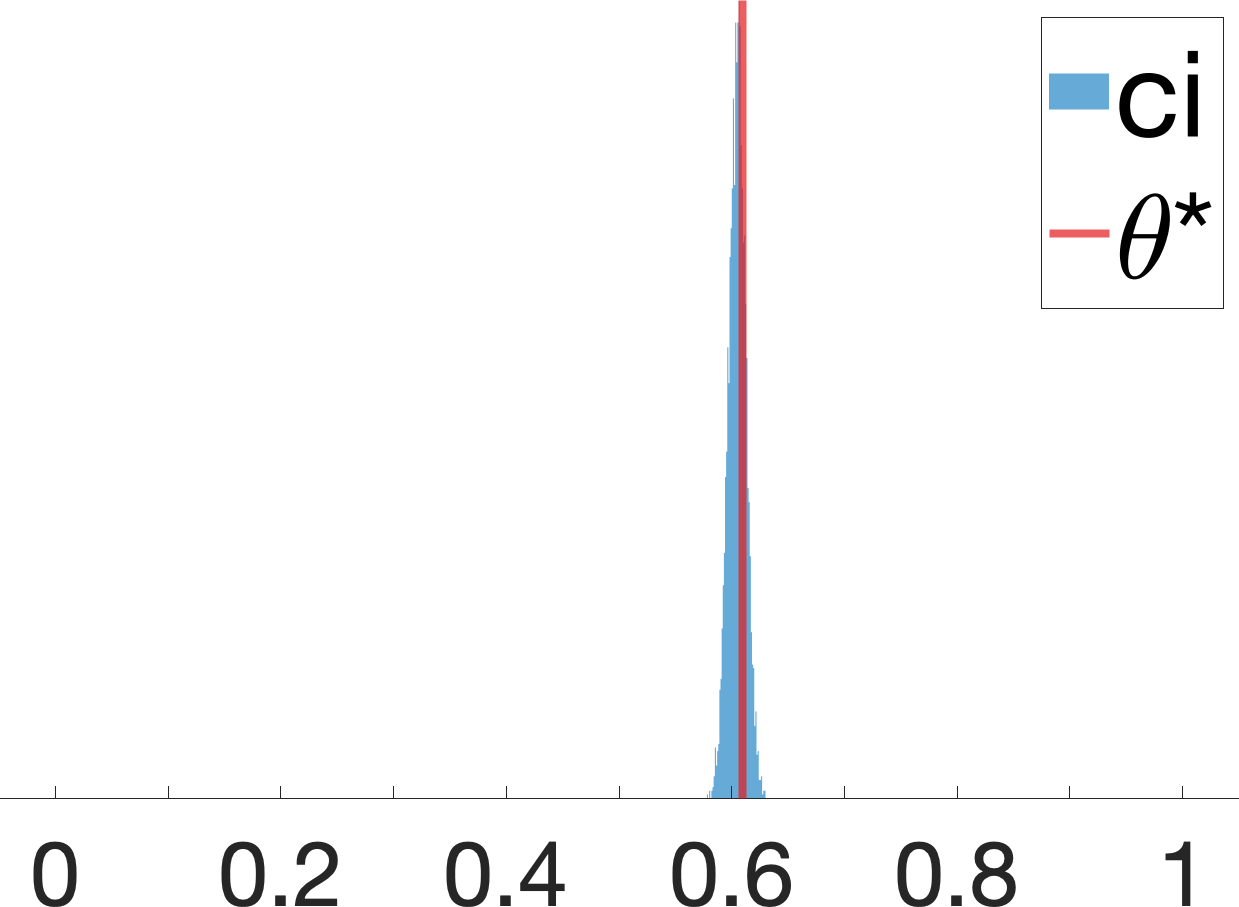}
\caption{Napkin}
\label{fig:app_c1a}
\end{subfigure}\hfill
\begin{subfigure}{0.245\linewidth}\centering
  \includegraphics[width=\linewidth]{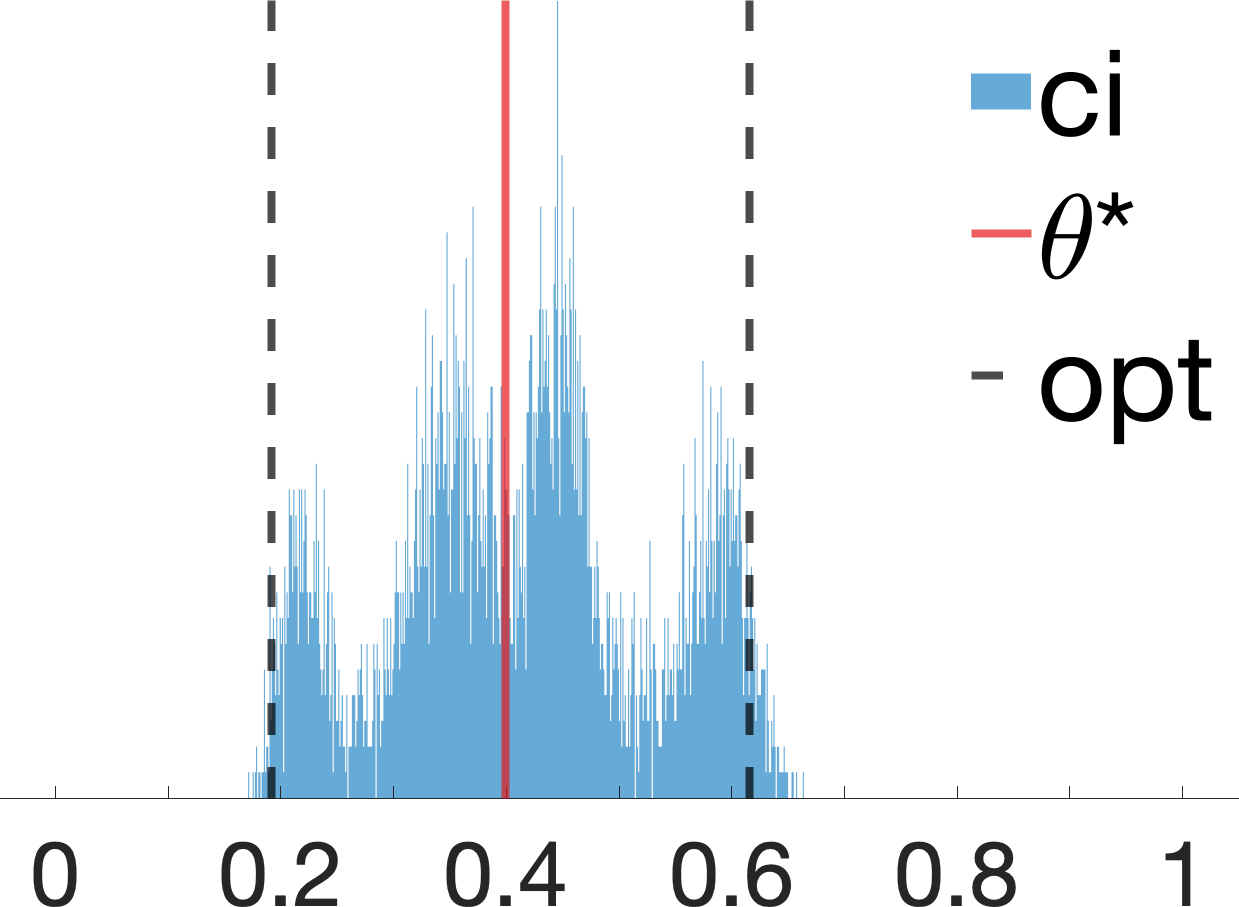}
\caption{Double bow}
\label{fig:app_c1b}
\end{subfigure}\hfill
\begin{subfigure}{0.245\linewidth}\centering
  \includegraphics[width=\linewidth]{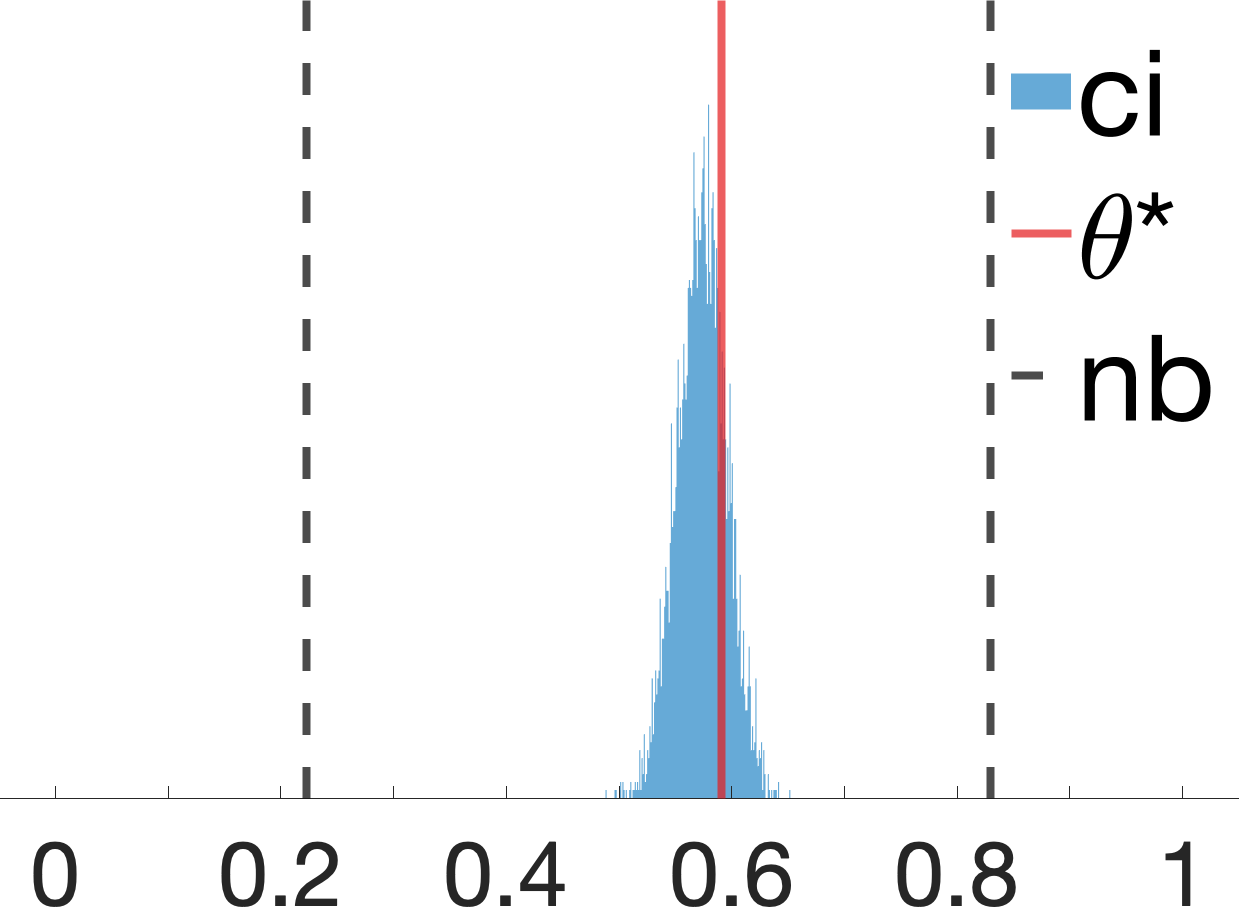}
\caption{M+BD Graph}
\label{fig:app_c1c}
\end{subfigure}\hfill
\begin{subfigure}{0.245\linewidth}\centering
  \includegraphics[width=\linewidth]{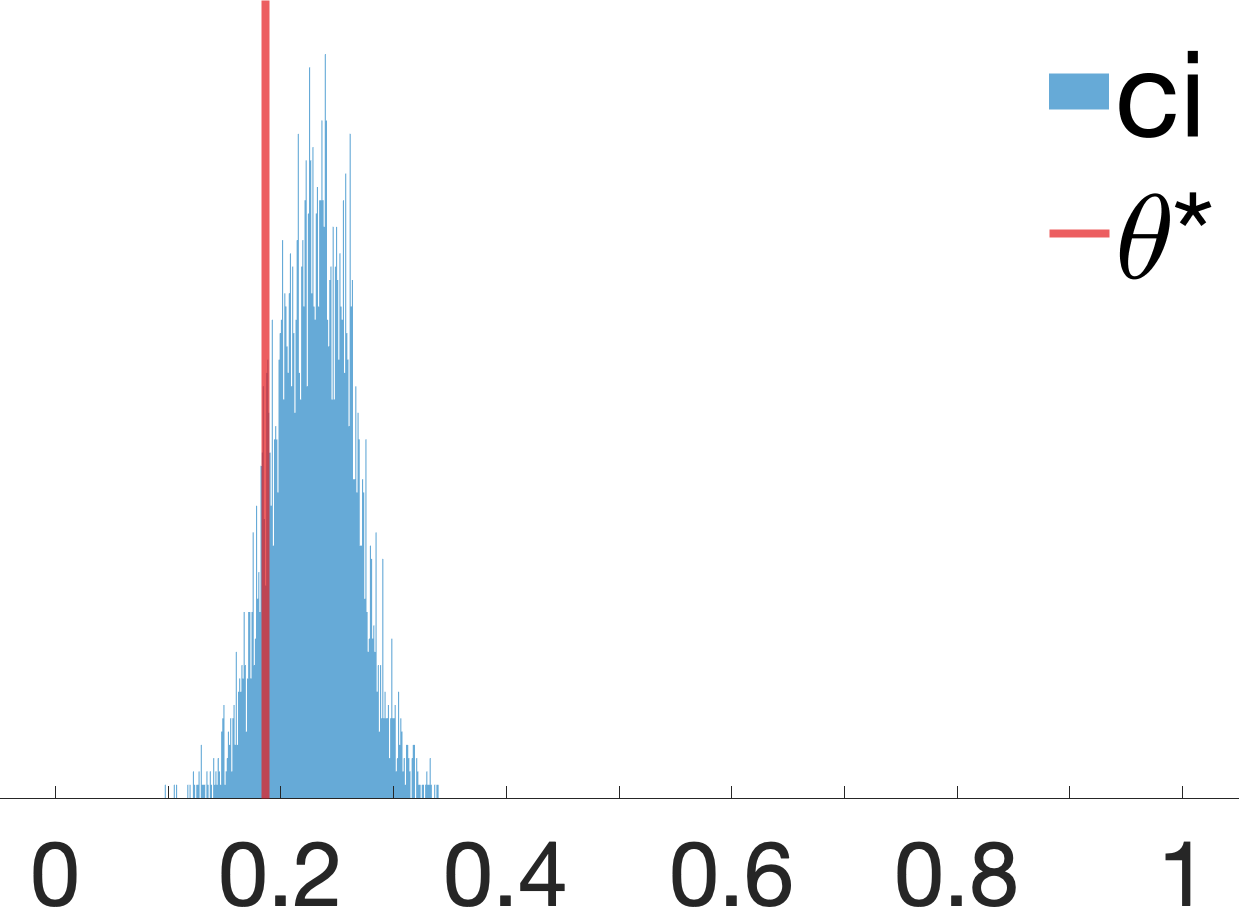}
\caption{Triple bow}
\label{fig:app_c1d}
\end{subfigure}\null
\caption{Histogram plots for samples drawn from the posterior distribution over target counterfactual probabilities. For all plots (\subref{fig:app_c1a} - \subref{fig:app_c1d}), \textit{ci} represents our proposed algorithms; \textit{bp} stands for Gibbs samplers using the representation of canonical partitions \citep{balke:pea94b}; $\theta^*$ is the actual counterfactual probability; \textit{opt} is the optimal asymptotic bounds (if exists); \textit{nb} stands for the natural bounds \citep{manski:90}.}
\label{fig:app_c1}
\end{figure*}

\paragraph{Experiment 6: Double Bow}Consider the ``Double Bow'' diagram in \Cref{fig:app_c2b} where $X, Y, Z \in \{0, 1\}$ and $U_1, U_2 \in \3R$. We study the problem of evaluating interventional probabilities $P(y_x)$ from the observational distribution $P(X, Y, Z)$. We collect $N = 10^3$ observational samples $\5v = \{x^{(n)}, y^{(n)}, z^{(n)}\}_{n = 1}^N$ from an SCM compatible with \Cref{fig1b}. The detailed parametrization of the SCM is defined as follows:
\begin{equation}
    \begin{split}
    &U_i \sim \texttt{Normal}(0, 1), \;\; i = 1, 2, \\
    &Z \sim \texttt{Binomial}(1, \rho_Z),\\
    &X \sim \texttt{Binomial}(1, \rho_X),\\
    &Y \sim \texttt{Binomial}(1, \rho_Y),
    \end{split}
\end{equation}
where probabilities $\rho_Z, \rho_X, \rho_Y$ are given by:
\begin{align*}
  &\rho_Z = \frac{1}{1 + \exp(-U_1)}, \\
  &\rho_X = \frac{1}{1 + \exp(-Z - U_1 - U_2)},\\
  &\rho_Y = \frac{1}{1 + \exp(X - U_2 + 0.5)}.
\end{align*}
Each observation $\left(x^{(n)}, y^{(n)}, z^{(n)} \right)$ is an independent draw from the observational distribution $P(X, Y, Z)$. 

\citep{balke:pea97} introduced a closed-form bound over $P(y_x)$ from the observational distribution $P(X, Y, Z)$ for the ``IV'' diagram in \Cref{fig1a} with binary $X, Y, Z \in \{0, 1\}$. It is verifiable that such a bound is also applicable in \Cref{fig:app_c1b} with binary endogenous domains, and is provably optimal (labeled as \textit{opt}). To obtain a $100\%$ credible intervals, we apply the collapsed Gibbs sampler with hyperparameters $\alpha_{U_1} = d_{U_1} = 32$ and $\alpha_{U_2} = d_{U_1} = 32$. \Cref{fig:app_c1b} shows samples drawn from the posterior distribution of $\left ( P(Y_{x = 0} = 1) \mid \5v \right)$. The analysis reveals that our algorithm derives a valid bound over the actual probability $P(Y_{x = 0} = 1) = 0.3954$; the $100\%$ credible interval converges to the optimal IV bound $l = 0.1980, r = 0.6258$.

\paragraph{Experiment 7: M+BD Graph}Consider the ``M+BD'' graph in \Cref{fig:app_c2c} where $X, Y, Z \in \{0, 1\}$ and $U_1, U_2 \in \3R$. In this case, interventional probabilities $P(y_x)$ are non-identifiable from the observational distribution $P(X, Y, Z)$ due to the presence of the collider path $X \leftarrow U_1 \rightarrow Z \leftarrow U_2 \rightarrow Y$. We collect $N = 10^3$ observational samples $\5v = \{x^{(n)}, y^{(n)}, z^{(n)}\}_{n = 1}^N$ from an SCM compatible with \Cref{fig:app_c2c}. The detailed parametrization of the SCM is provided as follows:
\begin{equation}
    \begin{split}
    &U_i \sim \texttt{Normal}(0, 1), \;\; i = 1, 2, \\
    &Z \sim \texttt{Binomial}(1, \rho_Z), \\
    &X \sim \texttt{Binomial}(1, \rho_X), \\
    &Y \sim \texttt{Binomial}(1, \rho_Y),
    \end{split}
\end{equation}
where probabilities $\rho_Z, \rho_X, \rho_Y$ are given by:
\begin{align*}
  &\rho_Z = \frac{1}{1 + \exp(-U_1)},\\
  &\rho_X = \frac{1}{1 + \exp(-Z - U_1 - U_2)}, \\
  &\rho_Y = \frac{1}{1 + \exp(X - Z - U_2)}.
\end{align*}
Each observation $\left(x^{(n)}, y^{(n)}, z^{(n)} \right)$ is an independent draw from the observational distribution $P(X, Y, Z)$. 

In this experiment, we set hyperparameters $\alpha_{U_1} = d_{U_1} = 32$ and $\alpha_{U_2} = d_{U_1} = 32$. \Cref{fig:app_c1c} shows samples drawn from the posterior distribution of $\left ( P(Y_{x = 0} = 1) \mid \5v \right)$. As a baseline, we also include the natural bounds introduced in \citep{robins:89b,manski:90} (\textit{nb}). The analysis reveals that all algorithms achieve bounds that contain the actual, target causal effect $P(Y_{x = 0} = 1) = 0.5910$. Our algorithm obtains a $100\%$ credible interval $l_{\textit{ci}}= 0.4884, r_{\textit{ci}}= 0.6519$, which improves over the existing bounding strategy ($l_{\textit{nb}}= 0.2230, r_{\textit{nb}}= 0.8296$). 

\begin{figure*}[t]
\centering
\null
\begin{subfigure}{0.245\linewidth}\centering
  \includegraphics[width=\linewidth]{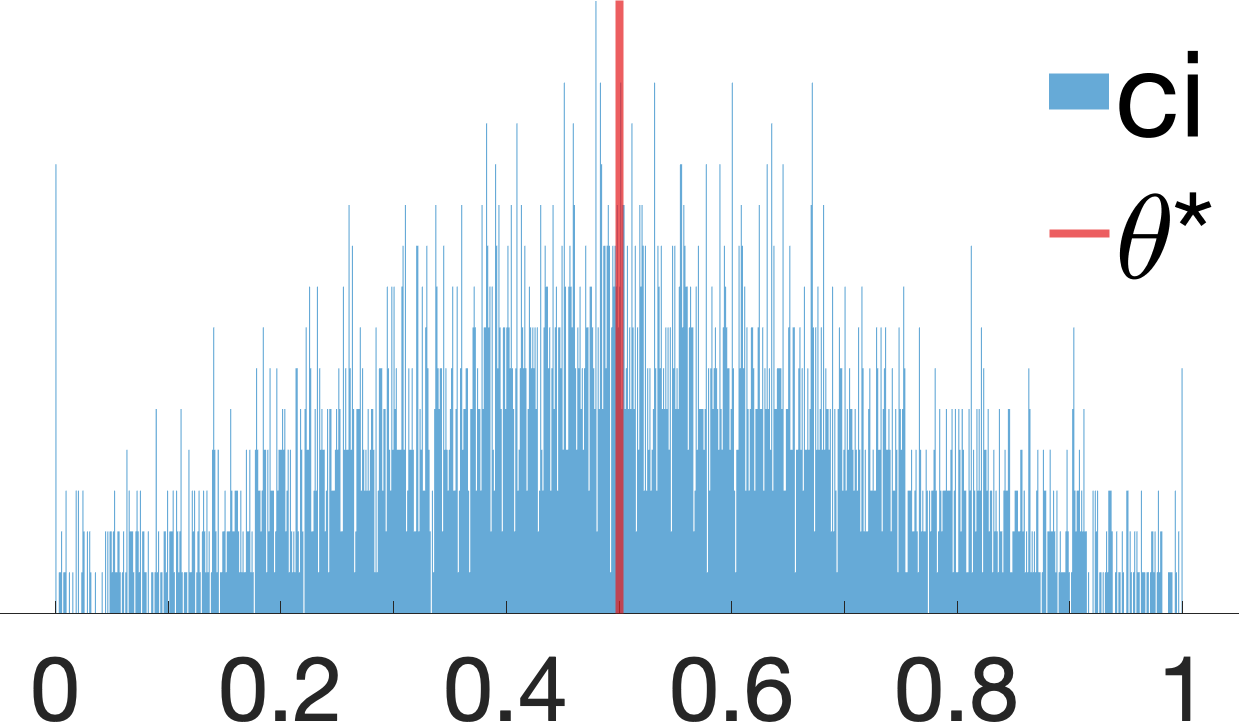}
\caption{Flat}
\label{fig:app_c3a}
\end{subfigure}\hfill
\begin{subfigure}{0.245\linewidth}\centering
  \includegraphics[width=\linewidth]{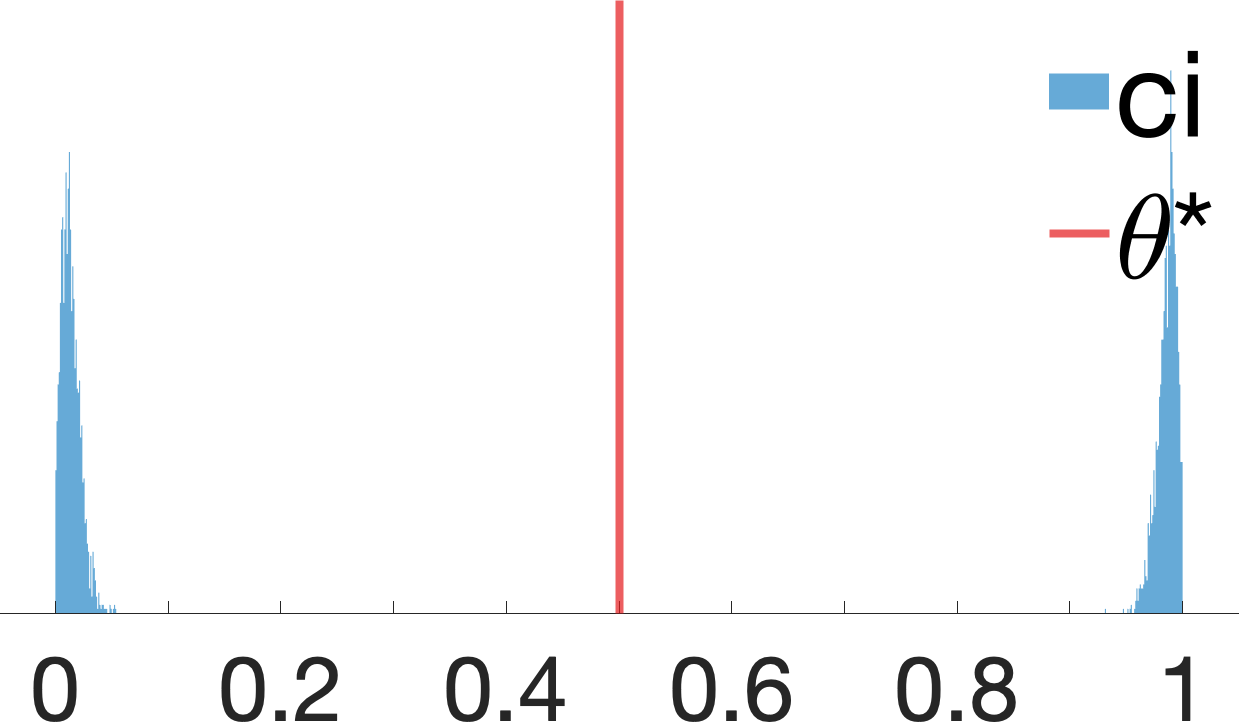}
\caption{Skewed}
\label{fig:app_c3b}
\end{subfigure}\hfill
\begin{subfigure}{0.245\linewidth}\centering
  \includegraphics[width=\linewidth]{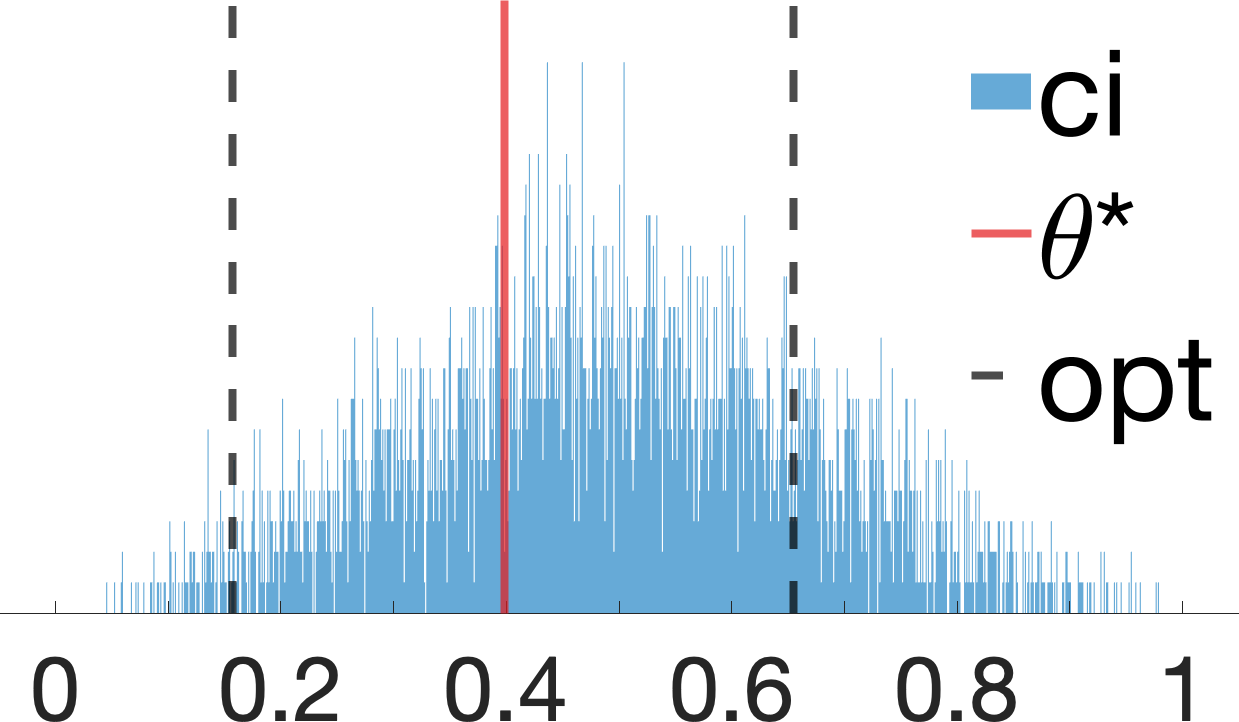}
\caption{Flat}
\label{fig:app_c3c}
\end{subfigure}\hfill
\begin{subfigure}{0.245\linewidth}\centering
  \includegraphics[width=\linewidth]{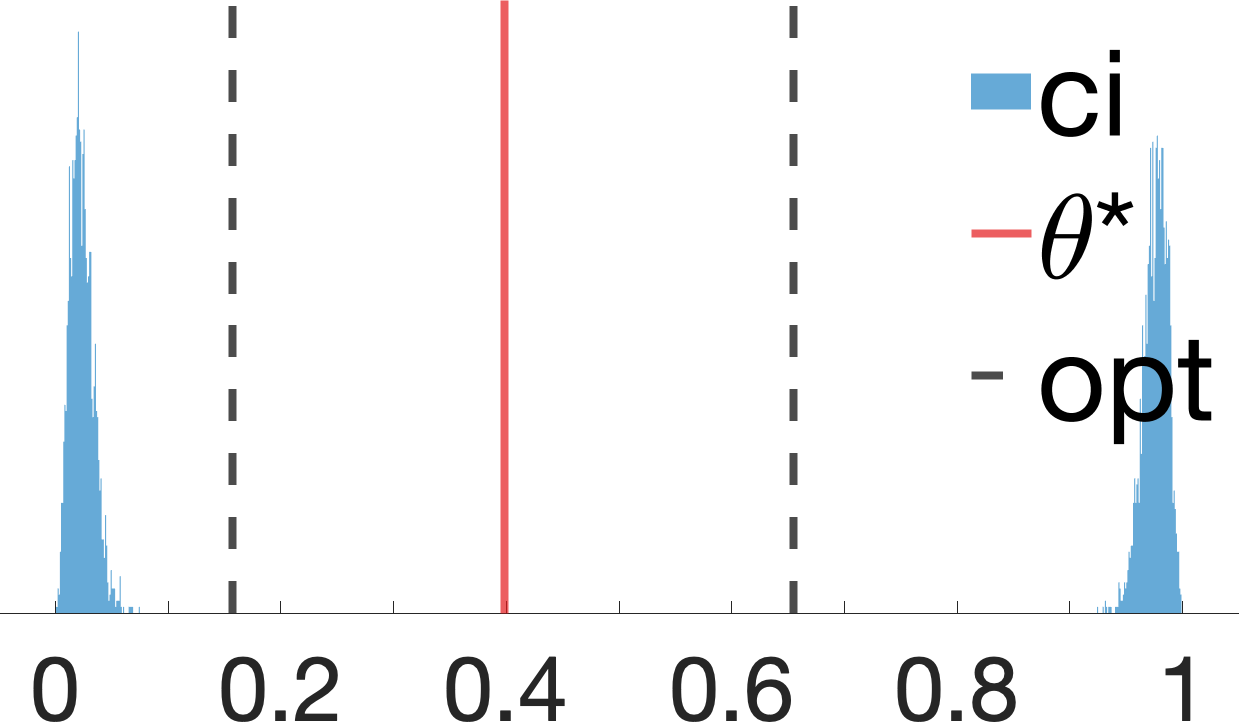}
\caption{Skewed}
\label{fig:app_c3d}
\end{subfigure}\null
\caption{Prior distributions for (\subref{fig:app_c3a}, \subref{fig:app_c3b}) Experiment 9 and (\subref{fig:app_c3c}, \subref{fig:app_c3d}) Experiment 10.}
\label{fig:app_c3}
\end{figure*}

\begin{figure*}[t]
\centering
\null
\begin{subfigure}{0.245\linewidth}\centering
  \includegraphics[width=\linewidth]{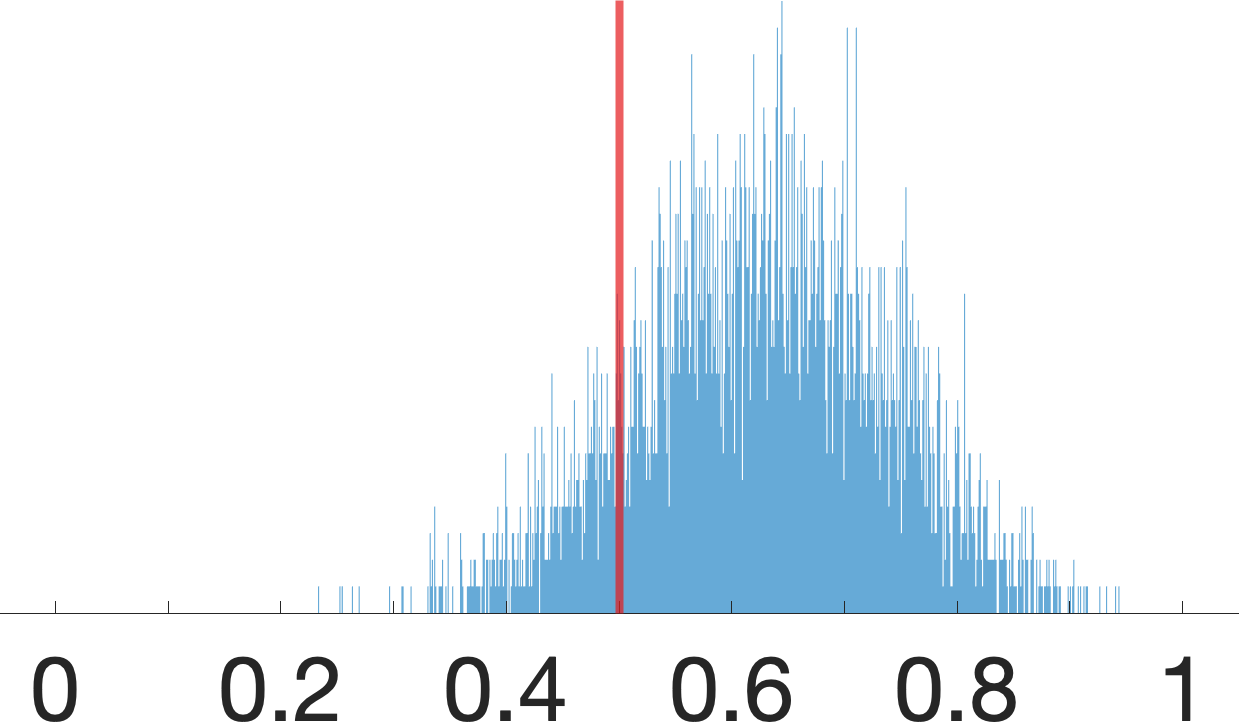}
\caption{$N = 10$}
\label{fig:app_c4a}
\end{subfigure}\hfill
\begin{subfigure}{0.245\linewidth}\centering
  \includegraphics[width=\linewidth]{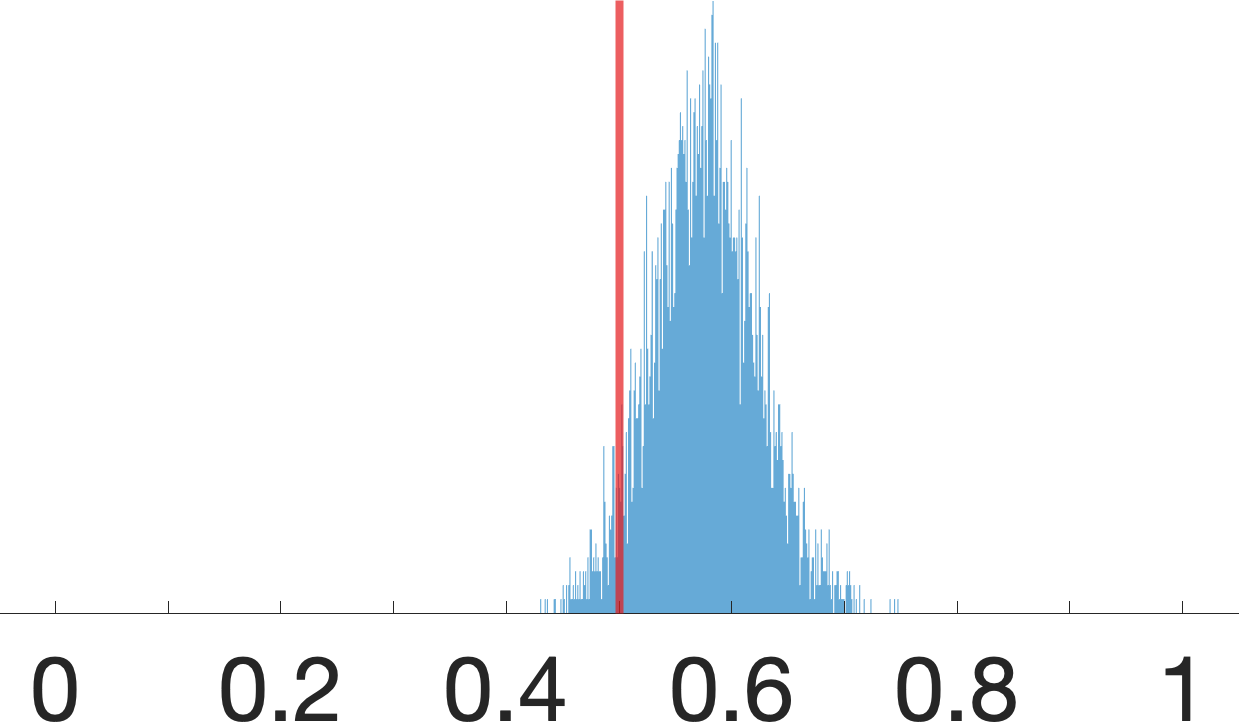}
\caption{$N = 10^2$}
\label{fig:app_c4b}
\end{subfigure}\hfill
\begin{subfigure}{0.245\linewidth}\centering
  \includegraphics[width=\linewidth]{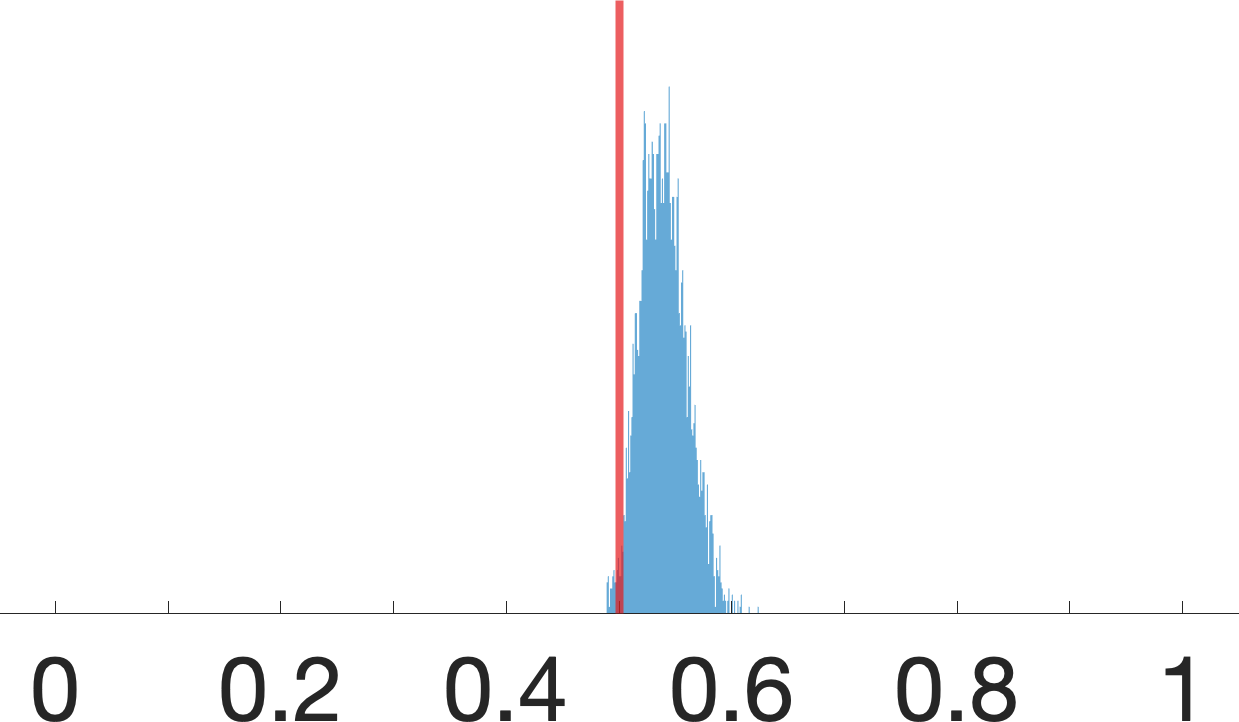}
\caption{$N = 10^3$}
\label{fig:app_c4c}
\end{subfigure}\hfill
\begin{subfigure}{0.245\linewidth}\centering
  \includegraphics[width=\linewidth]{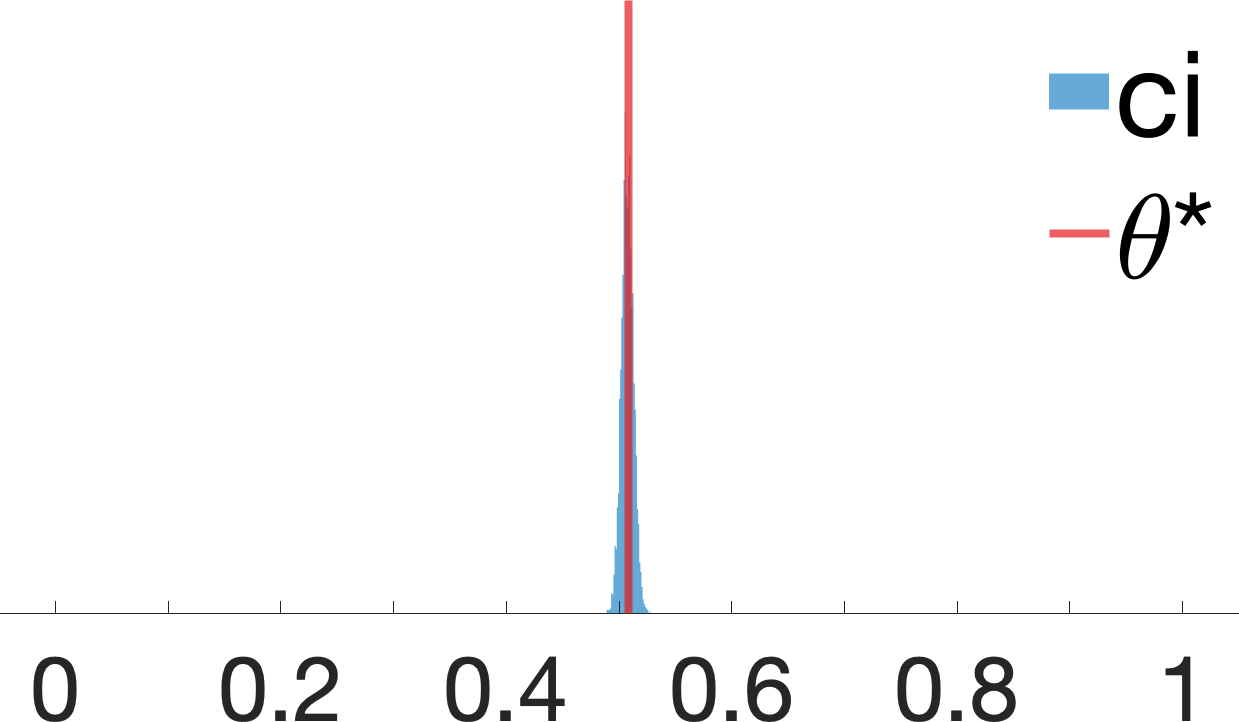}
\caption{$N = 10^4$}
\label{fig:app_c4d}
\end{subfigure}\hfill
\begin{subfigure}{0.245\linewidth}\centering
  \includegraphics[width=\linewidth]{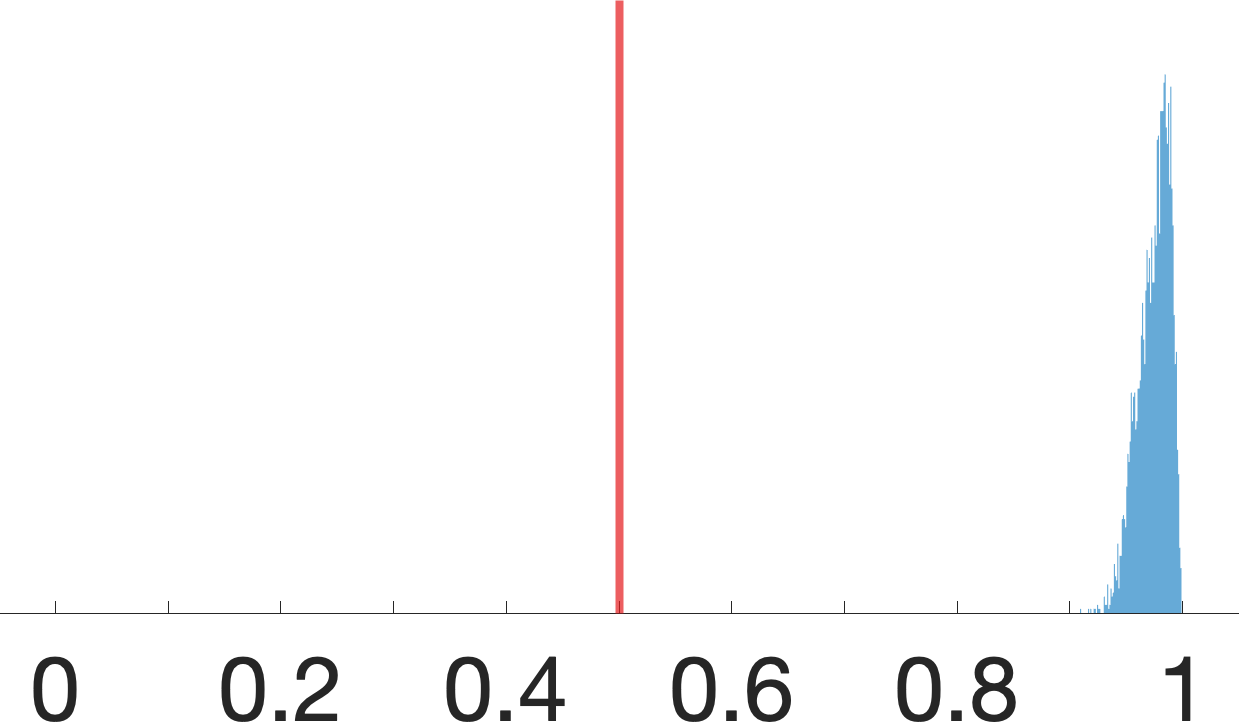}
\caption{$N = 10$}
\label{fig:app_c4e}
\end{subfigure}\hfill
\begin{subfigure}{0.245\linewidth}\centering
  \includegraphics[width=\linewidth]{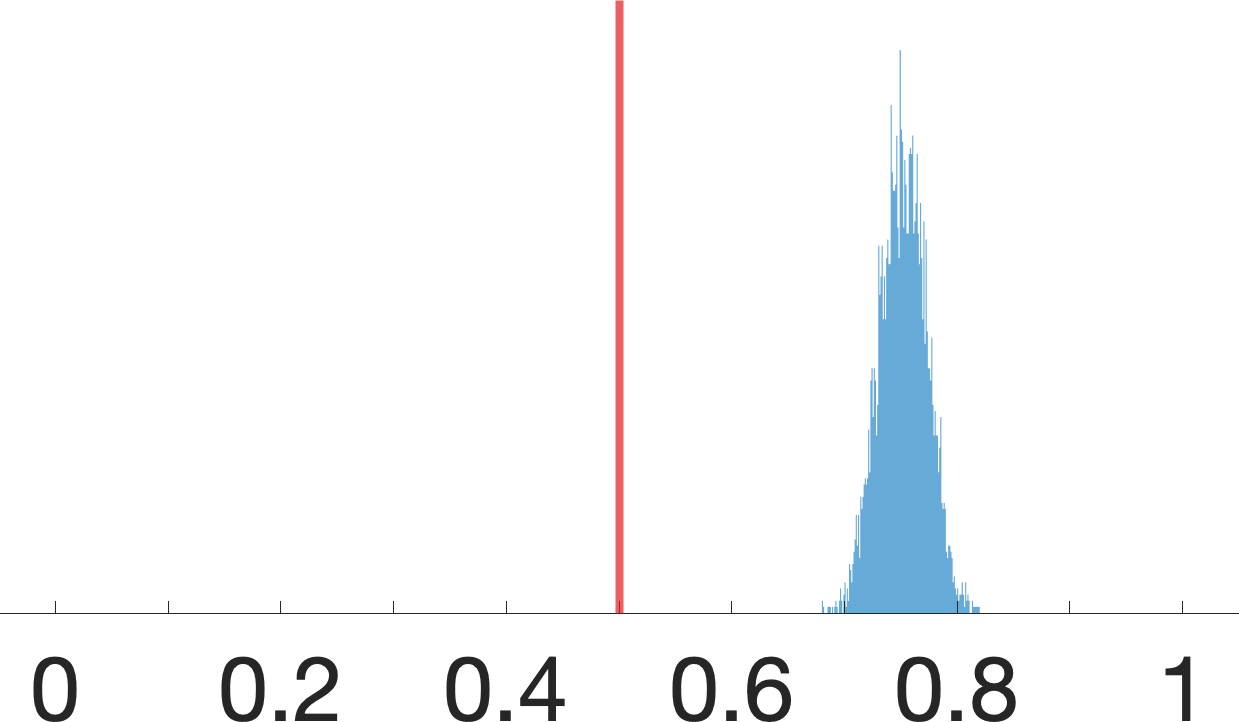}
\caption{$N = 10^2$}
\label{fig:app_c2f}
\end{subfigure}\hfill
\begin{subfigure}{0.245\linewidth}\centering
  \includegraphics[width=\linewidth]{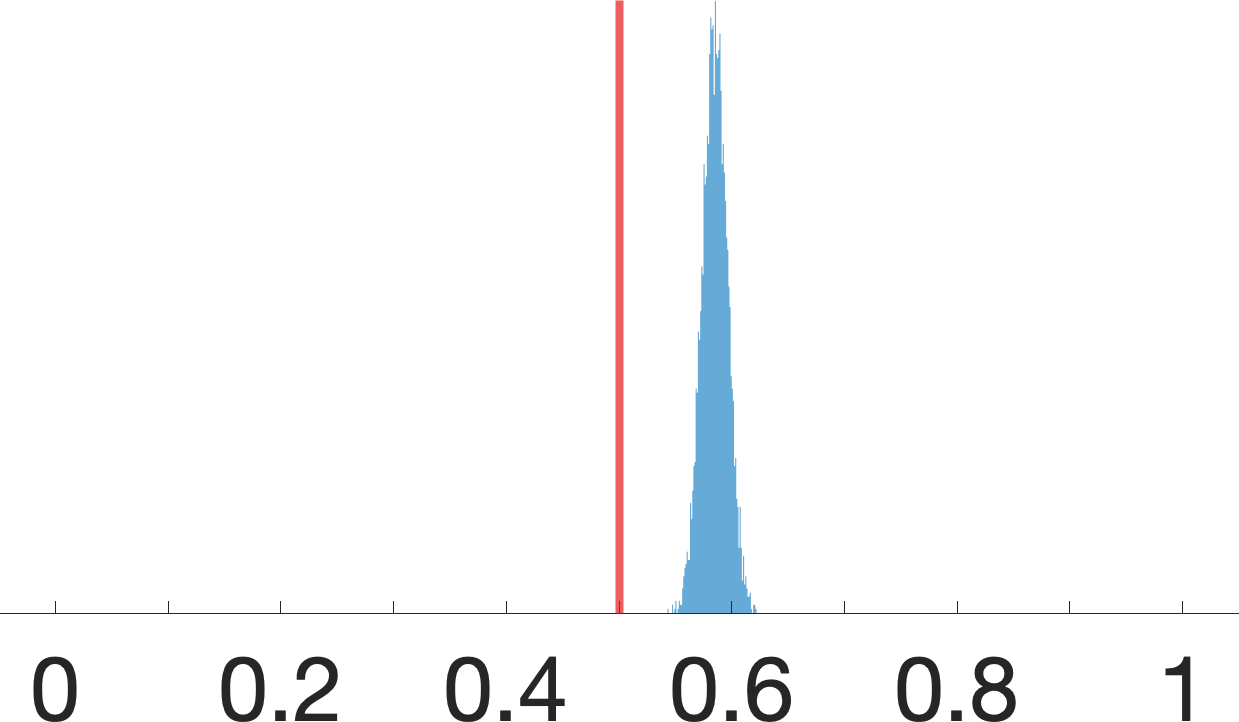}
\caption{$N = 10^3$}
\label{fig:app_c2g}
\end{subfigure}\hfill
\begin{subfigure}{0.245\linewidth}\centering
  \includegraphics[width=\linewidth]{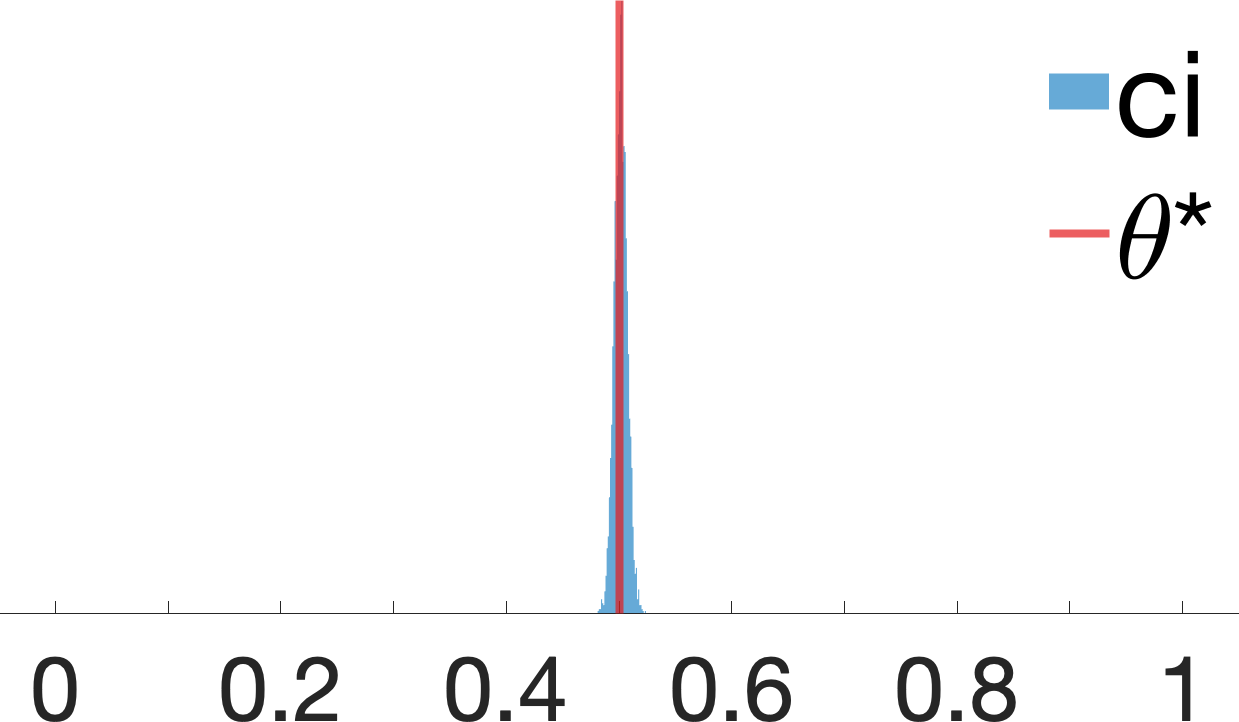}
\caption{$N = 10^4$}
\label{fig:app_c4h}
\end{subfigure}\null
\caption{Histogram plots for samples drawn from the posterior distribution over probability $P(Y_{x=0} = 0)$ in ``Frontdoor'' graph of \Cref{fig1c} using two priors. (\subref{fig:app_c4a} - \subref{fig:app_c4d}) shows the posteriors using the flat prior and observational data of size $N = 10, 10^2, 10^3$ and $10^4$ respectively; (\subref{fig:app_c4e} - \subref{fig:app_c4h}) shows the posetriors using the skewed prior and the same respective observational datasets.}
\label{fig:app_c4}
\end{figure*}

\begin{figure*}[t]
\centering
\null
\begin{subfigure}{0.245\linewidth}\centering
  \includegraphics[width=\linewidth]{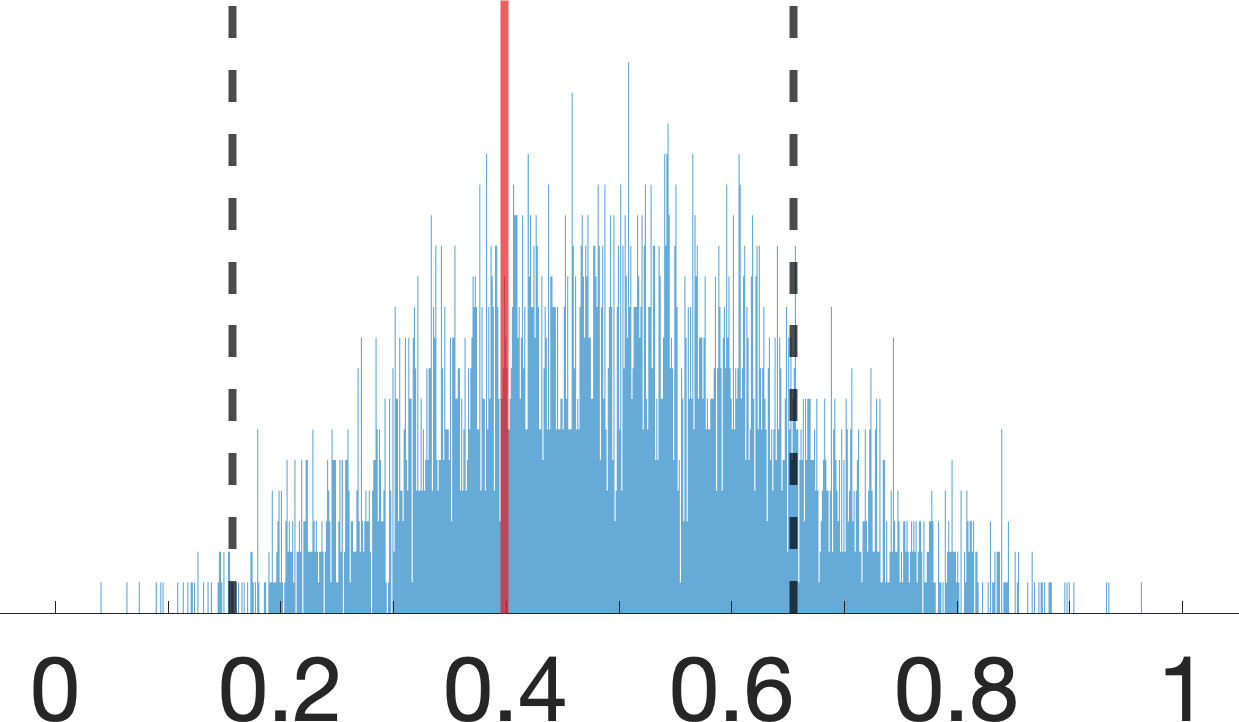}
\caption{$N = 10$}
\label{fig:app_c5a}
\end{subfigure}\hfill
\begin{subfigure}{0.245\linewidth}\centering
  \includegraphics[width=\linewidth]{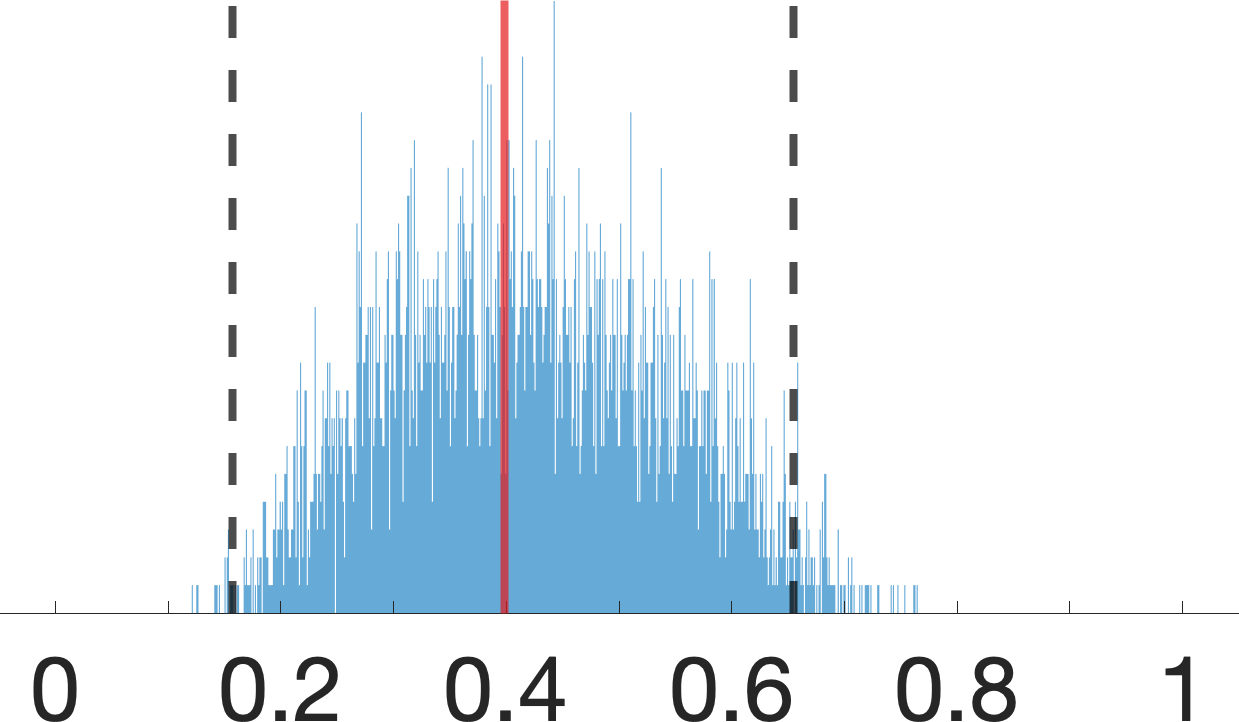}
\caption{$N = 10^2$}
\label{fig:app_c5b}
\end{subfigure}\hfill
\begin{subfigure}{0.245\linewidth}\centering
  \includegraphics[width=\linewidth]{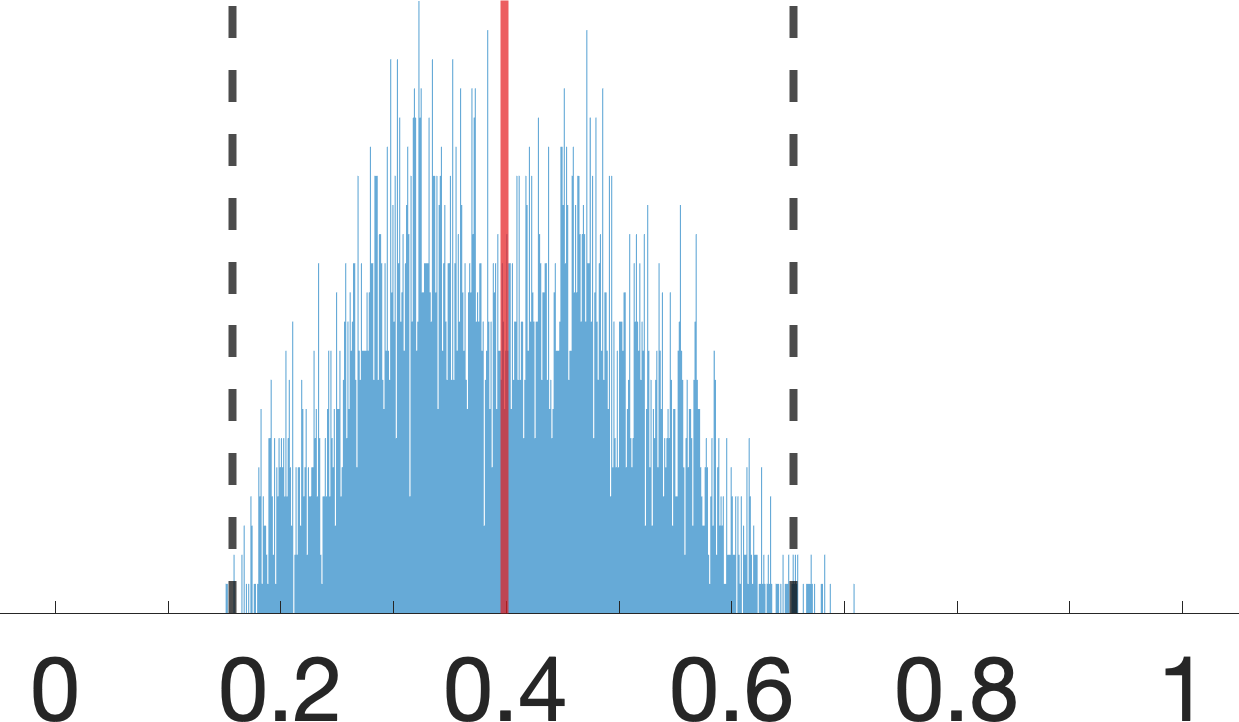}
\caption{$N = 10^3$}
\label{fig:app_c5c}
\end{subfigure}\hfill
\begin{subfigure}{0.245\linewidth}\centering
  \includegraphics[width=\linewidth]{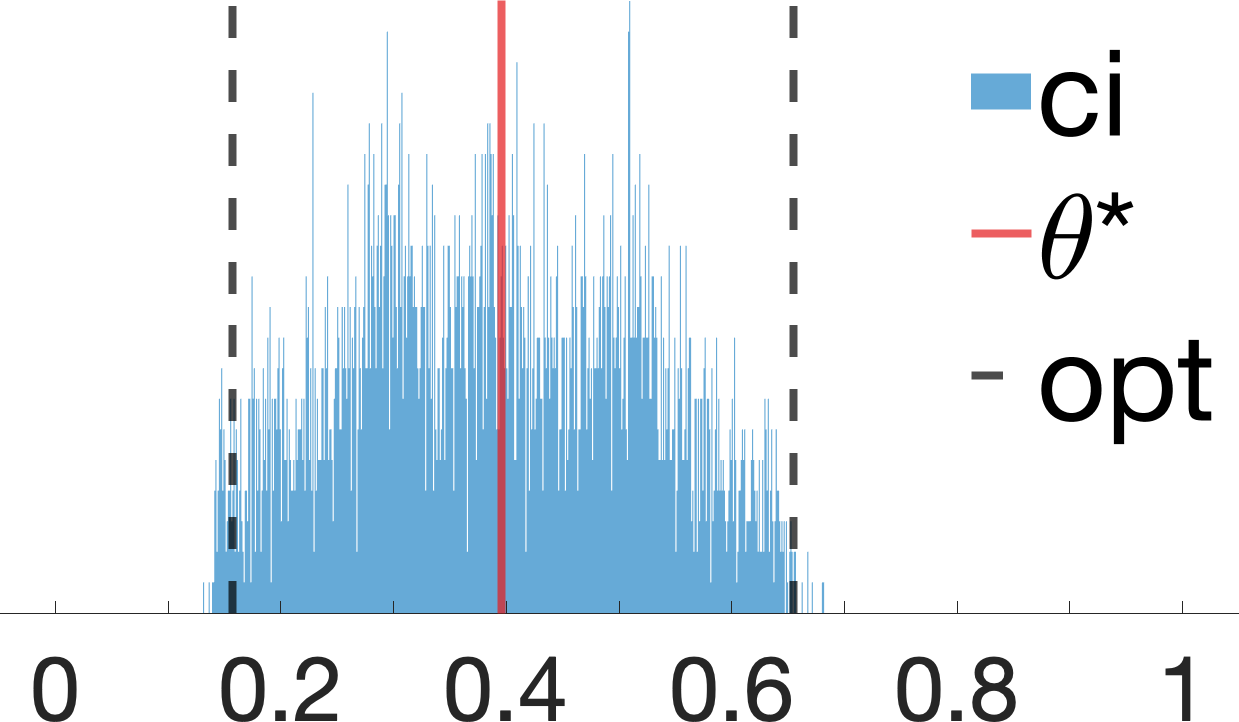}
\caption{$N = 10^4$}
\label{fig:app_c5d}
\end{subfigure}\hfill
\begin{subfigure}{0.245\linewidth}\centering
  \includegraphics[width=\linewidth]{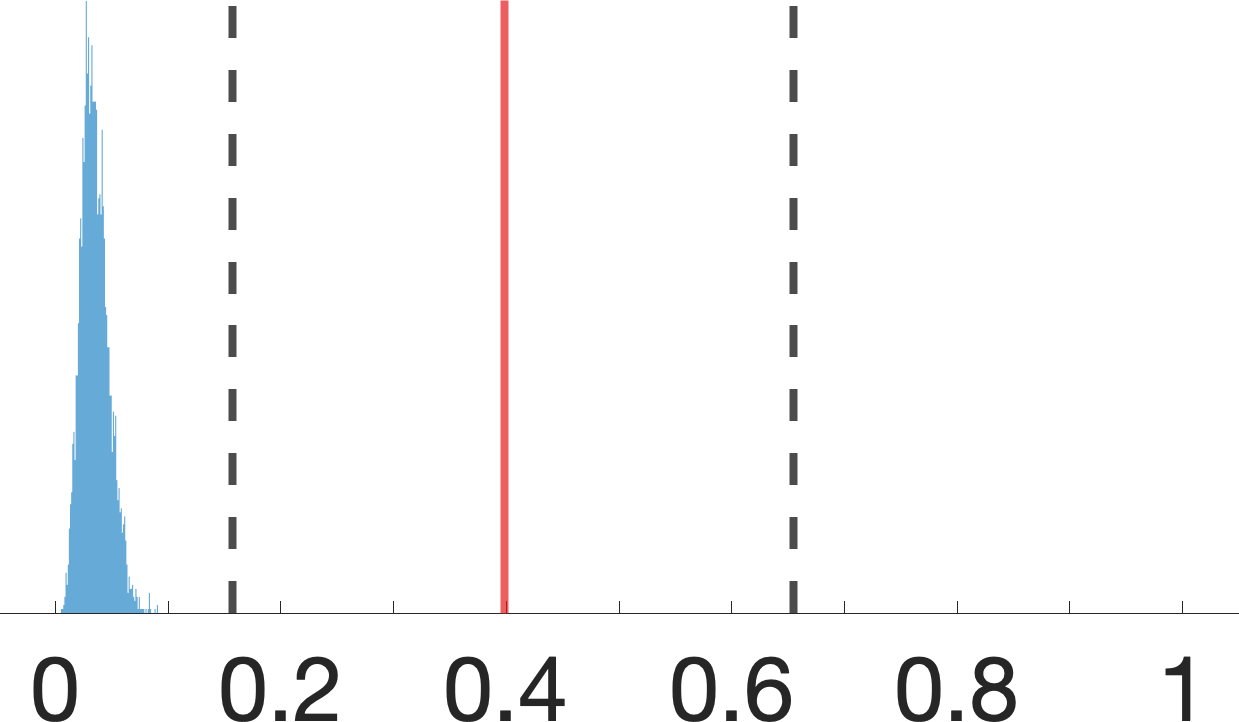}
\caption{$N = 10$}
\label{fig:app_c5e}
\end{subfigure}\hfill
\begin{subfigure}{0.245\linewidth}\centering
  \includegraphics[width=\linewidth]{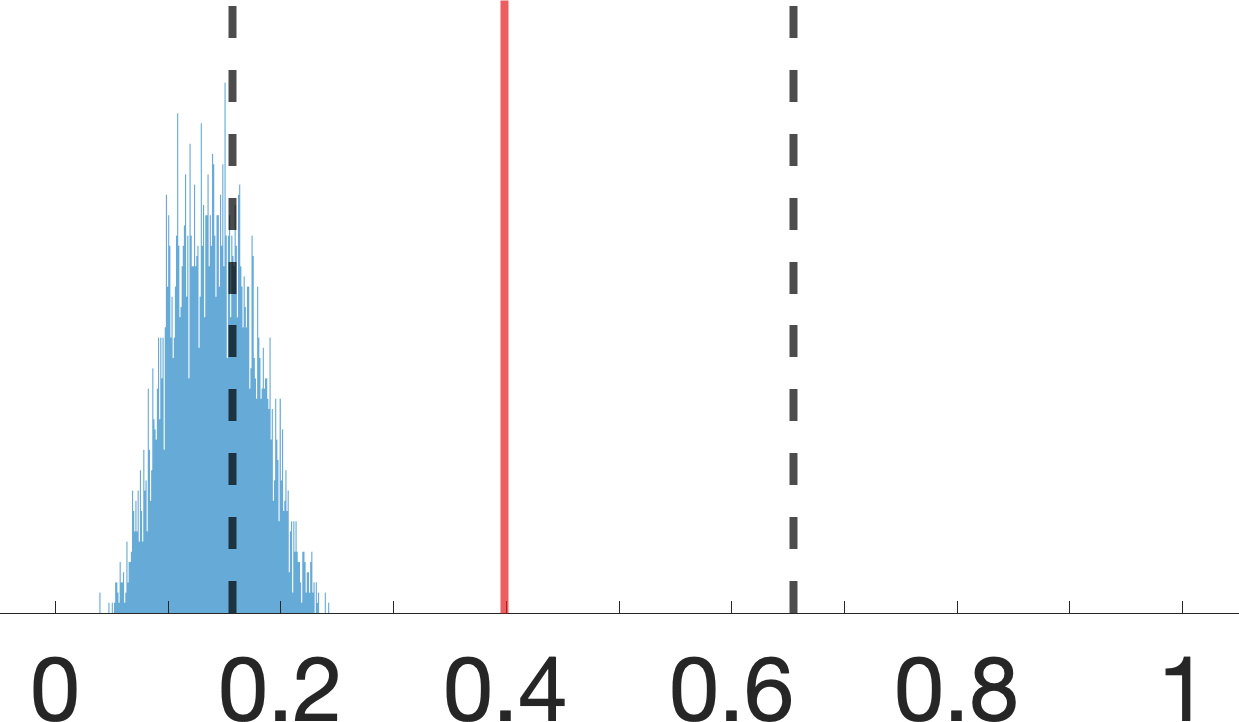}
\caption{$N = 10^2$}
\label{fig:app_c5f}
\end{subfigure}\hfill
\begin{subfigure}{0.245\linewidth}\centering
  \includegraphics[width=\linewidth]{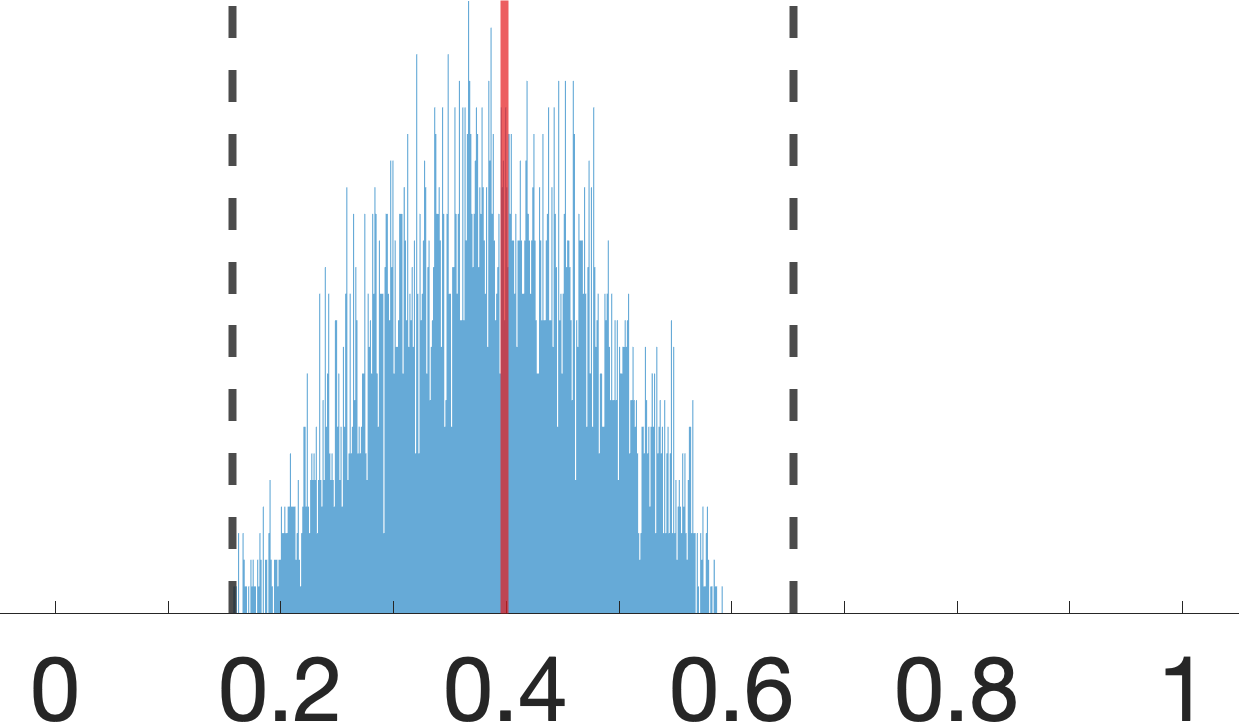}
\caption{$N = 10^3$}
\label{fig:app_c5g}
\end{subfigure}\hfill
\begin{subfigure}{0.245\linewidth}\centering
  \includegraphics[width=\linewidth]{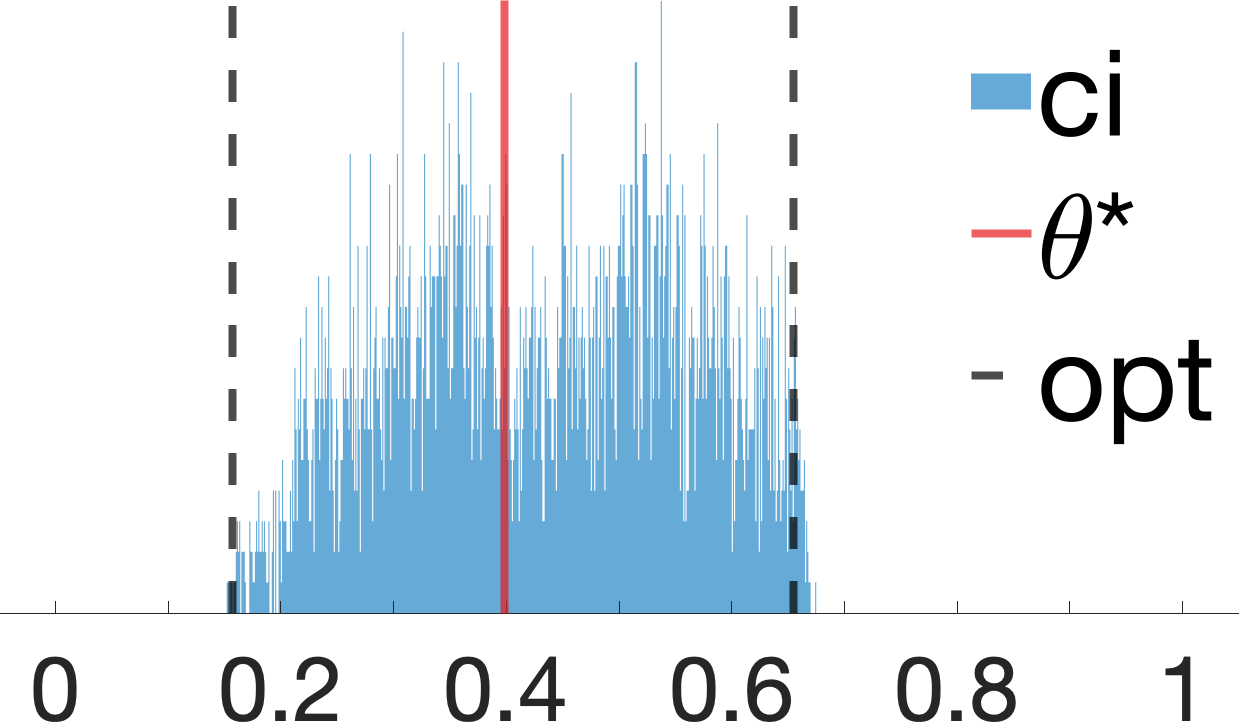}
\caption{$N = 10^4$}
\label{fig:app_c5h}
\end{subfigure}\null
\caption{Histogram plots for samples drawn from the posterior distribution over probability $P(Y_{x=0} = 0)$ in ``IV'' graph of \Cref{fig1a} using two priors. (\subref{fig:app_c5a} - \subref{fig:app_c5d}) shows the posteriors using the flat prior and observational data of size $N = 10, 10^2, 10^3$ and $10^4$ respectively; (\subref{fig:app_c5e} - \subref{fig:app_c5h}) shows the posetriors using the skewed prior and the same respective observational datasets.}
\label{fig:app_c5}
\end{figure*}

\paragraph{Experiment 8: Triple Bow}Consider the ``Triple Bow'' diagram in \Cref{fig:app_c2d} where $X, Y, Z \in \{0, 1\}$ and $U_1, U_2, U_3 \in \3R$. We are interested in evaluating the counterfactual probability $P(Y_{x=1}=1, Y_{x=0} = 0)$ from the combination of the observational distribution $P(X, Y, Z, W)$ and interventional distributions $P(X_z, Y_z, W_z)$. To our best knowledge, existing bounding strategies are not applicable to this setting. We collect $N = 10^3$ samples $\5v = \{x^{(n)}, y^{(n)}, z^{(n)}, w^{(n)}\}_{n = 1}^N$ from an SCM compatible \Cref{fig:app_c2d}. The detailed parametrization of the SCM is provided in the following:
\begin{equation}
    \begin{split}
    &U_1 \sim \texttt{Unif}(0, 1), \\
    &U_i \sim \texttt{Normal}(0, 1), \;\; i = 2, 3, \\
    &Z \sim \floor{1.5 \cdot U_1},\\
    &W \sim \texttt{Binomial}(1, \rho_W),\\
    &X \sim \texttt{Binomial}(1, \rho_X),\\
    &E \sim \texttt{Logistic}(0, 1), \\
    &Y \gets \I_{X - U_3 + E + 0.1 > 0},
    \end{split} \label{eq:triple}
\end{equation}
where probabilities $\rho_Z, \rho_W, \rho_X$ are given by:
\begin{align*}
  &\rho_Z = \frac{1}{1 + \exp(-U_1)}, \\
  &\rho_W = \frac{1}{1 + \exp(-Z - U_1 - U_2)}, \\
  &\rho_X = \frac{1}{1 + \exp(-W - U_2 - U_3)}.
\end{align*}
Each sample $\left(x^{(n)}, y^{(n)}, z^{(n)}, w^{(n)}\right)$ is an independent draw from the observational distribution $P(X, Y, Z, W)$ or an interventional distribution $P(X_z, Y_z, W_z)$. To obtain a sample from $P(x_z, y_z, w_z)$, we pick a constant $z \in \D_Z$ uniformly at random, perform intervention $\doo(Z = z)$ in the SCM described in \Cref{eq:triple} and observed subsequent outcomes. 

In this experiment, we set hyperparameters $\alpha_{U_1} = d_{U_1} = 32$ and $\alpha_{U_2} = d_{U_1} = 32$. \Cref{fig:app_c1d} shows samples drawn from the posterior distribution of $\left ( P(Y_{x = 0} = 1) \mid \5v \right)$. The analysis reveals that our proposed approach is able to achived an effective bound that contain the actual counterfactual probability $P(Y_{x=1}=1, Y_{x=0} = 0) = 0.1867$. The $100\%$ credible interval (\textit{ci}) is equal to $l= 0.1150, r= 0.3686$.

\subsection{C.2 The Effect of Sample Size and Prior Distributions}
We will evaluate our algorithms using skewed prior distributions. We found that increasing the size of observational samples was able to wash away the bias introduced by prior distributions. That is, despite the influence of prior distributions, our algorithms eventually converge to sharp bounds over unknown counterfactual probabilities as the number of observational sample grows (to infinite).

\paragraph{Experiment 9: Frontdoor}Consider first the ``Frontdoor'' graph in \Cref{fig1d} where interventional probabilities $P(y_x)$ is identifiable from the observational distribution $P(X, Y, W)$. The detailed parametrization of the underlying SCM is described in \Cref{eq:frontdoor}. We present our results using two different priors. The first is a flat (uniform) distribution over probabilities of $U_1$ and $U_2$ respectively, i.e., $\alpha_{U_1} = d_{U_1} = 8$ and $\alpha_{U_1} = d_{U_2} = 4$. The second is skewed to present a strong preference on the deterministic relationships between $X$ and $Y$; in this case, $\alpha_{1} = 300 \times d_{U_i}$, $i = 1,2$, for prior distributions associated with both $U_1$ and $U_2$. \Cref{fig:app_c3a,fig:app_c3b} shows the distribution of $P(Y_{x=0})$ induced by these two priors (in the absence of any observational data). We see that the skewed prior of \Cref{fig:app_c3b} assigns almost all weights to deterministic events $P(Y_{x=0} = 1)=1$ or $P(Y_{x=0} = 0)=1$. 

\Cref{fig:app_c2} shows posterior samples obtained by our Gibbs sampler when applied to observational data of various sizes, using both the flat prior (\Crefrange{fig:app_c4a}{fig:app_c4d}) and the skewed prior (\Crefrange{fig:app_c4e}{fig:app_c4h}). Both priors eventually collapse to the actual, unknown probability $P(Y_{x = 0} = 1) = 0.5085$. As expected, more observational data are needed for the skewed prior before the posterior distribution converges, since the skewed prior is concentrated further away from the value $0.5085$ than the uniform prior.

\paragraph{Experiment 10: IV}Consider the ``IV'' diagram in \Cref{fig1a} where $X, Y, Z$ are binary variables taking values in $\{0, 1\}$. Detailed parametrization of the SCM is provided as follows:
\begin{equation}
    \begin{split}
    &U_1 \sim \texttt{Normal}(0, 1),\\
    &U_2 \sim \texttt{Normal}(0, 1), \\
    &Z \sim \texttt{Binomial}(1, \rho_Z), \\
    &X \sim \texttt{Binomial}(1, \rho_X), \\
    &Y \sim \texttt{Binomial}(1, \rho_Y),
    \end{split}\label{eq:iv}
\end{equation}
where probabilities $\rho_Z, \rho_X, \rho_Y$ are given by
\begin{align*}
  &\rho_Z = \frac{1}{1 + \exp(-U_1)},\\
  &\rho_X = \frac{1}{1 + \exp(-Z - U_2)},\\
  &\rho_Y = \frac{1}{1 + \exp(X - U_2 + 0.5)}.
\end{align*} 
In this case, the counterfactual distribution $P(y_x)$ is not identifiable from the observational distribution $P(X, Y, Z)$ \citep{bareinboim:pea12}. Sharp bounds over $P(y_x)$ from $P(x, y, z)$ were derived in \citep{balke:pea94b} (labelled as \textit{opt}). We present our results using two different priors. The first is a flat (uniform) distribution over probabilities of $U_1$ and $U_2$ respectively, i.e., $\alpha_{U_1} = d_{U_1} = 2$ and $\alpha_{U_1} = d_{U_2} = 16$. The second is skewed to present a strong preference on the deterministic relationships between $X$ and $Y$; in this case, $\alpha_{1} = 300 \times d_{U_i}$, $i = 1,2$, for prior distributions associated with both $U_1$ and $U_2$. \Cref{fig:app_c3c,fig:app_c3d} shows distributions of $P(Y_{x=0})$ induced by these two prior distributions (in the absence of any observational data). We see that the skewed prior of \Cref{fig:app_c3d} assigns almost all weights to deterministic events $P(Y_{x=0} = 1)=1$ or $P(Y_{x=0} = 0)=1$. 

\Cref{fig:app_c5} shows posterior samples obtained by our Gibbs sampler when applied to observational data of various sizes, using both the flat prior (\Crefrange{fig:app_c5a}{fig:app_c5d}) and the skewed prior (\Crefrange{fig:app_c5e}{fig:app_c5h}). Our analysis reveals that $100\%$ credible intervals of both priors eventually converge to the sharp IV bound $l = 0.1468, r = 0.6617$ over the unknown interventional probability $P(Y_{x = 0} = 1) = 0.3954$. It is interesting to note that, in this experiment, while the choice of prior distribution does not influence the final bound, it still has an effect on the shape of posterior distributions given finite samples of the observational data. 
\clearpage
\section{D. Polynomial Optimization for Bounding Counterfactual Probabilities} \label{appendix:e}
In this section, we will demonstrate how one could solve the optimization problem in \Cref{eq:opt1} assuming access to a polynomial optimization oracle.

Recall that \Cref{thm:cfinite} implies that counterfactual distributions $P \left (\*Y_{\*x}, \dots, \*Z_{\*w} \right)$ in any causal diagram $\G$ could be generated by a discrete SCM in the canonical family $\2N(\G)$ and be written as follows:
\begin{align*}
  &P \left (\*y_{\*x}, \dots, \*z_{\*w} \right) = \sum_{\*u}\I_{\*Y_{\*x}(\*u) = \*y} \cdots \I_{\*Z_{\*w}(\*u) = \*z} \prod_{U \in \*U} \theta_u. 
\end{align*}
Among above quantities, for every exogenous $U \in \*U$, parameters $\theta_u$ are discrete probabilities $P(U = u)$ over a finite domain $\{1, \dots, d_U\}$ where the cardinality $d_U = \prod_{V \in \*C(U)} \left |\D_{\Pa_V} \mapsto \D_V \right|$. That is, parameters $\left \{ \theta_u \mid \forall u \in \D_U \right \}$ satisfy the following:
\begin{align*}
    &\theta_u \in [0, 1], &\sum_{u \in \D_U} \theta_u = 1.
\end{align*}
For every $V \in \*V$, we represent the output of function $f_V(\pa_V, u_V)$ given input $\pa_V, u_V$ using an indicator vector $\mu_V^{(\pa_V, u_V)} = \left ( \mu_v^{(\pa_V, u_V)} \mid \forall v \in \D_V \right)$ such that 
\begin{align*}
  &\mu_v^{(\pa_V, u_V)}\in \{0, 1\}, &\sum_{v \in \D_V} \mu_v^{(\pa_V, u_V)} = 1.
\end{align*} 
For subsets $\*X, \*Y \subseteq \*V$, fix constants $\*x, \*y, \*u$. The indicator function $\I_{\*Y_{\*x}(\*u) = \*y}$ could be written as a product 
\begin{align*}
    \I_{\*Y_{\*x}(\*u) = \*y} = \prod_{Y \in \*Y} \I_{Y_{\*x}(\*u) = y}.
\end{align*}
For every $Y \in \*Y$, $\I_{Y_{\*x}(\*u) = y}$ is recursively given by: 
\begin{align*}
  \I_{Y_{\*x}(\*u) = y} = 
    \begin{dcases}
      \I_{y = \*x_Y} &\mbox{if }Y \in \*X \\
      \sum_{\pa_Y} \mu_y^{\left (\pa_Y, u_Y \right)} \I_{\PA_{Y\*x}  = \pa_Y }  &\mbox{otherwise}
    \end{dcases}
\end{align*} 
The above equations allow us to write any counterfactual probability $P \left (\*y_{\*x}, \dots, \*z_{\*w} \right)$ as a polynomial function of parameters $\mu_v^{(\pa_V, u_V)}$ and $\theta_u$. This means that the polynomial optimization in \Cref{eq:opt1} could be reducible to an equivalent polynomial optimization program. For the remainder of this section, we will illustrate this reduction using various examples in different causal diagrams. 

\paragraph{Example 1: IV}Consider the ``IV'' diagram $\G$ in \Cref{fig1a}. We study the problem of bounding counterfactual probabilities $P(y'_{x'}, x, y) \equiv P(Y_{x'} = y', X = x, Y = y)$ from the observational distribution $P(X, Y, Z)$. Formally, let $\2M(\G)$ denote the set of all SCMs compatible with the diagram $\G$. One could obtain the tight bound over $P(y'_{x'}, x, y)$ from $P(X, Y, Z)$ by solving the optimization problem as follows:
\begin{equation}
    \begin{aligned}
      \underset{M \in \2M(\G)}{\min / \max} \quad &P_M \left (y'_{x'}, x, y \right)\\
      \textrm{s.t.} \quad & P_M(x, y, z) = P(x, y, z), \;\; \forall x, y, z.
    \end{aligned} \label{exp:iv}
\end{equation}
In the above optimization problem, it follows from \Cref{thm:cfinite} that the objective function could be written as
\begin{align*}
    &P_M(y'_{x'}, x, y) \\
    &= \sum_{u_1=1}^{d_1} \sum_{u_2=1}^{d_2} \mu_{y'}^{(x', u_2)} \mu_{y}^{(x, u_2)} \sum_{z} \mu_{x}^{(z, u_2)} \mu_{z}^{(u_1)}\theta_{u_1}\theta_{u_2}.
\end{align*}
Similarly, the observational constraints could be written as:
\begin{align*}
    &P_M(x, y, z) = \sum_{u_1=1}^{d_1} \sum_{u_2=1}^{d_2} \mu_z^{(u_1)} \mu_x^{(z, u_2)} \mu_y^{(x, u_2)} \theta_{u_1}\theta_{u_2}.
\end{align*}
The above equations imply that \Cref{exp:iv} could be reducible to an equivalent polynomial program as follows:
\begin{align*}
    \min / \max \;\; &\sum_{u_1=1}^{d_1} \sum_{u_2=1}^{d_2} \mu_{y'}^{(x', u_2)} \mu_{y}^{(x, u_2)} \sum_{z} \mu_{x}^{(z, u_2)} \mu_{z}^{(u_1)}\theta_{u_1}\theta_{u_2}\\
      \text{subject to} &\;\; \sum_{u_1=1}^{d_1} \sum_{u_2=1}^{d_2} \mu_z^{(u_1)} \mu_x^{(z, u_2)} \mu_y^{(x, u_2)} \theta_{u_1}\theta_{u_2} \\
      &=P(x, y, z), \;\; \forall x, y, z\\
      &\forall z, u_1, \;\;\mu_z^{(u_1)}\left(1 - \mu_z^{(u_1)} \right) = 0, \\
      &\forall z, u_1, \;\; \sum_z \mu_z^{(u_1)} = 1\\
      &\forall x, z, u_2, \;\;\mu_x^{(z, u_2)}\left(1 - \mu_x^{(z, u_2)} \right) = 0 \\
      &\forall x, z, u_2, \;\; \sum_x \mu_x^{(z, u_2)} = 1\\
      &\forall y, x, u_2, \;\; \mu_y^{(x, u_2)}\left(1 - \mu_y^{(x, u_2)} \right) = 0 \\
      &\forall y, x, u_2 \;\; \sum_y \mu_y^{(x, u_2)} = 1\\
      &\forall u_1, \;\; 0 \leq \theta_{u_1} \leq 1, \;\; \sum_{u_1}\theta_{u_1} = 1\\
      &\forall u_2, \;\; 0 \leq \theta_{u_2} \leq 1, \;\; \sum_{u_2}\theta_{u_2} = 1
\end{align*}
where cardinalities $d_{1}, d_2$ are equal to
\begin{align*}
    &d_1 = \left| \D_{Z} \right|, &&d_{2} = \left| \D_{Z} \mapsto \D_X \right|\times \left| \D_{X} \mapsto \D_Y \right|.
\end{align*}

\paragraph{Example 2}Consider the causal diagram in \Cref{fig1b}. We study the problem of bounding counterfactual probabilities $P(z, x_{z'}, y_{x'})$ from a combination of the observational distribution $P(X, Y, Z)$ and the interventional distribution $\left \{ P(X_z, Y_z) \mid \forall z \in \D_Z \right\}$. That is,
\begin{equation}
    \begin{aligned}
      \underset{M \in \2M(\G)}{\min / \max} \quad &P_M \left (z, x_{z'}, y_{x'} \right)\\
      \textrm{s.t.} \quad & P_M(x, y, z) = P(x, y, z), \;\; \forall x, y, z\\
      & P_M(x_z, y_z) = P(x_z, y_z), \;\; \forall x, y, z
    \end{aligned} \label{exp:1b}
\end{equation}
Among quantities in the above equation, it follows from \Cref{thm:cfinite} that the objective function could be written as
\begin{align*}
    &P_M(z, x_{z'}, y_{x'}) \\
    &= \sum_{u_1, u_2 = 1}^d \mu_z^{(u_1)} \mu_{x}^{(z', u_2)} \mu_y^{(x', u_1, u_2)} \theta_{u_1}\theta_{u_2}.
\end{align*}
Similarly, the observational constraints could be written as:
\begin{align*}
    &P_M(x, y, z) = \sum_{u_1, u_2 = 1}^d \mu_z^{(u_1)} \mu_x^{(z, u_2)} \mu_y^{(x, u_1, u_2)} \theta_{u_1}\theta_{u_2},
\end{align*}
and the interventional constraints imply:
\begin{align*}
    &P_M(x_z, y_z) = \sum_{u_1, u_2 = 1}^d \mu_x^{(z, u_2)} \mu_y^{(x, u_1, u_2)} \theta_{u_1}\theta_{u_2}.
\end{align*}
The above equations imply that one could obtain an optimal bound over $P(z, x_{z'}, y_{x'})$ given by \Cref{exp:1b} by solving an equivalent polynomial program as follows:
\begin{align*}    \min / \max \;\; &\sum_{u_1, u_2 = 1}^d \mu_z^{(u_1)} \mu_{x}^{(z', u_2)} \mu_y^{(x', u_1, u_2)} \theta_{u_1}\theta_{u_2}\\
    \text{s.t.} \;\; &\sum_{u_1, u_2 = 1}^d \mu_z^{(u_1)} \mu_x^{(z, u_2)} \mu_y^{(x, u_1, u_2)} \theta_{u_1}\theta_{u_2}\\
    &=P(x, y, z), \;\; \forall x, y, z\\
    &\sum_{u_1, u_2 = 1}^d \mu_x^{(z, u_2)} \mu_y^{(x, u_1, u_2)} \theta_{u_1}\theta_{u_2} \\
    &= P(x_z, y_z), \;\; \forall x, y, z\\
    &\forall z, u_1, \;\;\mu_z^{(u_1)}\left(1 - \mu_z^{(u_1)} \right) = 0\\
    &\forall z, u_1, \;\; \sum_z \mu_z^{(u_1)} = 1\\
    &\forall x, z, u_1, u_2, \;\;\mu_x^{(z, u_1, u_2)}\left(1 - \mu_x^{(z, u_1, u_2)} \right) = 0 \\
    &\forall x, z, u_1, u_2, \;\; \sum_x \mu_x^{(z, u_1, u_2)} = 1\\
    &\forall y, x, u_2, \;\; \mu_y^{(x, u_2)}\left(1 - \mu_y^{(x, u_2)} \right) = 0 \\
    &\forall y, x, u_2, \;\; \sum_y \mu_y^{(x, u_2)} = 1\\
    &\forall u_1, \;\; 0 \leq \theta_{u_1} \leq 1, \;\; \sum_{u_1}\theta_{u_1} = 1\\
    &\forall u_2, \;\; 0 \leq \theta_{u_2} \leq 1, \;\; \sum_{u_2}\theta_{u_2} = 1
\end{align*}
where the cardinality $d$ equates to 
\begin{align*}
    d = \left| \D_{Z} \right|\times \left| \D_{Z} \mapsto \D_X \right|\times \left| \D_{X} \mapsto \D_Y \right|.
\end{align*}

\paragraph{Example 3: Frontdoor}Consider the ``Frontdoor'' diagram in \Cref{fig1c}. We are interested in evaluating interventional probabilities $P(y_x)$ from the observational distribution $P(X, Y, W)$. That is,
\begin{equation}
    \begin{aligned}
      \underset{M \in \2M(\G)}{\min / \max} \quad &P_M \left (y_x\right)\\
      \textrm{s.t.} \quad & P_M(x, y, w) = P(x, y, w), \;\; \forall x, y, w
    \end{aligned} \label{exp:frontdoor}
\end{equation}
It follows from \Cref{thm:cfinite} that the objective function in the above optimization problem could be further written as
\begin{align*}
    &P_M(y_x) = \sum_{u_1 = 1}^{d_{1}}\sum_{u_1 = 1}^{d_{2}} \sum_w \mu_{y}^{(w, u_1)} \mu_{w}^{(x, u_2)} \theta_{u_1}\theta_{u_2}.
\end{align*}
Similarly, the observational constraints could be written as:
\begin{align*}
    &P_M(x, y, w) = \sum_{u_1 = 1}^{d_1}\sum_{u_1 = 1}^{d_{2}} \mu_x^{(u)} \mu_{y}^{(w, u_1)} \mu_{w}^{(x, u_2)} \theta_{u_1}\theta_{u_2}.
\end{align*}
The above equations imply that one could obtain the optimal solution in \Cref{exp:frontdoor} by solving an equivalent polynomial optimization problem defined as follows:
\begin{align*}
    \min / \max \;\; &\sum_{u_1 = 1}^{d_{1}}\sum_{u_1 = 1}^{d_{2}} \sum_w \mu_{y}^{(w, u_1)} \mu_{w}^{(x, u_2)} \theta_{u_1}\theta_{u_2}\\
    \text{subject to} \;\; &\sum_{u_1 = 1}^{d_{1}}\sum_{u_1 = 1}^{d_{2}}  \mu_x^{(u)} \mu_{y}^{(w, u_1)} \mu_{w}^{(x, u_2)} \theta_{u_1}\theta_{u_2} \\
    &=P(x, y, w), \forall x, y, w \\
    &\forall x, u_1, \;\;\mu_x^{(u)}\left(1 - \mu_x^{(u)} \right) = 0, \\
    &\forall x, u_1, \;\; \sum_x \mu_x^{(u)} = 1,\\
    &\forall y, w, u_1, \;\; \mu_y^{(w, u_1)}\left(1 - \mu_y^{(w, u_1)} \right) = 0, \\
    &\forall y, w, u_1, \;\; \sum_y \mu_y^{(w, u_1)} = 1,\\
    &\forall w, x, u_2, \;\; \mu_w^{(x, u_2)}\left(1 - \mu_w^{(x, u_w)} \right) = 0, \\
    &\forall w, x, u_2, \;\; \sum_w \mu_w^{(x, u_w)} = 1,\\
    &\forall u_1, \;\; 0 \leq \theta_{u_1} \leq 1, \;\; \sum_{u_1}\theta_{u_1} = 1,\\
    &\forall u_2, \;\; 0 \leq \theta_{u_2} \leq 1, \;\; \sum_{u_2}\theta_{u_2} = 1,
\end{align*}
where cardinalities $d_{1}, d_2$ equate to
\begin{align*}
    &d_1 = \left| \D_X \right| \times \left| \D_{W} \mapsto \D_Y \right|, &&d_{2} = \left| \D_{X} \mapsto \D_W \right|.
\end{align*}

\paragraph{Example 4: Bow}Consider the ``Bow'' diagram in \Cref{fig1d}. We study the problem of bounding counterfactual probabilities $P(y_x, y'_{x'}) \equiv P \left(Y_x = y, Y_{x = x'} = y'\right)$ from a combination of the observational distribution $P(X, Y, Z)$ and the interventional distribution $\left \{ P(Y_x) \mid \forall x \in \D_X \right\}$, i.e.,
\begin{equation}
    \begin{aligned}
      \underset{M \in \2M(\G)}{\min / \max} \quad &P_M \left (y_x, y'_{x'} \right)\\
      \textrm{s.t.} \quad & P_M(x, y) = P(x, y), \;\; \forall x, y\\
      & P_M(y_x) = P(y_x), \;\; \forall x, y
    \end{aligned} \label{exp:bow}
\end{equation}
Among quantities in the above equation, it follows from \Cref{thm:cfinite} that the objective function could be written as
\begin{align*}
    &P_M(y_x, y'_{x'}) = \sum_{u = 1}^d \mu_{y}^{(x, u)} \mu_{y'}^{(x', u)} \theta_u.
\end{align*}
Similarly, the observational constraints could be written as:
\begin{align*}
    &P_M(x, y) = \sum_{u = 1}^d \mu_x^{(u)} \mu_{y}^{(x, u)}  \theta_{u},
\end{align*}
and the interventional constraints are given by:
\begin{align*}
    &P_M(y_x) = \sum_{u = 1}^d \mu_{y}^{(x, u)}  \theta_{u}.
\end{align*}
The optimization problem defined in \Cref{exp:bow} is thus reducible to an equivalent polynomial program as follows:
\begin{align*}
    \min / \max \;\; &\sum_{u = 1}^d \mu_{y}^{(x, u)} \mu_{y'}^{(x', u)} \theta_u\\
    \text{subject to} \;\; &\sum_{u = 1}^d \mu_x^{(u)} \mu_{y}^{(x, u)}  \theta_{u} = P(x, y), \;\; \forall x, y\\
    &\sum_{u = 1}^d \mu_{y}^{(x, u)}  \theta_{u} = P(y_x), \;\; \forall x, y\\
    &\forall x, u, \;\;\mu_x^{(u)}\left(1 - \mu_x^{(u)} \right) = 0 \\
    &\forall x, u, \;\; \sum_x \mu_x^{(u)} = 1\\
    &\forall y, x, u, \;\; \mu_y^{(x, u)}\left(1 - \mu_y^{(x, u)} \right) = 0 \\
    &\forall y, x, u, \;\; \sum_y \mu_y^{(x, u)} = 1\\
    &\forall u, \;\; 0 \leq \theta_{u} \leq 1, \;\; \sum_{u}\theta_u = 1
\end{align*}
where the cardinality $d$ is equal to $\left| \D_{Z} \mapsto \D_X \right|$.
\clearpage
\section{E. A Na\"ive Generalization of \citep{balke:pea94b}}
In this section, we will describe a na\"ive generalization of the discretization procedure introduced in \citep{balke:pea94b} to the causal diagram of \Cref{fige1}. In particular, given any SCM $M$ compatible with \Cref{fige1}, we will construct a discrete SCM $N$ compatible with a different causal diagram described in \Cref{fige2} such that $M$ and $N$ coincide in all counterfactual distributions $\*P^*$. 

We first introduce some useful notations. Let $f_Z, f_X, f_Y$ denote functions associated with $Z, X, Y$ in SCM $M$. Let constants $h_Z^{(1)} = 0$ and $h_Z^{(2)} = 1$. Note that given any $U_1 = u_1$, $f_Z(u_1)$ must equate to a binary value in $\{0, 1\}$. Therefore, we could define a partition $\1U_Z^{(i)}$, $i = 1, 2$, over domains of $U_1$ such that $u_1 \in \1U_Z^{(i)}$ if and only if 
\begin{align}
    f_Z(u_1) = h_Z^{(i)}. 
\end{align}
Given any $u_2$, $f_X(\cdot, u_2)$ defines a function mapping from domains of $Z$ to $X$. Let functions in the hypothesis class $\D_Z \mapsto \D_X$ be ordered by
\begin{equation}
    \begin{aligned}
    &h_X^{(1)}(z) = 0, &&h_X^{(2)}(z) = z, \\
    &h_X^{(3)}(z) = \neg z, &&h_X^{(4)}(z) = 1. 
    \end{aligned}\label{eq:app_d1}
\end{equation}
Similarly, we could define a partition $\1U_X^{(i)}, i = 1, 2, 3, 4$ over the domain $\D_{U_2}$ such that $u_2 \in \1U_X^{(i)}$ if and only if the induced function $f_X(\cdot, u_2) = h_X^{(i)}$. Finally, let functions in $\D_{X} \mapsto \D_Y$ mapping from domains of $X$ to $Y$ be ordered by
\begin{equation}
    \begin{aligned}
    &h_Y^{(1)}(x) = 0, &&h_Y^{(2)}(x) = x, \\
    &h_Y^{(3)}(x) = \neg x, &&h_Y^{(4)}(x) = 1.
    \end{aligned}\label{eq:app_d2}
\end{equation}
For any $u_1, u_2$, the induced function $f_Y(\cdot, u_1, u_2)$ must coincide with only of the above elements in the hypothesis class $\D_X \mapsto \D_Y$. Let $\1U_Y^{(i)}, i = 1, 2, 3, 4$ be a subset of the product domain $\D_{U_1} \times \D_{U_2}$ such that $(u_1, u_2) \in \1U_Y^{(i)}$ if any only if $f_Y(\cdot, u_1, u_2) = h_Y^{(i)}$. It is verifiable that $\1U_Y^{(i)}, i = 1, 2, 3, 4$ must form a partition over $\D_{U_1} \times \D_{U_2}$.

We now construct a discrete SCM $N$ compatible with the causal diagram of \Cref{fige2}. Let the exogenous variable $U$ in $N$ be a tuple $(U_Z, U_X, U_Y)$, where $U_Z \in \{1, 2\}$, $U_X \in \{1, 2, 3, 4\}$ and $U_Y \in \{1, 2, 3, 4\}$. For any $u_Z$, values of $Z$ are decided by a function as follows:
\begin{align}
    z \gets f_Z(u_z) = h_Z^{(u_Z)},
\end{align}
where $h_Z^{(1)} = 0$ and $h_Z^{(2)} = 1$. Given any input $z, u_X$, values of $X$ are given by 
\begin{align}
    x \gets f_X(z, u_X) = h_X^{(u_X)}(z),
\end{align}
where $h_X^{(i)}(z)$, $i = 1, 2, 3, 4$, are defined in \Cref{eq:app_d1}. Similarly, given any $x, u_Y$, values of $Y$ are given by 
\begin{align}
    y \gets f_Y(x, u_Y) = h_Y^{(u_Y)}(x),
\end{align}
where $h_Y^{(i)}(x)$, $i = 1, 2, 3, 4$, are functions defined in \Cref{eq:app_d2}. Finally, we define the exogenous distribution $P(u_Z, u_X, u_Y)$ in the discrete SCM $N$ as the joint probability over partitions $\1U_Z^{(i)}, \1U_X^{(j)}, \1U_Y^{(k)}$, $i = 1, 2$, $j = 1, 2, 3, 4$, $k = 1, 2, 3, 4$. That is, 
\begin{equation}
    \begin{aligned}
    &P_N\left (U_Z = i, U_X = j, U_Y = k \right) \\
    &= P_M\left((U_1, U_2) \in \1U_Z^{(i)} \wedge \1U_X^{(j)} \wedge \1U_Y^{(k)}\right).
    \end{aligned}
\end{equation}
It follows from the decomposition in \Cref{lem:ctf_cp1} that $N$ and $M$ must coincide in all counterfactual distributions over binary $X, Y, Z$. The total cardinality of the exogenous domains in $N$ is $|\D_{U_Z}| \times |\D_{U_X}| \times |\D_{U_Y}| = 2 \times 4 \times 4 = 32$. 

However, the construction for the reverse direction does not hold true. That is, given an arbitrary discrete $N$ compatible with the causal diagram in \Cref{fige2}, one may not be able to construct an SCM $M$ compatible with the causal diagram in \Cref{fige1} such that $M$ and $N$ coincide in all counterfactual distributions. To witness, consider a discrete SCM $N$ where $P(U_Z = U_X) = 1$, i.e., variables $U_Z$ and $U_X$ are always the same, taking values in $\{1, 2\}$. Since in SCM $N$, values of $Z(u_Z)$ and $X_{z=1}(u_X)$ are given by 
\begin{align*}
&Z(u_Z) = h_Z^{(u_Z)} = 0\times \I_{u_Z = 1} + 1\times \I_{u_Z = 2}, \\
&X_{z = 1}(u_X) = h_X^{(u_X)}(1) = 0\times \I_{u_X = 1} + 1\times \I_{u_X = 2}.
\end{align*} 
This means that values of counterfactual variables $Z$ and $X_{z = 0}$ must always coincide, i.e., $P(Z = X_{x = 1}) = 1$. However, for any SCM $M$ compatible with \Cref{fige1}, counterfactual variables $Z$ and $X_z$ must be independent due to the independence restriction \citep[Ch.~7.3.2]{pearl:2k}, i.e., $Z \ci X_z$, which is a contradiction.

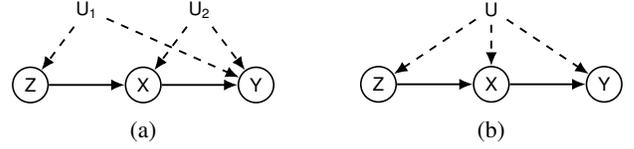
\begin{figure}[t]
  \centering
  \null
  \begin{subfigure}{0.45\linewidth}\centering
    \begin{tikzpicture}
        \node[vertex] (Z) at (0, 0) {Z};
        \node[vertex] (X) at (1.5, 0) {X};
        \node[vertex] (Y) at (3, 0) {Y};
        \node[uvertex] (U1) at (0.75, 1) {U\textsubscript{1}};
        \node[uvertex] (U2) at (2.25, 1) {U\textsubscript{2}};
        
        \draw[dir] (X) -- (Y);
        \draw[dir] (Z) -- (X);
        \draw[dir, dashed] (U1) -- (Z);
        \draw[dir, dashed] (U1) -- (Y);
        \draw[dir, dashed] (U2) -- (X);
        \draw[dir, dashed] (U2) -- (Y);
    \end{tikzpicture}
  \caption{}
  \label{fige1}
  \end{subfigure}\hfill
  \begin{subfigure}{0.45\linewidth}\centering
      \begin{tikzpicture}
          \node[vertex] (Z) at (0, 0) {Z};
          \node[vertex] (X) at (1.5, 0) {X};
          \node[vertex] (Y) at (3, 0) {Y};
          \node[uvertex] (U) at (1.5, 1) {U};
          
          \draw[dir] (X) -- (Y);
          \draw[dir] (Z) -- (X);
          \draw[dir, dashed] (U) -- (Z);
          \draw[dir, dashed] (U) -- (Y);
          \draw[dir, dashed] (U) -- (X);
      \end{tikzpicture}
    \caption{}
    \label{fige2}
  \end{subfigure}\null
  \caption{Causal diagrams (\subref{fige1}-\subref{fige2}) containing a treatment $X$, an outcome $Y$, an ancestor $Z$, and unobserved $U$s.}
  \label{fige}
\end{figure}

\end{document}